\theoremstyle{definition}
\newtheorem{definition}{Definition}[section]
\newtheorem{lemma}{Lemma}[section]
\newcommand{\domain}{\mathcal{D}}
\newcommand{\data}{\mathcal{X}}
\newcommand{\hdata}{\mathcal{Z}}
\newcommand{\ydata}{\mathcal{Y}}
\newcommand{\FF}{\mathcal{F}}
\newcommand{\err}{\varepsilon}
\newcommand{\RR}{\mathbb{R}}
\newcommand{\Exp}{\mathbb{E}}
\newcommand{\HH}{\mathcal{H}}
\newcommand{\xx}{\mathbf{x}}
\newcommand{\ww}{\mathbf{w}}
\newcommand{\defeq}{\vcentcolon=}
\newcommand{\eqdef}{=\vcentcolon}
\newcommand{\eps}{\varepsilon}
\DeclarePairedDelimiterX{\inp}[2]{\langle}{\rangle}{#1, #2}
\newcommand{\jsd}{D_{\text{JS}}}
\newcommand{\kl}{D_{\text{KL}}}
\newcommand{\dist}{\mathcal{D}}
\newcommand{\xxspace}{\mathcal{X}}
\newcommand{\yyspace}{\mathcal{Y}}
\newcommand{\zzspace}{\mathcal{Z}}
\newcommand{\Ypred}{\widehat{Y}}
\newcommand{\errgap}{\Delta_{\err}}
\newcommand{\ber}[3]{\mathrm{BER}_{#1}(#2~\|~#3)}
\newcommand{\cegap}{\Delta_{\mathrm{CE}}}
\newcommand{\glsa}{GLS}
\newcommand{\dann}{\text{DANN}}
\newcommand{\cdan}{\text{CDAN}}
\newcommand{\jan}{\text{JAN}}
\newcommand{\iwdan}{\text{IWDAN}}
\newcommand{\iwcdan}{\text{IWCDAN}}
\newcommand{\iwjan}{\text{IWJAN}}
\newcommand{\iwdano}{\text{IWDAN-O}}
\newcommand{\iwcdano}{\text{IWCDAN-O}}
\newcommand{\iwjano}{\text{IWJAN-O}}
\newcommand{\dtv}{d_{\text{TV}}}
\newcommand*{\MinNumber}{0.0}%
\newcommand*{\MidNumber}{70} %
\newcommand*{\MaxNumber}{100}%
\newcommand{\ApplyGradient}[1]{%
        \ifdim #1 pt > \MidNumber pt
            \pgfmathsetmacro{\PercentColor}{max(min(100.0*(#1 - \MidNumber)/(\MaxNumber-\MidNumber),100.0),0.00)} %
            \hspace{-0.33em}\colorbox{green!\PercentColor!yellow}{#1}
        \else
            \pgfmathsetmacro{\PercentColor}{max(min(100.0*(\MidNumber - #1)/(\MidNumber-\MinNumber),100.0),0.00)} %
            \hspace{-0.33em}\colorbox{red!\PercentColor!yellow}{#1}
        \fi
}
\newcolumntype{R}{>{\collectcell\ApplyGradient}c<{\endcollectcell}}
\title{Domain Adaptation with Conditional Distribution Matching and Generalized Label Shift}
\author{%
  Remi Tachet des Combes\thanks{The first two authors contributed equally to this work. Work done while HZ was at Carnegie Mellon University.}\\
  Microsoft Research Montreal\\
  Montreal, QC, Canada \\
  \texttt{retachet@microsoft.com} \\
  \And
  Han Zhao\footnotemark[1]\\
  D. E. Shaw \& Co.\\
  New York, NY, USA \\
  \texttt{han.zhao@cs.cmu.edu} \\
  \AND
  Yu-Xiang Wang\\
  UC Santa Barbara\\
  Santa Barbara, CA, USA \\
  \texttt{yuxiangw@cs.ucsb.edu} \\
  \And
  Geoff Gordon\\
  Microsoft Research Montreal\\
  Montreal, QC, Canada \\
  \texttt{ggordon@microsoft.com} \\
}
\begin{document}
\maketitle
\begin{abstract}
    Adversarial learning has demonstrated good performance in the unsupervised domain adaptation setting, by learning domain-invariant representations. However, recent work has shown limitations of this approach when label distributions differ between the source and target domains. In this paper, we propose a new assumption, \textit{generalized label shift} ($\glsa$), to improve robustness against mismatched label distributions. $\glsa$ states that, conditioned on the label, there exists a representation of the input that is invariant between the source and target domains. Under $\glsa$, we provide theoretical guarantees on the transfer performance of any classifier. We also devise necessary and sufficient conditions for $\glsa$ to hold, by using an estimation of the relative class weights between domains and an appropriate reweighting of samples. Our weight estimation method could be straightforwardly and generically applied in existing domain adaptation (DA) algorithms that learn domain-invariant representations, with small computational overhead. In particular, we modify three DA algorithms, JAN, DANN and CDAN, and evaluate their performance on standard and artificial DA tasks. Our algorithms outperform the base versions, with vast improvements for large label distribution mismatches. Our code is available at \url{https://tinyurl.com/y585xt6j}.
\end{abstract}

\section{Introduction}
\label{sec:intro}
In spite of impressive successes, most deep learning models~\citep{goodfellow2017learning} rely on huge amounts of labelled data and their features have proven brittle to distribution shifts~\citep{yosinski2014transferable,linzen2019right}. Building more robust models, that learn from fewer samples and/or generalize better out-of-distribution is the focus of many recent works~\citep{journals/corr/abs-1906-00910,arjovsky2019invariant,forget}. The research direction of interest to this paper is that of domain adaptation, which aims at learning features that transfer well between domains. We focus in particular on unsupervised domain adaptation (UDA), where the algorithm has access to labelled samples from a source domain and unlabelled data from a target domain. Its objective is to train a model that generalizes well to the target domain. Building on advances in adversarial learning~\citep{goodfellow2014generative}, adversarial domain adaptation (ADA) leverages the use of a discriminator to learn an intermediate representation that is invariant between the source and target domains. Simultaneously, the representation is paired with a classifier, trained to perform well on the source domain~\citep{ganin2016domain,tzeng2017adversarial,zhao2018adversarial,LiuLWJ19}. ADA is rather successful on a variety of tasks, however, recent work has proven an upper bound on the performance of existing algorithms when source and target domains have mismatched label distributions~\citep{conf/icml/0002CZG19}. Label shift is a property of two domains for which the marginal label distributions differ, but the conditional distributions of input given label stay the same across domains~\citep{storkey,journals/corr/ZhangYCW15}. 

In this paper, we study domain adaptation under mismatched label distributions and design methods that are robust in that setting. Our contributions are the following. First, we extend the upper bound by~\citet{conf/icml/0002CZG19} to $k$-class classification and to conditional domain adversarial networks, a recently introduced domain adaptation algorithm~\citep{cdan}. Second, we introduce \textit{generalized label shift} ($\glsa$), a broader version of the standard label shift where conditional invariance between source and target domains is placed in representation rather than input space. Third, we derive performance guarantees for algorithms that seek to enforce $\glsa$ via learnt feature transformations, in the form of upper bounds on the error gap and the joint error of the classifier on the source and target domains. Those guarantees suggest principled modifications to ADA to improve its robustness to mismatched label distributions. The modifications rely on estimating the class ratios between source and target domains and use those as importance weights in the adversarial and classification objectives. The importance weights estimation is performed using a method of moment by solving a quadratic program, inspired from~\citet{lipton2018detecting}. Following these theoretical insights, we devise three new algorithms based on learning importance-weighted representations, $\dann$s~\citep{ganin2016domain}, $\jan$s~\citep{long2017deep} and $\cdan$s~\citep{cdan}. We apply our variants to artificial UDA tasks with large divergences between label distributions, and demonstrate significant performance gains compared to the algorithms' base versions. Finally, we evaluate them on standard domain adaptation tasks and also show improved performance.

\section{Preliminaries}
\label{sec:preliminary}
\textbf{Notation}~~We focus on the general $k$-class classification problem. $\data$ and $\ydata$ denote the input and output space, respectively. $\hdata$ stands for the representation space induced from $\data$ by a feature transformation $g:\data\mapsto\hdata$. Accordingly, we use $X, Y, Z$ to denote random variables which take values in $\data, \ydata,\hdata$. \emph{Domain} corresponds to a joint distribution on the input space $\data$ and output space $\ydata$, and we use $\dist_S$ (resp. $\dist_T$) to denote the source (resp. target) domain. Noticeably, this corresponds to a stochastic setting, which is stronger than the deterministic one previously studied~\citep{ben2007analysis,ben2010theory,conf/icml/0002CZG19}. A \emph{hypothesis} is a function $h:\data\to [k]$.  The \emph{error} of a hypothesis $h$ under distribution $\dist_S$ is defined as: $\eps_S(h)\defeq \Pr_{\dist_S}(h(X)\neq Y)$, i.e., the probability that $h$ disagrees with $Y$ under $\dist_S$.  

\textbf{Domain Adaptation via Invariant Representations}~~
For source ($\dist_S$) and target ($\dist_T$) domains, we use $\dist_S^X$, $\dist_T^X$, $\dist^Y_S$ and $\dist^Y_T$ to denote the marginal data and label distributions. In UDA, the algorithm has access to $n$ labeled points \mbox{$\{(\xx_i, y_i)\}_{i=1}^n\in(\data\times\ydata)^n$} and $m$ unlabeled points $\{\xx_j\}_{j=1}^m\in\data^m$ sampled i.i.d.\ from the source and target domains. Inspired by~\citet{ben2010theory}, a common approach is to learn representations invariant to the domain shift. With $g:\data\mapsto\hdata$ a feature transformation and $h:\hdata\mapsto\ydata$ a hypothesis on the feature space, a domain invariant representation~\citep{ganin2016domain,tzeng2017adversarial,zhao2018multiple} is a function $g$ that induces similar distributions on $\dist_S$ and $\dist_T$. $g$ is also required to preserve rich information about the target task so that $\err_S(h\circ g)$ is small. The above process results in the following Markov chain (assumed to hold throughout the paper): 
\begin{equation}\label{eq:markov_chain}
    X \overset{g}{\longrightarrow} Z \overset{h}{\longrightarrow} \Ypred,
\end{equation}
where $\Ypred = h(g(X))$. We let $\domain_S^Z$, $\domain_T^Z$, $\domain_S^{\Ypred}$ and $\domain_T^{\Ypred}$ denote the pushforwards (induced distributions) of $\domain_S^X$ and $\domain_T^X$ by $g$ and $h\circ g$. Invariance in feature space is defined as minimizing a distance or divergence between $\domain_S^Z$ and $\domain_T^Z$.

\textbf{Adversarial Domain Adaptation} \quad
Invariance is often attained by training a discriminator $d:\hdata\mapsto [0,1]$ to predict if $z$ is from the source or target. $g$ is trained both to maximize the discriminator loss and minimize the classification loss of $h\circ g$ on the source domain ($h$ is also trained with the latter objective). This leads to domain-adversarial neural networks~\citep[DANN]{ganin2016domain}, where $g$, $h$ and $d$ are parameterized with neural networks: $g_\theta$, $h_\phi$ and $d_\psi$ (see Algo.~\ref{alg:iwdan} and App.~\ref{sub:losses}). 
Building on $\dann$, conditional domain adversarial networks~\citep[CDAN]{cdan} use the same adversarial paradigm. However, the discriminator now takes as input the outer product, for a given $x$, between the predictions of the network $h(g(x))$ and its representation $g(x)$. In other words, $d$ acts on the outer product: $h \otimes g(x) \defeq (h_1(g(x))\cdot g(x),\dots, h_k(g(x))\cdot g(x))$
rather than on $g(x)$. $h_i$ denotes the $i$-th element of vector $h$. We now highlight a limitation of DANNs and CDANs.

\textbf{An Information-Theoretic Lower Bound}\quad
We let $\jsd$ denote the Jensen-Shanon divergence between two distributions (App.~\ref{sub:def}), and $\widetilde{Z}$ correspond to $Z$ (for $\dann$) or to $\widehat{Y} \otimes Z$ (for $\cdan$). The following theorem lower bounds the joint error of the classifier on the source and target domains:
\begin{restatable}{theorem}{ll}\label{thm:lower_bound}
\vspace*{-1em}
Assuming that $\jsd(\domain_S^Y~\|~\domain_T^Y)\geq \jsd(\domain_S^{\widetilde{Z}}~\|~ \domain_T^{\widetilde{Z}})$, then: 
\begin{align}\label{eq:lower_bound}
    \err_S(h\circ g) + \err_T(h\circ g) \geq
    \frac{1}{2}\left(\sqrt{\jsd(\domain_S^Y~\|~\domain_T^Y)} - \sqrt{\jsd(\domain_S^{\widetilde{Z}}~\|~\domain_T^{\widetilde{Z}})}\right)^2. \nonumber
\end{align}
\end{restatable}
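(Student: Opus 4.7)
The plan is to exploit the fact that $\sqrt{\jsd}$ is a metric (the Jensen--Shannon distance), so the triangle inequality applies. I would insert the two intermediate distributions $\domain_S^{\widehat{Y}}$ and $\domain_T^{\widehat{Y}}$ between $\domain_S^Y$ and $\domain_T^Y$ and write
\begin{equation*}
\sqrt{\jsd(\domain_S^Y\|\domain_T^Y)} \;\leq\; \sqrt{\jsd(\domain_S^Y\|\domain_S^{\widehat{Y}})} \;+\; \sqrt{\jsd(\domain_S^{\widehat{Y}}\|\domain_T^{\widehat{Y}})} \;+\; \sqrt{\jsd(\domain_T^{\widehat{Y}}\|\domain_T^Y)}.
\end{equation*}
The three terms on the right are then controlled separately: the middle one by a data-processing argument, the outer two by the classification errors.

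For the middle term, note that in both cases of interest $\widehat{Y}$ is a deterministic post-processing of $\widetilde{Z}$: for DANN, $\widetilde{Z}=Z$ and $\widehat{Y}=h(Z)$; for CDAN, $\widetilde{Z}=\widehat{Y}\otimes Z$ literally contains $\widehat{Y}$. The data-processing inequality for $\jsd$ then gives $\jsd(\domain_S^{\widehat{Y}}\|\domain_T^{\widehat{Y}})\leq \jsd(\domain_S^{\widetilde{Z}}\|\domain_T^{\widetilde{Z}})$. For the outer terms, I would use a coupling/convexity argument: on the event $E=\{Y=\widehat{Y}\}$, the conditional distributions of $Y$ and $\widehat{Y}$ coincide, so joint convexity of the Jensen--Shannon divergence yields
\begin{equation*}
\jsd(\domain_S^Y\|\domain_S^{\widehat{Y}}) \;\leq\; \Pr_{\dist_S}(E)\cdot 0 \;+\; \Pr_{\dist_S}(E^c)\cdot \jsd(\domain_S^{Y\mid E^c}\|\domain_S^{\widehat{Y}\mid E^c}) \;\leq\; \err_S(h\circ g),
\end{equation*}
where the last step uses the fact that $\jsd\leq 1$ (with log base $2$) and $\Pr_{\dist_S}(E^c)=\err_S(h\circ g)$. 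An identical bound holds for the target domain.

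Combining the three bounds and using the hypothesis $\jsd(\domain_S^Y\|\domain_T^Y)\geq \jsd(\domain_S^{\widetilde{Z}}\|\domain_T^{\widetilde{Z}})$ to ensure the difference is nonnegative, I obtain
\begin{equation*}
\sqrt{\err_S(h\circ g)} + \sqrt{\err_T(h\circ g)} \;\geq\; \sqrt{\jsd(\domain_S^Y\|\domain_T^Y)} - \sqrt{\jsd(\domain_S^{\widetilde{Z}}\|\domain_T^{\widetilde{Z}})}.
\end{equation*}
Squaring and applying the elementary inequality $(\sqrt{a}+\sqrt{b})^2 = a+b+2\sqrt{ab}\leq 2(a+b)$ produces the factor $1/2$ and yields the claimed bound.

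The main obstacle, and the step that requires the most care, is the outer bound $\jsd(\domain_S^Y\|\domain_S^{\widehat{Y}})\leq \err_S(h\circ g)$. One has to be careful that convexity of $\jsd$ is being applied to the \emph{marginals} after conditioning on the joint event $\{Y=\widehat{Y}\}$, and that the normalization is the one giving $\jsd\leq 1$; otherwise the clean $\sqrt{\err}$ form disappears and the final constant $1/2$ is lost. Everything else (triangle inequality, DPI, algebraic manipulation) is routine once this lemma is in place.
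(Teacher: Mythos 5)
Your overall strategy is the same as the paper's: triangle inequality for the Jensen--Shannon distance inserting $\domain_S^{\widehat{Y}}$ and $\domain_T^{\widehat{Y}}$, a data-processing bound $\jsd(\domain_S^{\widehat{Y}}\|\domain_T^{\widehat{Y}})\leq\jsd(\domain_S^{\widetilde{Z}}\|\domain_T^{\widetilde{Z}})$ justified by $\widehat{Y}$ being a deterministic post-processing of $\widetilde{Z}$, and the final step squaring with $(\sqrt{a}+\sqrt{b})^2\leq 2(a+b)$. The one place where you genuinely diverge is in proving the outer bound $\jsd(\domain^Y\|\domain^{\widehat{Y}})\leq \err(h\circ g)$. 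The paper proves it by first invoking Lin's inequality $2\,\jsd(P\|Q)\leq\|P-Q\|_1$ and then expanding the marginal of $\widehat{Y}$ via the confusion matrix to show $\|\domain^Y-\domain^{\widehat{Y}}\|_1\leq 2\,\err(h\circ g)$. You instead decompose both marginals as mixtures conditioned on the diagonal event $E=\{Y=\widehat{Y}\}$, observe that the conditional marginals coincide on $E$, and invoke joint convexity of $\jsd$ together with the uniform bound $\jsd\leq 1$ (in bits). Both arguments are correct and deliver exactly the same lemma; yours is arguably cleaner conceptually (one convexity step plus a boundedness fact rather than a two-stage $L_1$ estimate), while the paper's has the minor advantage of not needing the $\jsd\leq 1$ normalization, since $2\,\jsd\leq\|{\cdot}\|_1$ already bakes in the right constant regardless of log base. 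The two are close in length, and neither generalizes further than the other for the purposes of this theorem, so the choice is a matter of taste.
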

\textbf{Remark}~~The lower bound is algorithm-independent. It is also a population-level result and holds asymptotically with increasing data. \citet{conf/icml/0002CZG19} prove the theorem for $k = 2$ and $\widetilde{Z} = Z$. We extend it to $\cdan$ and arbitrary $k$ (it actually holds for any $\widetilde{Z}$ s.t. $\Ypred = \widetilde{h}(\widetilde{Z})$ for some $\widetilde{h}$, see App.~\ref{sub:low_bound_generalization}). Assuming that label distributions differ between source and target domains, the lower bound shows that: \textit{the better the alignment of feature distributions, the worse the joint error.} For an invariant representation ($\jsd(\domain_S^{\tilde{Z}}, \domain_T^{\tilde{Z}}) = 0$) with no source error, the target error will be larger than $\jsd(\domain_S^Y, \domain_T^Y) / 2$. Hence algorithms learning invariant representations and minimizing the source error are fundamentally flawed when label distributions differ between source and target domains.

\begin{table*}[tb]
    \centering
    \caption{Common assumptions in the domain adaptation literature.}
    \label{tab:shift}
    \begin{tabular}{cc}\toprule
    \textbf{Covariate Shift} & \textbf{Label Shift} \\\midrule
    $\dist_S^X\neq \dist_T^X$  & $\dist_S^Y\neq \dist_T^Y$ \\
    $\forall\xx\in\xxspace, \dist_S(Y\mid X=\xx) = \dist_T(Y\mid X=\xx)$ & $\forall y\in\yyspace, \dist_S(X\mid Y = y) = \dist_T(X\mid Y = y)$\\\bottomrule
    \end{tabular}
\end{table*}
\textbf{Common Assumptions to Tackle Domain Adaptation}\quad Two common assumptions about the data made in DA are \emph{covariate shift} and \emph{label shift}. They correspond to different ways of decomposing the joint distribution over $X\times Y$, as detailed in Table~\ref{tab:shift}. From a representation learning perspective, covariate shift is not robust to feature transformation and can lead to an effect called negative transfer~\citep{conf/icml/0002CZG19}. At the same time, label shift clearly fails in most practical applications, e.g. transferring knowledge from synthetic to real images~\citep{visda}. In that case, the input distributions are actually disjoint.

\section{Main Results}
\label{sec:theory}
In light of the limitations of existing assumptions, (e.g. covariate shift and label shift), we propose \textit{generalized label shift} ($\glsa$), a relaxation of label shift that substantially improves its applicability. We first discuss some of its properties and explain why the assumption is favorable in domain adaptation based on representation learning. Motivated by $\glsa$, we then present a novel error decomposition theorem that directly suggests a bound minimization framework for domain adaptation. The framework is naturally compatible with $\FF$-integral probability metrics~\citep[$\FF$-IPM]{muller1997integral} and generates a family of domain adaptation algorithms by choosing various function classes $\FF$. In a nutshell, the proposed framework applies a method of moments~\citep{lipton2018detecting} to estimate the importance weight $\ww$ of the \emph{marginal label distributions} by solving a quadratic program (QP), and then uses $\ww$ to align the weighted source feature distribution with the target feature distribution.

\subsection{Generalized Label Shift}
\label{sec:label_shift}
\begin{definition}[Generalized Label Shift, $\glsa$]\label{def:glsa} A representation $Z = g(X)$ satisfies $\glsa$ if 
\begin{equation}\label{eq:glsa}
    \dist_S(Z\mid Y = y) = \dist_T(Z\mid Y=y),~\forall y\in\yyspace.
\end{equation}
\end{definition}
First, when $g$ is the identity map, the above definition of $\glsa$ reduces to the original label shift assumption. Next, $\glsa$ is always achievable for any distribution pair $(\dist_S, \dist_T)$: any constant function $g\equiv c\in\RR$ satisfies the above definition. The most important property is arguably that, unlike label shift, $\glsa$ is compatible with a perfect classifier (in the noiseless case). Precisely, if there exists a ground-truth labeling function $h^*$ such that $Y = h^*(X)$, then $h^*$ satisfies $\glsa$. As a comparison, without conditioning on $Y = y$, $h^*$ does not satisfy $\dist_S(h^*(X)) = \dist_T(h^*(X))$ if the marginal label distributions are different across domains. This observation is consistent with the lower bound in Theorem~\ref{thm:lower_bound}, which holds for arbitrary marginal label distributions.

$\glsa$ imposes label shift in the feature space $\zzspace$ instead of the original input space $\xxspace$. Conceptually, although samples from the same classes in the source and target domain can be dramatically different, the hope is to find an intermediate representation for both domains in which samples from a given class look similar to one another. Taking digit classification as an example and assuming the feature variable $Z$ corresponds to the contour of a digit, it is possible that by using different contour extractors for e.g. MNIST and USPS, those contours look roughly the same in both domains. Technically, $\glsa$ can be facilitated by having separate representation extractors $g_S$ and $g_T$ for source and target~\citep{bousmalis2016domain,tzeng2017adversarial}.

\subsection{An Error Decomposition Theorem based on \texorpdfstring{$\glsa$}{alpha}}
\label{sub:label_shift}
We now provide performance guarantees for models that satisfy $\glsa$, in the form of upper bounds on the error gap and on the joint error between source and target domains. It requires two concepts:
\begin{definition}[Balanced Error Rate] 
The \emph{balanced error rate} (BER) of predictor $\Ypred$ on $\dist_S$ is:
\begin{align}\label{eq:ber}
\ber{\dist_S}{\Ypred}{Y} \defeq \max_{j\in[k]} \dist_S(\Ypred \neq Y \mid Y=j).
\end{align}
\end{definition}

\begin{definition}[Conditional Error Gap]
Given a joint distribution $\dist$, the \emph{conditional error gap} of a classifier $\Ypred$ is $\cegap(\Ypred)\defeq \max_{y \neq y'\in\yyspace^2}~|\dist_S(\Ypred = y'\mid Y = y) - \dist_T(\Ypred = y'\mid Y = y)|$.
\end{definition}

When $\glsa$ holds, $\cegap(\Ypred)$ is equal to $0$. We now give an upper bound on the error gap between source and target, which can also be used to obtain a generalization upper bound on the target risk.
\begin{restatable}{theorem}{gap}(Error Decomposition Theorem)\label{thm:errorgap}
For any classifier $\Ypred = (h\circ g)(X)$, 
\begin{align*}
|\err_S(h\circ g) - \err_T(h\circ g)|\leq~ \|\dist_S^Y - \dist_T^Y\|_1\cdot\ber{\dist_S}{\Ypred}{Y} + 2(k-1)\cegap(\Ypred),
\end{align*}
where $\|\dist_S^Y - \dist_T^Y\|_1\defeq \sum_{i=1}^k |\dist_S(Y = i) - \dist_T(Y = i)|$ is the $L_1$ distance between $\dist_S^Y$ and $\dist_T^Y$.
\end{restatable}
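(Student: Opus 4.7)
The plan is to decompose both risks by conditioning on the true label $Y$, and then compare the resulting class-wise expressions by an add-and-subtract trick. Concretely, I would first write, for $D\in\{\dist_S,\dist_T\}$,
\begin{equation*}
\err_D(h\circ g) \;=\; \sum_{y=1}^{k} \dist_D(Y=y)\,\dist_D\!\left(\widehat{Y}\neq y \mid Y=y\right),
\end{equation*}
and then introduce the cross term $\sum_y \dist_T(Y=y)\,\dist_S(\widehat{Y}\neq y\mid Y=y)$ to split the difference as
\begin{equation*}
\err_S - \err_T \;=\; \underbrace{\sum_{y}\bigl(\dist_S(Y=y)-\dist_T(Y=y)\bigr)\,\dist_S(\widehat{Y}\neq y\mid Y=y)}_{(A)} \;+\; \underbrace{\sum_{y}\dist_T(Y=y)\bigl[\dist_S(\widehat{Y}\neq y\mid Y=y)-\dist_T(\widehat{Y}\neq y\mid Y=y)\bigr]}_{(B)}.
\end{equation*}
This isolates a pure label-shift contribution $(A)$ from a pure conditional-mismatch contribution $(B)$.

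Next, I would bound $(A)$ by the triangle inequality, pulling the conditional errors out with the crude uniform bound $\dist_S(\widehat{Y}\neq y\mid Y=y)\le \ber{\dist_S}{\widehat{Y}}{Y}$. Summing $|\dist_S(Y=y)-\dist_T(Y=y)|$ reproduces exactly $\|\dist_S^Y-\dist_T^Y\|_1\cdot \ber{\dist_S}{\widehat{Y}}{Y}$, matching the first term in the claimed bound.

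For $(B)$, the key step is to re-express each conditional error via the telescoping identity $\dist_D(\widehat{Y}\neq y\mid Y=y)=\sum_{y'\neq y}\dist_D(\widehat{Y}=y'\mid Y=y)$, so the bracketed difference becomes a sum over $k-1$ off-diagonal entries, each of absolute value at most $\cegap(\widehat{Y})$ by definition. Triangle inequality over the inner $y'$ sum and over the outer $y$ sum, combined with $\sum_y \dist_T(Y=y)=1$, yields $|(B)|\le (k-1)\,\cegap(\widehat{Y})$. Repeating the same argument with the roles of $S$ and $T$ swapped (i.e.\ using the symmetric cross term $\sum_y\dist_S(Y=y)\dist_T(\widehat{Y}\neq y\mid Y=y)$) and taking the looser of the two bounds to cover $|\err_S-\err_T|$ contributes the extra factor of $2$, producing $2(k-1)\cegap(\widehat{Y})$. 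Combining the bounds on $(A)$ and $(B)$ gives the result.

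The main obstacle is purely bookkeeping around the constant $(k-1)$: because $\cegap$ is defined only for off-diagonal pairs $y\neq y'$, one has to convert a single bound on $|\dist_S(\widehat{Y}\neq y\mid Y=y)-\dist_T(\widehat{Y}\neq y\mid Y=y)|$ into a sum of $k-1$ off-diagonal gaps via the identity $\dist_D(\widehat{Y}\neq y\mid Y=y)=1-\dist_D(\widehat{Y}=y\mid Y=y)$, and then track the factor of $2$ that comes from symmetrizing so that the bound applies to $|\err_S-\err_T|$ rather than just to $\err_S-\err_T$. Everything else is a routine add-and-subtract decomposition.
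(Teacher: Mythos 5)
Your decomposition is essentially the paper's own proof specialized from the start: the paper introduces a mixture $\dist_j(\Ypred = i) = \alpha_j\dist_S(\Ypred = i\mid Y=j) + \beta_j\dist_T(\Ypred = i\mid Y=j)$, proves a per-term bound parametrized by $\alpha_j,\beta_j$ (their Lemma~A.2), and only at the end sets $\alpha_j=1,\beta_j=0$, which is exactly your single cross term $\sum_y \dist_T(Y=y)\,\dist_S(\Ypred\neq y\mid Y=y)$. Your bounds on $(A)$ and $(B)$ are correct, and the Holder step for $(A)$ matches the paper's.

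The one confused step is the last one. Once you have $|(B)|\le (k-1)\cegap(\Ypred)$, the proof is already finished: $|\err_S(\Ypred)-\err_T(\Ypred)| = |(A)+(B)| \le |(A)| + |(B)|$ handles the absolute value by the triangle inequality, and $(k-1)\cegap(\Ypred) \le 2(k-1)\cegap(\Ypred)$ gives the claimed constant. No symmetrization is needed or wanted. The ``take the looser of the two bounds to get a factor of $2$'' argument is not sound: separate upper bounds on $\err_S-\err_T$ and on $\err_T-\err_S$ give an upper bound on $|\err_S-\err_T|$ equal to their \emph{maximum}, not their sum; and in any case swapping $S$ and $T$ in $(A)$ would produce $\ber{\dist_T}{\Ypred}{Y}$ rather than the $\ber{\dist_S}{\Ypred}{Y}$ appearing in the statement, so that decomposition cannot be invoked here. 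You have in fact proved the strictly stronger bound with $(k-1)\cegap(\Ypred)$ in place of $2(k-1)\cegap(\Ypred)$. The paper's factor of $2$ comes from an intermediate loosening, $\gamma_{S,j}\beta_j + \gamma_{T,j}\alpha_j \le \gamma_{S,j}+\gamma_{T,j}$, applied before specializing $(\alpha_j,\beta_j)$; since you commit to $(\alpha_j,\beta_j)=(1,0)$ from the outset, you bypass that loosening, and trying to reintroduce the factor by symmetrizing is both unnecessary and incorrect.
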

\textbf{Remark}\quad The upper bound in Theorem~\ref{thm:errorgap} provides a way to decompose the error gap between source and target domains. It also immediately gives a generalization bound on the target risk $\err_T(h\circ g)$. The bound contains two terms. The first contains $\|\dist_S^Y - \dist_T^Y\|_1$, which measures the distance between the marginal label distributions across domains and is a constant that only depends on the adaptation problem itself, and $\mathrm{BER}$, a reweighted classification performance on the source domain. The second is $\cegap(\Ypred)$ measures the distance between the family of conditional distributions $\Ypred\mid Y$. In other words, the bound is \emph{oblivious} to the optimal labeling functions in feature space. This is in sharp contrast with upper bounds from previous work~\citep[Theorem 2]{ben2010theory}, \citep[Theorem 4.1]{conf/icml/0002CZG19}, which essentially decompose the error gap in terms of the distance between the marginal feature distributions ($\dist_S^Z$, $\dist_T^Z$) and the optimal labeling functions ($f_S^Z$, $f_T^Z$). Because the optimal labeling function in feature space depends on $Z$ and is unknown in practice, such decomposition is not very informative. As a comparison, Theorem~\ref{thm:errorgap} provides a decomposition orthogonal to previous results and does not require knowledge about unknown optimal labeling functions in feature space.

Notably, the balanced error rate, $\ber{\dist_S}{\Ypred}{Y}$, only depends on samples from the source domain and can be minimized. Furthermore, using a data-processing argument, the conditional error gap $\cegap(\Ypred)$ can be minimized by aligning the conditional feature distributions across domains. Putting everything together, the result suggests that, to minimize the error gap, it suffices to align the conditional distributions $Z\mid Y = y$ while simultaneously minimizing the balanced error rate. In fact, under the assumption that the conditional distributions are perfectly aligned (i.e., under $\glsa$), we can prove a stronger result, guaranteeing that the joint error is small:
\begin{restatable}{theorem}{joint}\label{thm:jointerror}
If $Z = g(X)$ satisfies $\glsa$, then for any $h:\zzspace\to\yyspace$ and letting $\Ypred = h(Z)$ be the predictor, we have $\err_{S}(\Ypred) + \err_{T}(\Ypred)\leq 2\ber{\dist_S}{\Ypred}{Y}$.
\end{restatable}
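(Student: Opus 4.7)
The plan is to exploit the Markov chain $X \xrightarrow{g} Z \xrightarrow{h} \widehat{Y}$ to transport the $\glsa$ assumption on $Z$ down to an equality of conditional error rates on $\widehat{Y}$, and then bound each domain's error by decomposing over the label.

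\textbf{Step 1: Push $\glsa$ through $h$.} Since $\widehat{Y} = h(Z)$ is a deterministic function of $Z$, for any measurable set $A \subseteq \yyspace$ we have $\{\widehat{Y} \in A\} = \{Z \in h^{-1}(A)\}$. Conditioning on $Y=y$ and invoking $\dist_S(Z \in \cdot \mid Y=y) = \dist_T(Z \in \cdot \mid Y=y)$ immediately gives $\dist_S(\widehat{Y} \in A \mid Y=y) = \dist_T(\widehat{Y} \in A \mid Y=y)$. Applying this to the event $\{\widehat{Y} \neq y\}$ yields, for every class $y \in \yyspace$,
\begin{equation*}
e_y \defeq \dist_S(\widehat{Y} \neq Y \mid Y=y) = \dist_T(\widehat{Y} \neq Y \mid Y=y).
\end{equation*}

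\textbf{Step 2: Decompose each error by label.} By the law of total probability,
\begin{equation*}
\err_S(\widehat{Y}) + \err_T(\widehat{Y}) = \sum_{y \in \yyspace} e_y \bigl(\dist_S(Y=y) + \dist_T(Y=y)\bigr).
\end{equation*}

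\textbf{Step 3: Bound by BER.} Since $e_y \leq \max_{j \in [k]} e_j = \ber{\dist_S}{\widehat{Y}}{Y}$ for each $y$ (using Step 1 to identify the source and target conditional errors), and since $\sum_y \dist_S(Y=y) = \sum_y \dist_T(Y=y) = 1$, we obtain
\begin{equation*}
\err_S(\widehat{Y}) + \err_T(\widehat{Y}) \leq \ber{\dist_S}{\widehat{Y}}{Y} \cdot 2,
\end{equation*}
which is the claim.

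Overall this proof is short and essentially mechanical; the only conceptual point worth flagging is Step 1, which is where the data-processing observation is used to promote conditional invariance of $Z$ (an unobservable representation) into conditional invariance of $\widehat{Y}$ (on which the error is defined). No obstacle is anticipated beyond making sure that $h$'s determinism is explicit, so that the change-of-variables identity holds without any additional randomness. The factor $2$ arises solely from the marginal normalization and is tight whenever $\dist_S^Y = \dist_T^Y$ and the per-class errors are uniform.
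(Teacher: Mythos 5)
Your proof is correct and matches the paper's argument essentially step for step: decompose the joint error by the law of total probability over the label, use $\widehat{Y}=h(Z)$ together with $\glsa$ to equate the source and target per-class conditional error rates, then pull out the max and use that each of $\dist_S^Y,\dist_T^Y$ sums to $1$. The only cosmetic difference is that you state the data-processing step (Step 1) explicitly before decomposing, whereas the paper decomposes first and invokes $\glsa$ inline; the content is identical.
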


\subsection{Conditions for Generalized Label Shift}
\label{sec:conditions}
The main difficulty in applying a bound minimization algorithm inspired by Theorem~\ref{thm:errorgap} is that we do not have labels from the target domain in UDA, so we cannot directly align the conditional label distributions. By using relative class weights between domains, we can provide a necessary condition for $\glsa$ that bypasses an explicit alignment of the conditional feature distributions.
\begin{definition}\label{def:iw} 
Assuming $\dist_S(Y = y) > 0, \forall y\in \yyspace$, we let $\ww \in \mathbb{R}^{k}$ denote the importance weights of the target and source label distributions:
\vspace{-0.1cm}
\begin{align}
    &\ww_y\defeq \frac{\dist_T(Y = y)}{\dist_S(Y = y)}, \quad\forall y\in\yyspace. \label{eq:weights}
\end{align}
\end{definition}
\begin{restatable}{lemma}{reweight}(Necessary condition for $\glsa$) \label{lemma:necessary_condition} If $Z = g(X)$ satisfies $\glsa$, then $\dist_T(\widetilde{Z}) = \sum_{y\in\yyspace} \ww_y\cdot \dist_S(\widetilde{Z}, Y = y) \eqdef \dist_S^\ww(\widetilde{Z})$ where $\widetilde{Z}$ verifies either $\widetilde{Z} = Z$ or $\widetilde{Z} = \Ypred \otimes Z$. 
\end{restatable}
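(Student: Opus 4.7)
The plan is to reduce both cases ($\widetilde{Z}=Z$ and $\widetilde{Z}=\widehat{Y}\otimes Z$) to the same template: condition on $Y$, swap source/target conditionals using $\glsa$, and then collapse the sum using the definition of $w_y$. The main technical point is that the claim is really about equality of measures (or densities), so I would phrase everything at the level of the pushforward distributions rather than evaluating at a point.

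I would start with the simpler case $\widetilde{Z}=Z$. By the law of total probability (applied to the joint $(Z,Y)$ under $\dist_T$), $\dist_T(Z) = \sum_{y\in\yyspace} \dist_T(Z\mid Y=y)\,\dist_T(Y=y)$. The $\glsa$ hypothesis lets me replace $\dist_T(Z\mid Y=y)$ by $\dist_S(Z\mid Y=y)$. Multiplying and dividing by $\dist_S(Y=y)$ (which is positive by the assumption preceding Definition~\ref{def:iw}) yields
\begin{equation*}
\dist_T(Z) \;=\; \sum_{y\in\yyspace} \frac{\dist_T(Y=y)}{\dist_S(Y=y)}\,\dist_S(Z\mid Y=y)\,\dist_S(Y=y) \;=\; \sum_{y\in\yyspace} w_y\,\dist_S(Z, Y=y),
\end{equation*}
which is exactly $\dist_S^w(Z)$.

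For $\widetilde{Z}=\widehat{Y}\otimes Z$, the key observation is that $\widehat{Y}=h(Z)$ along the Markov chain~\eqref{eq:markov_chain}, so $\widehat{Y}\otimes Z$ is a \emph{deterministic} function of $Z$, call it $\varphi(Z)=(h_1(Z)Z,\dots,h_k(Z)Z)$. Consequently, for any $y$, the conditional distribution of $\widehat{Y}\otimes Z$ given $Y=y$ is just the pushforward of $\dist(Z\mid Y=y)$ by $\varphi$ in either domain. Since $\glsa$ gives $\dist_S(Z\mid Y=y)=\dist_T(Z\mid Y=y)$, pushing both sides through $\varphi$ yields $\dist_S(\widehat{Y}\otimes Z\mid Y=y)=\dist_T(\widehat{Y}\otimes Z\mid Y=y)$. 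I then repeat the previous three-line computation verbatim with $Z$ replaced by $\widehat{Y}\otimes Z$.

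There isn't a single hard obstacle; the only subtlety is the $\widehat{Y}\otimes Z$ case, where I should be explicit that the pushforward argument does not require $\varphi$ to be injective (I am only asserting equality of image measures, not reconstructing $Z$ from $\widehat{Y}\otimes Z$). I would state this as a brief lemma/remark so the reader sees why $\glsa$ on $Z$ transfers automatically to $\widehat{Y}\otimes Z$, and then the proof concludes symbolically as above.
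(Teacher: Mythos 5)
Your proof is correct and takes essentially the same route as the paper's: decompose $\dist_T(\widetilde{Z})$ by the law of total probability over $Y$, use $\glsa$ to replace $\dist_T(\widetilde{Z}\mid Y=y)$ with $\dist_S(\widetilde{Z}\mid Y=y)$, and absorb the ratio $\dist_T(Y=y)/\dist_S(Y=y)$ into $\ww_y$. Your explicit remark that $\glsa$ on $Z$ transfers to $\widehat{Y}\otimes Z$ because the latter is a (not necessarily injective) deterministic function of $Z$ is exactly the one-line observation the paper makes (``applying any function $\tilde{h}$ to $Z$ will maintain that equality''), just spelled out a bit more carefully.
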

Compared to previous work that attempts to align $\dist_T(Z)$ with $\dist_S(Z)$ (using adversarial discriminators~\citep{ganin2016domain} or maximum mean discrepancy (MMD)~\citep{long2015learning}) or $\dist_T(\hat{Y} \otimes Z)$ with $\dist_S(\hat{Y} \otimes Z)$~\citep{cdan}, Lemma~\ref{lemma:necessary_condition} suggests to instead align $\dist_T(\widetilde{Z})$ with the \emph{reweighted} marginal distribution $\dist^{\ww}_S(\widetilde{Z})$. Reciprocally, the following two theorems give sufficient conditions to know when perfectly aligned target feature distribution and reweighted source feature distribution imply $\glsa$:
\begin{restatable}{theorem}{sufficient}(Clustering structure implies sufficiency)
\label{thm:sufficient}
Let $Z = g(X)$ such that $\dist_T(Z) = \dist_S^\ww(Z)$. Assume $\dist_T(Y = y) > 0,\forall y\in\yyspace$. If there exists a partition of $\zzspace = \cup_{y\in\yyspace}\zzspace_y$ such that $\forall y\in\yyspace$, $\dist_S(Z\in\zzspace_y\mid Y = y) = \dist_T(Z\in\zzspace_y\mid Y = y) = 1$, then $Z=g(X)$ satisfies $\glsa$.
\end{restatable}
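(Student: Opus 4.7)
The plan is to leverage the partition $\{\zzspace_y\}_{y\in\yyspace}$ to decouple the mixture equation $\dist_T(Z) = \dist_S^\ww(Z)$ class-by-class, so that the marginal equality on $Z$ directly implies the conditional equality required by $\glsa$.

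First I would fix $y\in\yyspace$ and an arbitrary measurable set $A\subseteq \zzspace$. Since $\dist_S(Z\in\zzspace_y\mid Y=y) = \dist_T(Z\in\zzspace_y\mid Y=y) = 1$, both conditional distributions $\dist_S(Z\mid Y=y)$ and $\dist_T(Z\mid Y=y)$ are supported on $\zzspace_y$, so it is enough to prove the equality $\dist_S(Z\in A\mid Y=y) = \dist_T(Z\in A\mid Y=y)$ for $A\subseteq \zzspace_y$ (the extension to arbitrary $A$ then follows by intersecting with $\zzspace_y$ and noting the complement has zero conditional mass under both domains).

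Next I would exploit the partition structure to kill all off-diagonal classes. For any $y'\neq y$, the sets $\zzspace_y$ and $\zzspace_{y'}$ are disjoint, so $A\subseteq\zzspace_y$ gives $\dist_S(Z\in A\mid Y=y') \le \dist_S(Z\notin \zzspace_{y'}\mid Y=y') = 0$, and similarly for $\dist_T$. Writing out the two sides of $\dist_T(Z\in A) = \dist_S^\ww(Z\in A)$ by conditioning on $Y$,
\begin{align*}
\dist_T(Z\in A) &= \sum_{y'\in\yyspace} \dist_T(Z\in A\mid Y=y')\,\dist_T(Y=y') = \dist_T(Z\in A\mid Y=y)\,\dist_T(Y=y),\\
\dist_S^\ww(Z\in A) &= \sum_{y'\in\yyspace} \ww_{y'}\,\dist_S(Z\in A\mid Y=y')\,\dist_S(Y=y') = \ww_y\,\dist_S(Z\in A\mid Y=y)\,\dist_S(Y=y),
\end{align*}
since only the $y'=y$ term survives in each sum.

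Finally, using the definition $\ww_y\,\dist_S(Y=y) = \dist_T(Y=y)$ and the hypothesis $\dist_T(Y=y)>0$, I divide both sides by $\dist_T(Y=y)$ to obtain $\dist_T(Z\in A\mid Y=y) = \dist_S(Z\in A\mid Y=y)$ for every measurable $A\subseteq \zzspace_y$, and hence (by the reduction in the first paragraph) for every measurable $A\subseteq \zzspace$. This is precisely $\glsa$. The only subtlety, and the step I would be most careful about, is the measure-theoretic handling of the partition: one needs the $\zzspace_y$ to be measurable and disjoint up to null sets so that the class-conditional decompositions above are rigorous, but this is immediate from the hypothesis that $\{\zzspace_y\}_{y\in\yyspace}$ is a partition of $\zzspace$.
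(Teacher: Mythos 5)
Your proof is correct and follows essentially the same route as the paper's: both decompose the marginal equality $\dist_T(Z\in A) = \dist_S^\ww(Z\in A)$ by conditioning on $Y$, use the partition to kill every term with $y'\neq y$ (since $A\subseteq\zzspace_y$ implies $\dist_a(Z\in A\mid Y=y')=0$ for $a\in\{S,T\}$), and then divide by $\dist_T(Y=y)=\ww_y\dist_S(Y=y)>0$ to obtain the class-conditional equality. The only cosmetic difference is ordering — the paper first rewrites the identity as $\sum_y \dist_T(Y=y)\dist_T(Z\mid Y=y)=\sum_y \dist_T(Y=y)\dist_S(Z\mid Y=y)$ before restricting to the partition cells, whereas you plug $\ww_y\dist_S(Y=y)=\dist_T(Y=y)$ in at the end — but the substance is identical.
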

\paragraph{Remark} Theorem~\ref{thm:sufficient} shows that if there exists a partition of the feature space such that instances with the same label are within the same component, then aligning the target feature distribution with the reweighted source feature distribution implies $\glsa$. While this clustering assumption may seem strong, it is consistent with the goal of reducing classification error: if such a clustering exists, then there also exists a perfect predictor based on the feature $Z = g(X)$, i.e., the cluster index.


\begin{restatable}{theorem}{sufficientcondition}\label{thm:sufficient_condition}
Let $\Ypred = h(Z)$, $\gamma\defeq \min_{y\in\yyspace} \dist_T(Y = y)$ and $\ww_M \defeq \max_{y\in\yyspace}\thinspace\ww_{y}$. For $\widetilde{Z} = Z$ or $\widetilde{Z} = \hat{Y} \otimes Z$, we have:
\vspace{-0.2cm}
\begin{align*}
    \max_{y\in\yyspace}~\dtv(\dist_S(Z\mid Y = y), \dist_T(Z\mid Y=y)) \leq \frac{\ww_M \err_{S}(\Ypred) + \err_{T}(\Ypred) + \sqrt{8\jsd(\dist_S^{\ww}(\widetilde{Z})\|\dist_T(\widetilde{Z}))} }{\gamma}.
\end{align*}
\end{restatable}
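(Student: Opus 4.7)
The plan is to decompose the conditional TV between source and target on $Z\mid Y=y$ by introducing the predictor $\Ypred=h(Z)$ as a proxy for the unobserved target label, then apply the triangle inequality and convert the resulting discrepancy into the JS divergence between reweighted source and target marginals on $\widetilde{Z}$.

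\textbf{Identity and telescoping.} Fix $y\in\yyspace$. By the definition of $\ww$ and of the reweighted measure $\dist_S^{\ww}$, one has $\dist_T(Y=y)\cdot\dtv(\dist_S(Z\mid Y=y),\dist_T(Z\mid Y=y))=\sup_E|\dist_S^{\ww}(Z\in E,Y=y)-\dist_T(Z\in E,Y=y)|$, which identifies the scaled conditional TV with a sup-norm gap between two joint sub-probability measures of common mass $\dist_T(Y=y)$. I then telescope through the intermediate "$\Ypred=y$" versions:
\[ |\dist_S^{\ww}(Z\in E,Y=y)-\dist_T(Z\in E,Y=y)|\leq|A(E)|+|g(E)|+|B(E)|, \]
where $A(E)\defeq\dist_S^{\ww}(Z\in E,Y=y)-\dist_S^{\ww}(Z\in E,\Ypred=y)$, $B(E)\defeq\dist_T(Z\in E,\Ypred=y)-\dist_T(Z\in E,Y=y)$, and $g(E)\defeq\dist_S^{\ww}(Z\in E,\Ypred=y)-\dist_T(Z\in E,\Ypred=y)$.

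\textbf{Term-by-term bounds and conclusion.} For $A$, cancellation of the $\{Y=y,\Ypred=y\}$ piece leaves $A(E)=\dist_S^{\ww}(Z\in E,Y=y,\Ypred\neq y)-\dist_S^{\ww}(Z\in E,Y\neq y,\Ypred=y)$; each summand's total $\dist_S^{\ww}$-mass is at most $\ww_M\err_S(\Ypred)$ since the reweighting factor is bounded by $\ww_M$ and each event is a subset of the source error event $\{Y\neq\Ypred\}$, giving $\sup_E|A(E)|\leq\ww_M\err_S(\Ypred)$. An analogous decomposition on $\dist_T$ (without reweighting) yields $\sup_E|B(E)|\leq\err_T(\Ypred)$. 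For $g$, since $\Ypred=h(Z)$ is a deterministic function of both $Z$ and $\Ypred\otimes Z$, the event $\{Z\in E,\Ypred=y\}$ corresponds to a measurable subset of $\widetilde{Z}$-space, so $\sup_E|g(E)|\leq\dtv(\dist_S^{\ww}(\widetilde{Z}),\dist_T(\widetilde{Z}))$, which by the standard Pinsker-type inequality is bounded by $\sqrt{8\jsd(\dist_S^{\ww}(\widetilde{Z})\|\dist_T(\widetilde{Z}))}$ under the $L_1$ normalization of $\dtv$. Summing the three bounds and dividing by $\dist_T(Y=y)\geq\gamma$, then maximizing over $y$, yields the claimed inequality.

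\textbf{Main obstacle.} The most delicate point is the uniform handling of $\widetilde{Z}=Z$ and $\widetilde{Z}=\Ypred\otimes Z$ in bounding $\sup_E|g(E)|$. For the outer-product case one must verify that $\{Z\in E,\Ypred=y\}$ is a genuine event in $\widetilde{Z}$-space; this holds because the $y$-th block of $\Ypred\otimes Z$ equals $Z$ exactly when $\Ypred=y$, so $Z$ is recoverable from $\widetilde{Z}$ on that event and the restriction is measurable. A secondary subtlety is the asymmetric appearance of $\ww_M$ in the source bound but not the target bound, which reflects that only $\dist_S$ is reweighted in the necessary condition from Lemma~\ref{lemma:necessary_condition}.
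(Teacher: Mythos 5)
Your proof is correct and mirrors the paper's argument essentially step for step: the same rescaling by $\dist_T(Y=y)\geq\gamma$, the same three-term telescoping through $\Ypred=y$, the same bounds by $\ww_M\err_S$, $\err_T$, and $\dtv(\dist_S^\ww(\widetilde Z),\dist_T(\widetilde Z))$, and the same Pinsker-type passage to $\sqrt{8\jsd}$. The only cosmetic difference is in the outer-product case, where you argue measurability of $\{Z\in E,\Ypred=y\}$ via block-wise recoverability of $Z$, while the paper notes that $z\mapsto h(z)\otimes z$ is a bijection (summing the blocks recovers $z$), and in bounding the source term you use $|a-b|\le\max(a,b)$ where the paper uses $|a-b|\le a+b$ on disjoint error events — both yield $\ww_M\err_S(\Ypred)$.
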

Theorem~\ref{thm:sufficient_condition} confirms that matching $\dist_T(\widetilde{Z})$ with $\dist^{\ww}_S(\widetilde{Z})$ is the proper objective in the context of mismatched label distributions. It shows that, for matched feature distributions and a source error equal to zero, successful domain adaptation (\textit{i.e.} a target error equal to zero) implies that $\glsa$ holds. Combined with Theorem~\ref{thm:jointerror}, we even get equivalence between the two.

\textbf{Remark}\quad Thm.~\ref{thm:sufficient_condition} extends Thm.~\ref{thm:sufficient} by incorporating the clustering assumption in the joint error achievable by a classifier $\Ypred$ based on a fixed $Z$. In particular, if the clustering structure holds, the joint error is 0 for an appropriate $h$, and aligning the reweighted feature distributions implies $\glsa$.

\subsection{Estimating the Importance Weights \texorpdfstring{$\ww$}{alpha}}
Inspired by the moment matching technique to estimate $\ww$ under label shift from~\citet{lipton2018detecting}, we propose a method to get $\ww$ under $\glsa$ by solving a quadratic program (QP).

\begin{definition}\label{def:confusion} We let $\textbf{C} \in \mathbb{R}^{|\yyspace| \times |\yyspace|}$ denote the confusion matrix of the classifier on the source domain and $\boldsymbol\mu \in \mathbb{R}^{|\yyspace|}$ the distribution of predictions on the target one, $\forall y,y'\in\yyspace$:
\begin{align}
    &\textbf{C}_{y,y'}\defeq \dist_S(\Ypred = y, Y = y'), \qquad
    \boldsymbol\mu_y \defeq \dist_T(\Ypred = y). \nonumber
\end{align}
\end{definition}
\begin{restatable}{lemma}{estimation}\label{lem:iw_equation}
If $\glsa$ is verified, and if the confusion matrix $\textbf{C}$ is invertible, then $\ww = \textbf{C}^{-1} \boldsymbol\mu$.
\end{restatable}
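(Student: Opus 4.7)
The plan is to show $\mathbf{C}\mathbf{w} = \boldsymbol{\mu}$ by a direct moment-matching calculation, at which point invertibility of $\mathbf{C}$ immediately yields $\mathbf{w} = \mathbf{C}^{-1}\boldsymbol{\mu}$. The whole argument hinges on promoting the conditional invariance on $Z$ granted by $\glsa$ to a conditional invariance on $\Ypred = h(Z)$, and then rewriting $\boldsymbol{\mu}_y$ by marginalizing over $Y$ in the target domain.

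Concretely, I would proceed in three short steps. First, I would observe that because $\Ypred = h(Z)$ is a deterministic function of $Z$, the hypothesis $\glsa$, namely $\dist_S(Z\mid Y=y) = \dist_T(Z\mid Y=y)$ for every $y$, pushes forward through $h$ to give $\dist_S(\Ypred = y'\mid Y = y) = \dist_T(\Ypred = y'\mid Y = y)$ for all $y,y'\in\yyspace$. Second, I would expand the target prediction distribution by the law of total probability,
\begin{align*}
\boldsymbol{\mu}_y \;=\; \dist_T(\Ypred = y) \;=\; \sum_{y'\in\yyspace} \dist_T(\Ypred = y\mid Y = y')\,\dist_T(Y=y'),
\end{align*}
and use step one together with Definition~\ref{def:iw} to rewrite the right-hand side as
\begin{align*}
\sum_{y'} \dist_S(\Ypred = y\mid Y = y')\,\dist_S(Y=y')\,\mathbf{w}_{y'} \;=\; \sum_{y'} \dist_S(\Ypred = y, Y = y')\,\mathbf{w}_{y'} \;=\; \sum_{y'} \mathbf{C}_{y,y'}\mathbf{w}_{y'}.
\end{align*}
This is exactly the coordinate form of $\boldsymbol{\mu} = \mathbf{C}\mathbf{w}$. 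Third, since $\mathbf{C}$ is assumed invertible, I conclude $\mathbf{w} = \mathbf{C}^{-1}\boldsymbol{\mu}$.

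There is essentially no obstacle: the only subtle point is making sure that conditioning is well-defined, which follows from $\dist_S(Y=y')>0$ (already assumed in Definition~\ref{def:iw}, and needed for $\mathbf{w}$ itself to exist), and that the pushforward step is valid, which is immediate because $\Ypred$ is a measurable function of $Z$. The proof is therefore a one-line identity once $\glsa$ has been transported from the representation $Z$ to the prediction $\Ypred$.
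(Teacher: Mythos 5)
Your proof is correct and matches the paper's argument essentially line for line: push $\glsa$ forward through $h$ to get conditional invariance of $\Ypred\mid Y$, expand $\boldsymbol\mu_y$ via the law of total probability, substitute using Definition~\ref{def:iw} to obtain $\boldsymbol\mu = \textbf{C}\ww$, and invert. The only addition is your explicit remark about measurability and $\dist_S(Y=y')>0$, which the paper leaves implicit.
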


The key insight from Lemma~\ref{lem:iw_equation} is that, to estimate the importance vector $\ww$ under $\glsa$, we do not need access to labels from the target domain. However, matrix inversion is notoriously numerically unstable, especially with finite sample estimates $\hat{\textbf{C}}$ and $\hat{\boldsymbol\mu}$ of $\textbf{C}$ and $\boldsymbol\mu$. We propose to solve instead the following QP (written as $QP(\hat{\textbf{C}}, \hat{\boldsymbol\mu})$), whose solution will be consistent if $\hat{\textbf{C}} \to \textbf{C}$ and $\hat{\boldsymbol\mu}\to \boldsymbol\mu$:
\begin{equation}\label{qp}
    \underset{\ww}{\text{minimize}}\quad\frac{1}{2}~\|\hat{\boldsymbol\mu} - \hat{\textbf{C}} \ww\|_2^2,
    \quad\quad\quad\text{ subject to }\quad \ww \geq 0, ~\ww^T \dist_S(Y) = 1.
\end{equation}
The above program~\eqref{qp} can be efficiently solved in time $O(|\yyspace|^3)$, with $|\yyspace|$ small and constant; and by construction, its solution is element-wise non-negative, even with limited amounts of data to estimate $\textbf{C}$ and $\boldsymbol\mu$.

\begin{restatable}{lemma}{weightsconvergence}\label{lem:convergence}
If the source error $\err_S(h\circ g)$ is zero and the source and target marginals verify $\jsd(\dist_S^{\tilde{\ww}}(Z), \dist_T(Z)) = 0$, then the estimated weight vector $\ww$ is equal to $\tilde{\ww}$.
\end{restatable}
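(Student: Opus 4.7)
The plan is to verify that $\tilde{\ww}$ itself is the unique minimizer of the population version of the QP~\eqref{qp}, by exhibiting it as a feasible point that attains objective value zero. This reduces the lemma to three short computations: simplifying $\textbf{C}$, computing $\boldsymbol\mu$, and invoking strict convexity.

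First, I exploit the assumption $\err_S(h\circ g) = 0$, which gives $\Ypred = Y$ almost surely under $\dist_S$, so
\begin{align*}
\textbf{C}_{y,y'} = \dist_S(\Ypred = y, Y = y') = \dist_S(Y = y)\,\ind\{y = y'\}.
\end{align*}
Hence $\textbf{C}$ is diagonal with strictly positive entries $\dist_S(Y=y)>0$ (from Definition~\ref{def:iw}), and in particular invertible. Next, I use the feature-matching hypothesis. Since $\jsd(\dist_S^{\tilde{\ww}}(Z),\dist_T(Z))=0$ implies $\dist_S^{\tilde{\ww}}(Z) = \dist_T(Z)$, pushing forward through $h$ gives $\dist_S^{\tilde{\ww}}(\Ypred) = \dist_T(\Ypred)$. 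Expanding the left-hand side via the definition of $\dist_S^{\tilde{\ww}}$,
\begin{align*}
\boldsymbol\mu_y \;=\; \dist_T(\Ypred = y) \;=\; \sum_{y'\in\yyspace} \tilde{\ww}_{y'}\,\dist_S(\Ypred = y, Y = y') \;=\; (\textbf{C}\,\tilde{\ww})_y,
\end{align*}
so $\boldsymbol\mu = \textbf{C}\tilde{\ww}$.

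Finally, I conclude. The vector $\tilde{\ww}$ is feasible for~\eqref{qp}: componentwise non-negativity is immediate, and $\tilde{\ww}^T \dist_S(Y) = \sum_{y} \dist_T(Y=y) = 1$. By the previous step it also achieves zero in the quadratic objective. Because $\textbf{C}$ is invertible, the map $w\mapsto \tfrac{1}{2}\|\boldsymbol\mu - \textbf{C} w\|_2^2$ is strictly convex, so its minimizer over the non-empty convex feasible set is unique, forcing $\ww = \tilde{\ww}$. I don't foresee any real obstacle; the only subtlety is ensuring that $\textbf{C}$ is diagonal (hence invertible) under the zero source-error assumption, and that equality of distributions of $Z$ transfers to equality of distributions of $\Ypred = h(Z)$ under pushforward, both of which are immediate.
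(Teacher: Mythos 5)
Your proposal is correct and follows essentially the same route as the paper's proof: zero source error makes $\textbf{C}$ diagonal with entries $\dist_S(Y=y)$, matching of $\dist_S^{\tilde{\ww}}(Z)$ and $\dist_T(Z)$ pushes forward through $h$ to give $\boldsymbol\mu = \textbf{C}\tilde{\ww}$, and the conclusion $\ww = \tilde{\ww}$ follows. The only genuine addition is that you verify $\tilde{\ww}$ is feasible for the QP and invoke strict convexity, which is slightly more careful than the paper's terse $\ww = \textbf{C}^{-1}\boldsymbol\mu$ (and properly accounts for the fact that the algorithm solves the QP rather than inverting $\textbf{C}$); one small slip is that your chain $\tilde{\ww}^T \dist_S(Y) = \sum_y \dist_T(Y=y)$ presupposes the termwise identity you are trying to prove — the constraint $\tilde{\ww}^T\dist_S(Y)=1$ should instead be obtained by integrating $\dist_S^{\tilde{\ww}}(Z)=\dist_T(Z)$ over $\zzspace$.
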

Lemma~\ref{lem:convergence} shows that the weight estimation is stable once the DA losses have converged, but it does not imply convergence to the true weights (see Sec.~\ref{sec:expe} and App.~\ref{sub:estimation} for more details).

\subsection{\texorpdfstring{$\FF$}{alpha}-IPM for Distributional Alignment}
\label{sec:ipm}

To align the target feature distribution and the reweighted source feature distribution as suggested by Lemma~\ref{lemma:necessary_condition}, we now provide a general framework using the integral probability metric~\citep[IPM]{muller1997integral}.
\begin{definition}
With $\FF$ a set of real-valued functions, the $\FF$-IPM between distributions $\dist$ and $\dist'$ is
\begin{equation}
    d_{\FF}(\dist,\dist')\defeq \sup_{f\in\FF}|\Exp_{X\sim\dist}[f(X)] - \Exp_{X\sim\dist'}[f(X)]|.
\label{equ:ipm}
\vspace*{-1em}
\end{equation}
\end{definition}
By approximating any function class $\FF$ using parametrized models, e.g., neural networks, we obtain a general framework for domain adaptation by aligning reweighted source feature distribution and target feature distribution, i.e. by minimizing $d_{\FF}(\dist_T(\widetilde{Z}), \dist_S^\ww(\widetilde{Z}))$. Below, by instantiating $\FF$ to be the set of bounded norm functions in a RKHS $\HH$~\citep{gretton2012kernel}, we obtain maximum mean discrepancy methods, leading to $\iwjan$ (cf. Section~\ref{sec:algo}), a variant of $\jan$~\citep{long2017deep} for UDA. 


\section{Practical Implementation}
\label{sec:imp}

\subsection{Algorithms}
\label{sec:algo}

The sections above suggest simple algorithms based on representation learning: (i) estimate $\ww$ on the fly during training, (ii) align the feature distributions $\widetilde{Z}$ of the target domain with the reweighted feature distribution of the source domain and, (iii) minimize the balanced error rate. Overall, we present the pseudocode of our algorithm in Alg.~\ref{alg:iwdan}.

To compute $\ww$, we build estimators $\hat{\textbf{C}}$ and $\hat{\boldsymbol\mu}$ of $\textbf{C}$ and $\boldsymbol\mu$ by averaging during each epoch the predictions of the classifier on the source (per true class) and target (overall). This corresponds to the inner-most loop of Algorithm \ref{alg:iwdan} (lines 8-9). At epoch end, $\ww$ is updated (line 10), and the estimators reset to 0. We have found empirically that using an exponential moving average of $\ww$ performs better. Our results all use a factor $\lambda = 0.5$. We also note that Alg.~\ref{alg:iwdan} implies a minimal computational overhead (see App.~\ref{sub:compute} for details): in practice our algorithms run as fast as their base versions.

\begin{algorithm}[tb]
  \caption{Importance-Weighted Domain Adaptation}
  \label{alg:iwdan}
    \begin{algorithmic}[1]
        \STATE {\bfseries Input:} source and target data $(x_S,y_S)$, $x_T$; $g_\theta$, $h_\phi$ and $d_\psi$; epochs $E$, batches per epoch $B$
        \STATE Initialize $\ww_1 = 1$
        \FOR{$t=1$ {\bfseries to} $E$}
        \STATE Initialize $\hat{\textbf{C}} = 0$, $\hat{\boldsymbol\mu} = 0$
        \FOR{$b=1$ {\bfseries to} $B$}
        \STATE Sample batches $(x^i_S,y^i_S)$ and $(x^i_T)$ of size s
        \STATE Maximize  $\mathcal{L}_{DA}^{\ww_t}$ w.r.t. $\theta$, minimize  $\mathcal{L}_{DA}^{\ww_t}$ w.r.t. $\psi$ and minimize  $\mathcal{L}_{C}^{\ww_t}$ w.r.t. $\theta$ and $\phi$
        \FOR{$i=1$ {\bfseries to} $s$}
        \STATE $\hat{\textbf{C}}_{\cdot y_S^i} \leftarrow \hat{\textbf{C}}_{\cdot y_S^i} + h_\phi(g_\theta(x_S^i))$ ($y_S^i$-th column) \quad and \quad $\hat{\boldsymbol\mu} \leftarrow \hat{\boldsymbol\mu} + h_\phi(g_\theta(x_T^i))$
        \ENDFOR
        \ENDFOR
        \STATE $\hat{\textbf{C}} \leftarrow \hat{\textbf{C}} / sB$ and $\hat{\boldsymbol\mu} \leftarrow \hat{\boldsymbol\mu} / sB$; \quad then \quad $\ww_{t+1} = \lambda\cdot QP(\hat{\textbf{C}}, \hat{\boldsymbol\mu}) + (1 - \lambda) \ww_t$
        \ENDFOR
    \end{algorithmic}
\end{algorithm}


Using $\ww$, we can define our first algorithm, \emph{Importance-Weighted Domain Adversarial Network} ($\iwdan$), that aligns $\dist^{\ww}_S(Z)$ and $\dist_T(Z)$) using a discriminator. To that end, we modify the $\dann$ losses $\mathcal{L}_{DA}$ and $\mathcal{L}_{C}$ as follows. For batches $(x^i_S,y^i_S)$ and $(x^i_T)$ of size $s$, the weighted DA loss is:\vspace{-0.2cm}
\begin{align}\label{eq:iwda_loss}
    \mathcal{L}_{DA}^{\ww}(x^i_S,y^i_S,x_T^i; \theta, \psi) = - \frac{1}{s} \displaystyle{\sum_{i = 1}^s} \ww_{y^i_S} \log(d_\psi(g_\theta(x_S^i)))
    + \log(1 - d_\psi(g_\theta(x_T^i))).
\end{align}
We verify in App.~\ref{lem:jsd}, that the standard ADA framework applied to $\mathcal{L}_{DA}^{\ww}$ indeed minimizes $\jsd(\dist^{\ww}_S(Z) \| \dist_T(Z))$. Our second algorithm, \emph{Importance-Weighted Joint Adaptation Networks} ($\iwjan$) is based on $\jan$~\citep{long2017deep} and follows the reweighting principle described in Section~\ref{sec:ipm} with $\FF$ a learnt RKHS (the exact $\jan$ and $\iwjan$ losses are specified in App. \ref{sub:losses}). Finally, our third algorithm is \emph{Importance-Weighted Conditional Domain Adversarial Network} ($\iwcdan$). It matches $\dist_S^{\ww}(\hat{Y} \otimes Z)$ with $\dist_T(\hat{Y} \otimes Z)$ by replacing the standard adversarial loss in $\cdan$ with Eq.~\ref{eq:iwda_loss}, where $d_\psi$ takes as input $(h_\phi \circ g_\theta) \otimes g_\theta$ instead of $g_\theta$.
The classifier loss for our three variants is:
\vspace{-0.1cm}
\begin{align}\label{eq:iwc_loss}
    \mathcal{L}_{C}^{\ww}(x^i_S,y^i_S; \theta, \phi) =
    -\frac{1}{s} \displaystyle{\sum_{i = 1}^s} \frac{1}{k\cdot\dist_S(Y = y)}\log(h_\phi(g_\theta(x_S^i))_{y^i_S}).
\end{align}
This reweighting is suggested by our theoretical analysis from Section \ref{sec:theory}, where we seek to minimize the balanced error rate $\ber{\dist_S}{\Ypred}{Y}$. We also define oracle versions, $\iwdano$, $\iwjano$ and $\iwcdano$, where the weights $\ww$ are the true weights. It gives an idealistic version of the reweighting method, and allows to assess the soundness of $\glsa$. $\iwdan$, $\iwjan$ and $\iwcdan$ are Alg.~\ref{alg:iwdan} with their respective loss functions, the oracle versions use the true weights instead of $\ww_t$.

\subsection{Experiments}
\label{sec:expe}
\begin{figure}[tb]
	\centering
	\includegraphics[width=0.49\linewidth]{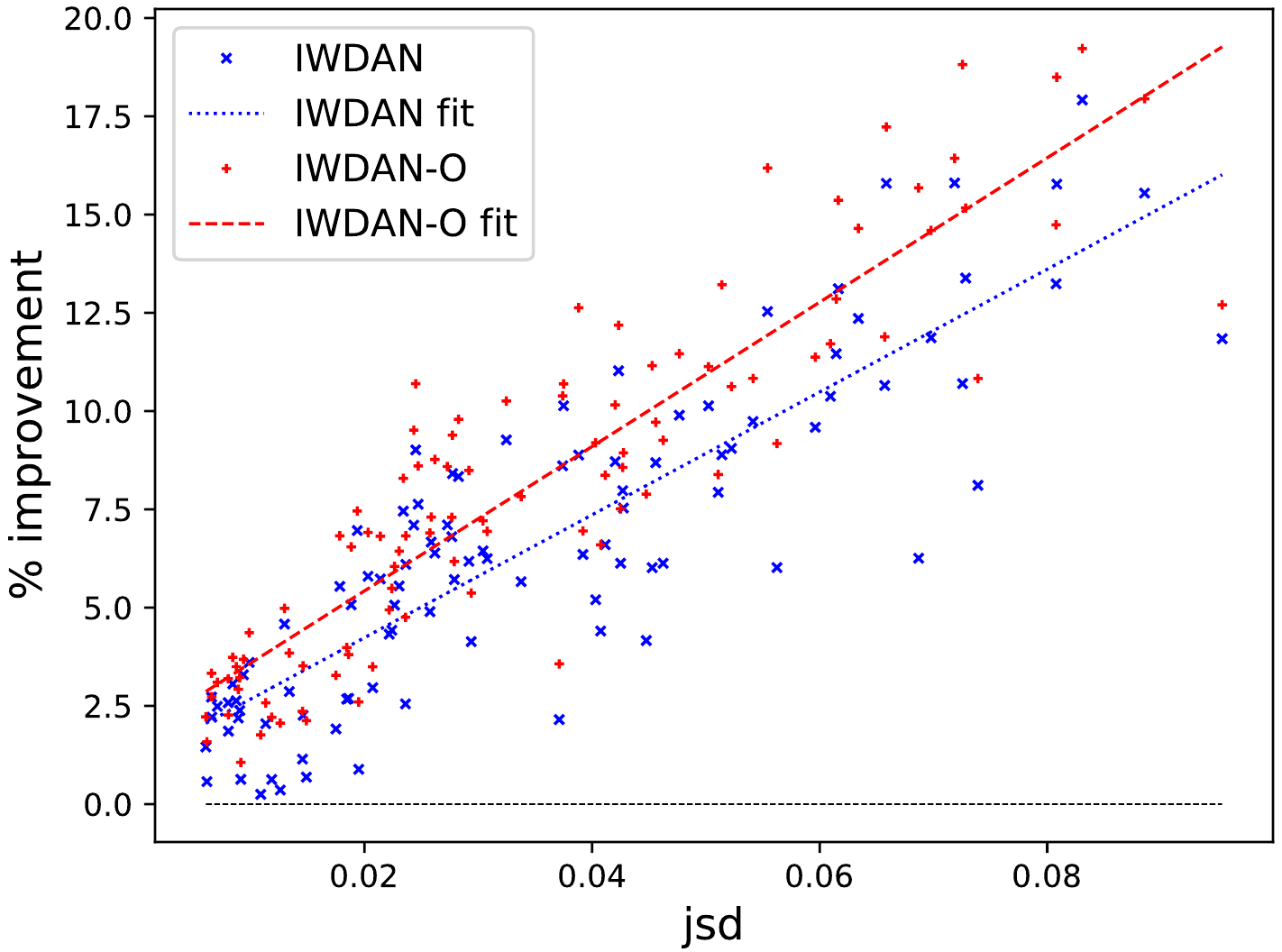}
	\includegraphics[width=0.48\linewidth]{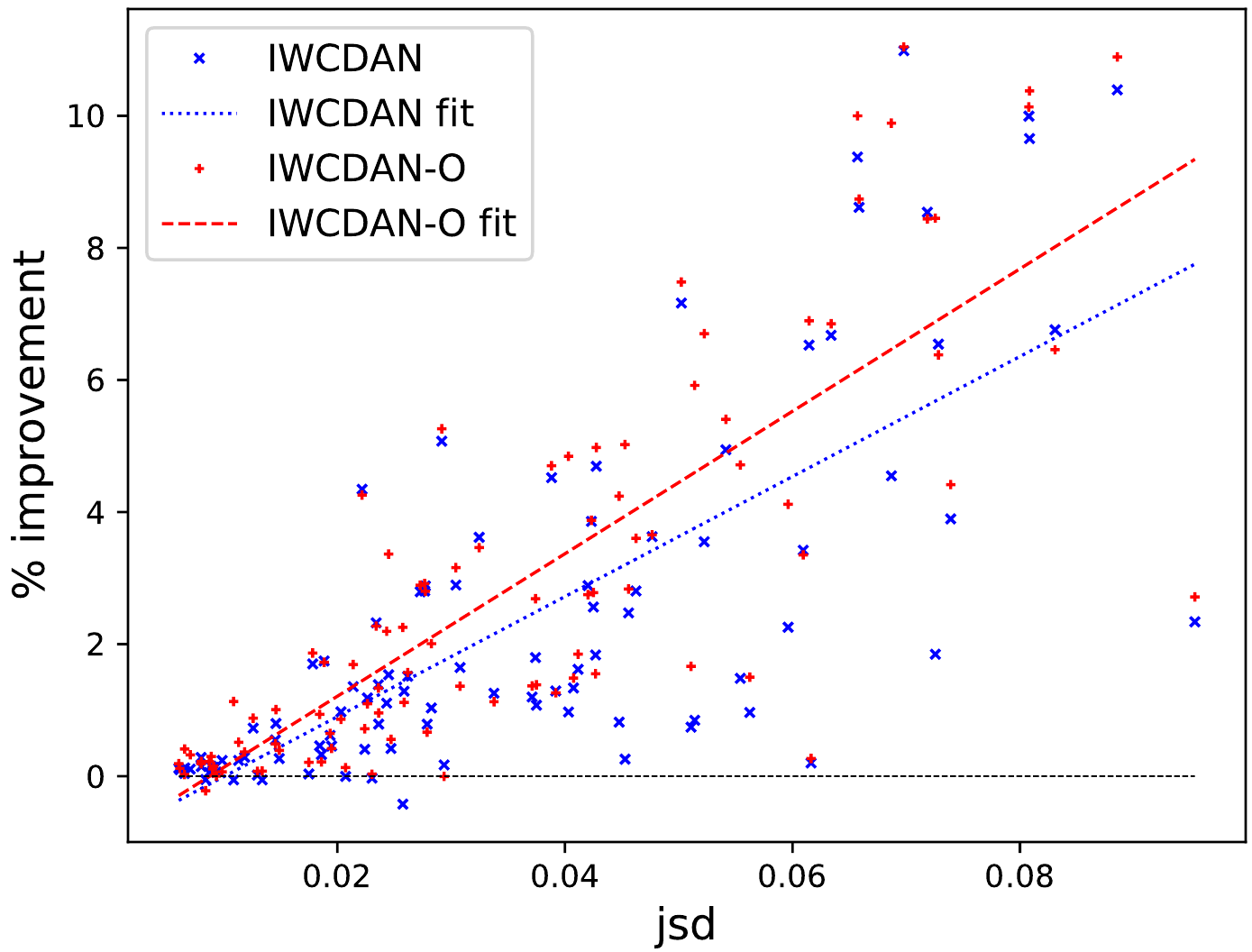}
	\caption{Gains of our algorithms vs their base versions (the horizontal grey line) for 100 tasks. The $x$-axis is \mbox{$\jsd(\domain_S^Y, \domain_T^Y)$}. The mean improvements for $\iwdan$ and $\iwdano$ (resp. $\iwcdan$ and $\iwcdano$) are $6.55\%$ and $8.14\%$ (resp. $2.25\%$ and $2.81\%$).}
	\label{fig:cloud}
\end{figure}

We apply our three base algorithms, their importance weighted versions, and the oracles to 4 standard DA datasets generating 21 tasks: Digits (MNIST $\leftrightarrow$ USPS~\citep{mnist,dua2017}), \citet{visda}, Office-31~\citep{conf/eccv/SaenkoKFD10} and Office-Home~\citep{home}. All values are averages over 5 runs of the best test accuracy throughout training (evaluated at the end of each epoch). We used that value for fairness with respect to the baselines (as shown in the left panel of Figure~\ref{fig:acc_distance}, the performance of $\dann$ decreases as training progresses, due to the inappropriate matching of representations showcased in Theorem~\ref{thm:lower_bound}). For full details, see App.~\ref{sub:datasets} and \ref{sub:imp}.

\textbf{Performance vs $\mathbf{\jsd}$}~~We artificially generate 100 tasks from MNIST and USPS by considering various random subsets of the classes in either the source or target domain (see Appendix~\ref{sub:jsd_generation} for details). These 100 DA tasks have a $\jsd(\domain_S^Y, \domain_T^Y)$ varying between $0$ and $0.1$. Applying $\iwdan$ and $\iwcdan$ results in Fig.~\ref{fig:cloud}. We see a clear correlation between the improvements provided by our algorithms and \mbox{$\jsd(\domain_S^Y, \domain_T^Y)$}, which is well aligned with Theorem \ref{thm:lower_bound}. Moreover, $\iwdan$ outperfoms $\dann$ on the $100$ tasks and $\iwcdan$ bests $\cdan$ on $94$. Even on small divergences, our algorithms do not suffer compared to their base versions.

\textbf{Original Datasets}~~The average results on each dataset are shown in Table~\ref{tab:full} (see App.\ref{sub:full} for the per-task breakdown). $\iwdan$ outperforms the basic algorithm $\dann$ by $1.75\%$, $1.64\%$, $1.16\%$ and $2.65\%$ on the Digits, Visda, Office-31 and Office-Home tasks respectively. Gains for $\iwcdan$ are more limited, but still present: $0.18\%$, $0.89\%$, $0.07\%$ and $1.07\%$ respectively. This might be explained by the fact that $\cdan$ enforces a weak form of $\glsa$ (App.~\ref{subsub:losses_cdan}). Gains for $\jan$ are $0.58\%$, $0.19\%$ and $0.19\%$. We also show the fraction of times (over all seeds and tasks) our variants outperform the original algorithms. Even for small gains, the variants provide consistent improvements. Additionally, the oracle versions show larger improvements, which strongly supports enforcing $\glsa$.
\begingroup
\setlength{\tabcolsep}{2pt} 
\begin{table*}[ht]
\caption{Average results on the various domains (Digits has 2 tasks, Visda 1, Office-31 6 and Office-Home 12). The prefix $s$ denotes the experiment where the source domain is subsampled to increase $\jsd(\domain_S^Y, \domain_T^Y)$. Each number is a mean over 5 seeds, the subscript denotes the fraction of times (out of $5~seeds \times \#tasks$) our algorithms outperform their base versions. $\jan$ is not available on Digits.}
\label{tab:full}
\begin{center}
\begin{footnotesize}
\begin{sc}
\begin{tabular}{lcc|cc|cc|cc}
\toprule
Method & Digits & $s$Digits & Visda & $s$Visda & O-31 & $s$O-31 & O-H & $s$O-H  \\
\midrule
No DA  & 77.17 & 75.67 & 48.39 & 49.02 & 77.81 & 75.72 & 56.39 & 51.34 \\
\midrule
DANN & 93.15 & 83.24 & 61.88 & 52.85 & 82.74 & 76.17 & 59.62 & 51.83 \\
IWDAN & \textbf{94.90}$_{100\%}$ & \textbf{92.54}$_{100\%}$ & \textbf{63.52}$_{100\%}$ & \textbf{60.18}$_{100\%}$ & \textbf{83.90}$_{87\%}$ & \textbf{82.60}$_{100\%}$ & \textbf{62.27}$_{97\%}$ & \textbf{57.61}$_{100\%}$ \\
IWDAN-O & 95.27$_{100\%}$ & 94.46$_{100\%}$ & 64.19$_{100\%}$ & 62.10$_{100\%}$ & 85.33$_{97\%}$ & 84.41$_{100\%}$ & 64.68$_{100\%}$ & 60.87$_{100\%}$ \\
\midrule
CDAN & 95.72 & 88.23 & 65.60 & 60.19 & 87.23 & 81.62 & 64.59 & 56.25 \\
IWCDAN & \textbf{95.90}$_{80\%}$ & \textbf{93.22}$_{100\%}$ & \textbf{66.49}$_{60\%}$ & \textbf{65.83}$_{100\%}$ & \textbf{87.30}$_{73\%}$ & \textbf{83.88}$_{100\%}$ & \textbf{65.66}$_{70\%}$ & \textbf{61.24}$_{100\%}$ \\
IWCDAN-O & 95.85$_{90\%}$ & 94.81$_{100\%}$ & 68.15$_{100\%}$ & 66.85$_{100\%}$ & 88.14$_{90\%}$ & 85.47$_{100\%}$ & 67.64$_{98\%}$ & 63.73$_{100\%}$ \\
\midrule
JAN & N/A & N/A & 56.98 & 50.64 & 85.13 & 78.21 & 59.59 & 53.94 \\
IWJAN & N/A & N/A & \textbf{57.56}$_{100\%}$ & \textbf{57.12}$_{100\%}$ & \textbf{85.32}$_{60\%}$ & \textbf{82.61}$_{97\%}$ & \textbf{59.78}$_{63\%}$ & \textbf{55.89}$_{100\%}$ \\
IWJAN-O & N/A & N/A & 61.48$_{100\%}$ & 61.30$_{100\%}$ & 87.14$_{100\%}$ & 86.24$_{100\%}$ & 60.73$_{92\%}$ & 57.36$_{100\%}$ \\
\bottomrule
\end{tabular}
\end{sc}
\end{footnotesize}
\end{center}
\vskip -0.1in
\vspace*{-0.5em}
\end{table*}
\endgroup
\begin{figure}[tb]
\centering
	\includegraphics[width=0.49\linewidth]{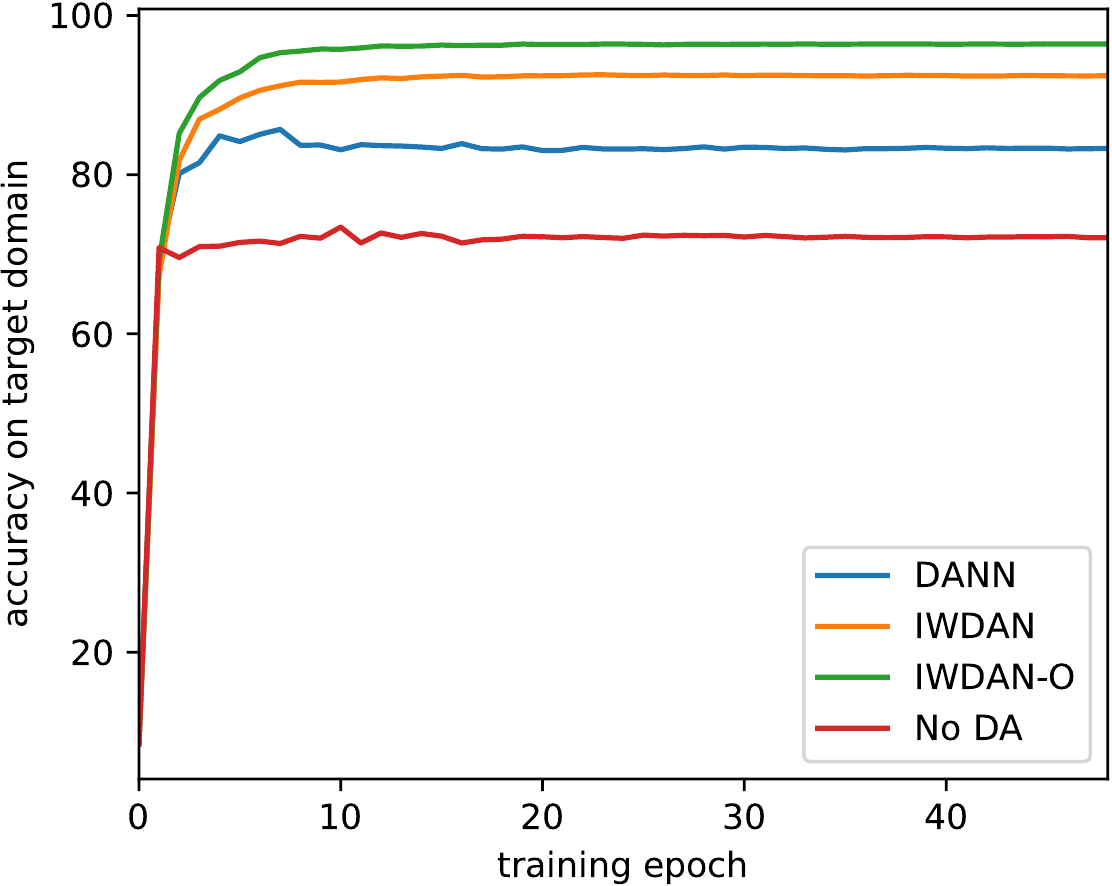}
	\includegraphics[width=0.49\linewidth]{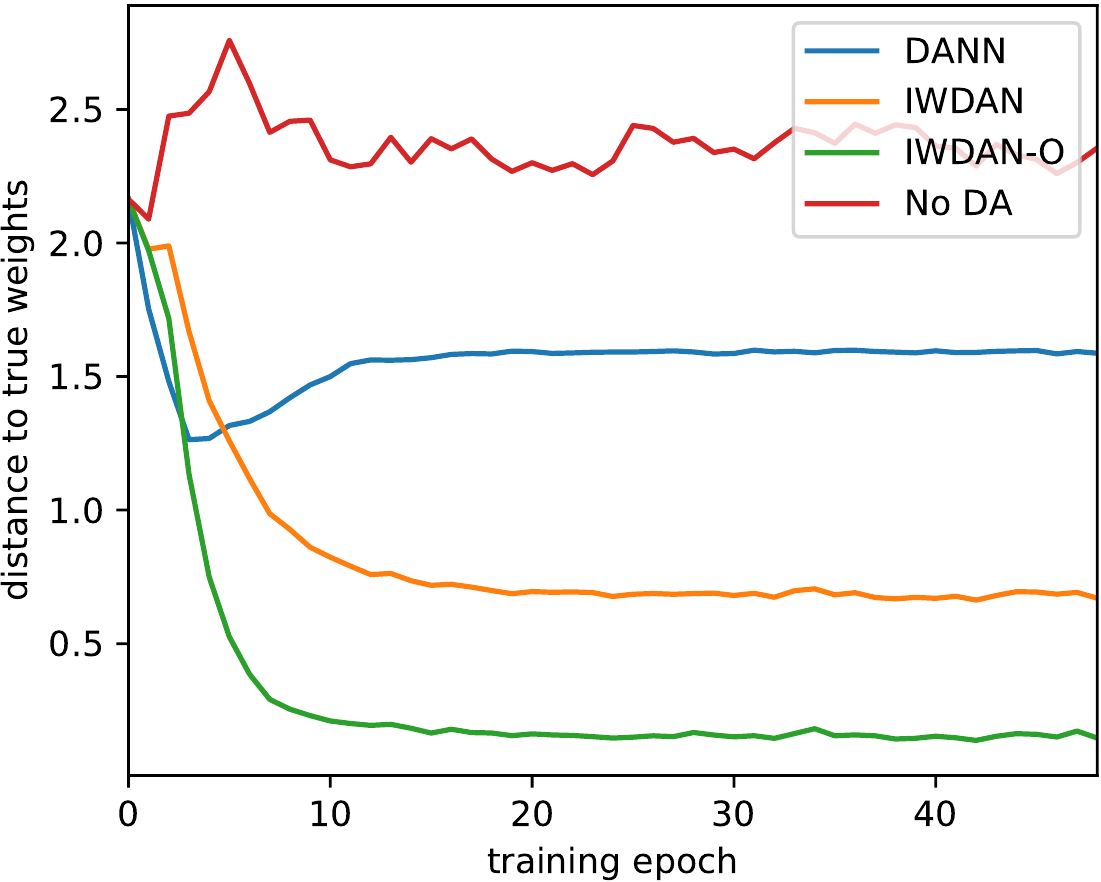}
	\caption{\textit{Left} Accuracy on sDigits. \textit{Right} Euclidian distance between estimated and true weights.}
	\label{fig:acc_distance}
\vspace*{-1em}
\end{figure}

\textbf{Subsampled datasets}~~The original datasets have fairly balanced classes, making the JSD between source and target label distributions $\jsd(\domain_S^Y~\|~\domain_T^Y)$ rather small (Tables \ref{tab:jsd_digits_full}, \ref{tab:jsd_office_full} and \ref{tab:jsd_home_full} in App.~\ref{sub:jsd}). To evaluate our algorithms on larger divergences, we arbitrarily modify the source domains above by considering only $30\%$ of the samples coming from the first half of their classes. This results in larger divergences (Tables \ref{tab:jsd_digits_sub}, \ref{tab:jsd_office_sub} and  \ref{tab:jsd_home_sub}). Performance is shown in Table~\ref{tab:full} (datasets prefixed by s). For $\iwdan$, we see gains of $9.3\%$, $7.33\%$, $6.43\%$ and $5.58\%$  on the digits, Visda, Office-31 and Office-Home datasets respectively. For $\iwcdan$, improvements are $4.99\%$, $5.64\%$, $2.26\%$ and $4.99\%$, and $\iwjan$ shows gains of $6.48\%$, $4.40\%$ and $1.95\%$. Moreover, on all seeds and tasks but one, our variants outperform their base versions.


\textbf{Importance weights} While our method demonstrates gains empirically, Lemma~\ref{lem:iw_equation} does not guarantee convergence of $\ww$ to the true weights. In Fig.~\ref{fig:acc_distance}, we show the test accuracy and distance between estimated and true weights during training on sDigits. We see that $\dann$'s performance gets worse after a few epoch, as predicted by Theorem~\ref{thm:lower_bound}. The representation matching objective collapses classes that are over-represented in the target domain on the under-represented ones (see App.~\ref{sub:confusion}). This phenomenon does not occur for $\iwdan$ and $\iwdano$. Both monotonously improve in accuracy and estimation (see Lemma~\ref{lem:convergence} and App.~\ref{sub:estimation} for more details). We also observe that $\iwdan$'s weights do not converge perfectly. This suggests that fine-tuning $\lambda$ (we used $\lambda = 0.5$ in all our experiments for simplicity) or updating $\ww$ more or less often could lead to better performance. 

\textbf{Ablation Study}~~Our algorithms have two components, a weighted adversarial loss $\mathcal{L}_{DA}^\ww$ and a weighted classification loss $\mathcal{L}_C^\ww$. In Table~\ref{tab:ablation}, we augment $\dann$ and $\cdan$ using those losses separately (with the true weights). We observe that $\dann$ benefits essentially from the reweighting of its adversarial loss $\mathcal{L}_{DA}^\ww$, the classification loss has little effect. For $\cdan$, gains are essentially seen on the subsampled datasets. Both losses help, with a $+2\%$ extra gain for $\mathcal{L}_{DA}^\ww$.
\begin{table}[tb]
    \centering
    \caption{Ablation study on the Digits tasks.}
    \label{tab:ablation}
    \begingroup
    \begin{tabular}{lcclcc}
        \toprule
        Method & Digits & sDigits & Method & Digits & sDigits \\
        \midrule
        DANN & 93.15 & 83.24 & CDAN & 95.72 & 88.23  \\
        DANN + $\mathcal{L}_{C}^{\ww}$  & 93.27 & 84.52 &  CDAN + $\mathcal{L}_{C}^{\ww}$  & 95.65 & 91.01 \\
        DANN + $\mathcal{L}_{DA}^{\ww}$ & \textbf{95.31} & \textbf{94.41} & CDAN + $\mathcal{L}_{DA}^{\ww}$ & 95.42 & 93.18  \\
        IWDAN-O  & \textbf{95.27} & \textbf{94.46} & IWCDAN-O & \textbf{95.85} & \textbf{94.81}  \\        \bottomrule
    \end{tabular}
    \endgroup
\vspace*{-0.5em}
\end{table}

\section{Related Work}
\label{sec:related}

Covariate shift has been studied and used in many adaptation algorithms~\citep{huang2006correcting,gretton2009covariate,ash2016unsupervised,adel2017unsupervised,tzeng2017adversarial,zhao2019deep,redko2019optimal}. While less known, label shift has also been tackled from various angles over the years: applying EM to learn $\dist^Y_T$~\citep{conf/ijcai/ChanN05}, placing a prior on the label distribution~\citep{storkey}, using kernel mean matching~\citep{,zhang2013domain,journals/nn/PlessisS14,conf/acml/NguyenPS15}, etc. \citet{conf/icml/ScholkopfJPSZM12} cast the problem in a causal/anti-causal perspective corresponding to covariate/label shift. That perspective was then further developed~\citep{zhang2013domain,gong2016domain,lipton2018detecting,conf/iclr/Azizzadenesheli19}. Numerous domain adaptation methods rely on learning invariant representations, and minimize various metrics on the marginal feature distributions: total variation or equivalently $\jsd$~\citep{ganin2016domain,tzeng2017adversarial,zhao2018adversarial,LiuLWJ19}, maximum mean discrepancy~\citep{gretton2012kernel,long2014transfer,long2015learning,long2016unsupervised,long2017deep}, Wasserstein distance~\citep{courty2017optimal,courty2017joint,shen2018wasserstein,lee2018minimax,chen2018re}, etc. Other noteworthy DA methods use reconstruction losses and cycle-consistency to learn transferable classifiers~\citep{conf/iccv/ZhuPIE17,hoffman2017cycada,conf/icml/XieZCC18}. Recently, \citet{LiuLWJ19} have introduced Transferable Adversarial Training (TAT), where transferable examples are generated to fill the gap in feature space between source and target domains, the datasets is then augmented with those samples. Applying our method to TAT is a future research direction.

Other relevant settings include partial ADA, i.e. UDA when target labels are a strict subset of the source labels / some components of $\ww$ are $0$~\citep{conf/cvpr/CaoL0J18,conf/eccv/CaoMLW18,conf/cvpr/CaoYLW019}. Multi-domain adaptation, where multiple source or target domains are given, is also very studied~\citep{mansour2009mixture,daume2009frustratingly,Nam_2016_CVPR,zhao2018multiple,guo2018multi,PengHSS19}. Recently, \citet{journals/corr/abs-1905-12760} study sample reweighting in the domain transfer to handle mass shifts between distributions.

Prior work on combining importance weight in domain-invariant representation learning also exists in the setting of partial DA~\citep{zhang2018importance}. However, the importance ratio in these works is defined over the features $Z$, rather than the class label $Y$. Compared to our method, this is both statistically inefficient and computationally expensive, since the feature space $\hdata$ is often a high-dimensional continuous space, whereas the label space $\ydata$ only contains a finite number ($k$) of distinct labels. In a separate work,~\citet{yan2017mind} proposed a weighted MMD distance to handle target shift in UDA. However, their weights are estimated based on pseudo-labels obtained from the learned classifier, hence it is not clear whether the pseudo-labels provide accurate estimation of the importance weights even in simple settings. As a comparison, under $\glsa$, we show that our weight estimation by solving a quadratic program converges asymptotically. 

\section{Conclusion and Future Work}
\label{sec:conclusion}
We have introduced the generalized label shift assumption, $\glsa$, and theoretically-grounded variations of existing algorithms to handle mismatched label distributions. On tasks from classic benchmarks as well as artificial ones, our algorithms consistently outperform their base versions. The gains, as expected theoretically, correlate well with the JSD between label distributions across domains. In real-world applications, the JSD is unknown, and might be larger than in ML datasets where classes are often purposely balanced. Being simple to implement and adding barely any computational cost, the robustness of our method to mismatched label distributions makes it very relevant to such applications.

\textbf{Extensions}~~The framework we define in this paper relies on appropriately reweighting the domain adversarial losses. It can be straightforwardly applied to settings where multiple source and/or target domains are used, by simply maintaining one importance weights vector $\ww$ for each source/target pair~\citep{zhao2018multiple,PengHSS19}. In particular, label shift could explain the observation from~\citet{zhao2018multiple} that too many source domains hurt performance, and our framework might alleviate the issue. One can also think of settings (e.g. semi-supervised domain adaptation) where estimations of $\dist_T^Y$ can be obtained via other means. A more challenging but also more interesting future direction is to extend our framework to \emph{domain generalization}, where the learner has access to multiple labeled source domains but no access to (even unlabelled) data from the target domain.

\section*{Acknowledgements}
The authors thank Romain Laroche and Alessandro Sordoni for useful feedback and helpful discussions. HZ and GG would like to acknowledge support from the DARPA XAI project, contract \#FA87501720152 and a Nvidia GPU grant.  YW would like acknowledge partial support from NSF Award \#2029626, a start-up grant from UCSB Department of Computer Science, as well as generous gifts from Amazon, Adobe, Google and NEC Labs.

\section*{Broader Impact}

Our work focuses on domain adaptation and attempts to properly handle mismatches in the label distributions between the source and target domains. Domain Adaptation as a whole aims at transferring knowledge gained from a certain domain (or data distribution) to another one. It can potentially be used in a variety of decision making systems, such as spam filters, machine translation, etc.. One can also potentially think of much more sensitive applications such as recidivism prediction, or loan approvals. 

While it is unclear to us to what extent DA is currently applied, or how it will be applied in the future, the bias formalized in Th.~\ref{thm:lower_bound} and verified in Table~\ref{tab:conf_dann} demonstrates that imbalances between classes will result in poor transfer performance of standard ADA methods on a subset of them, which is without a doubt a source of potential inequalities.
Our method is actually aimed at counter-balancing the effect of such imbalances. As shown in our empirical results (for instance Table~\ref{tab:conf_iwdan}) it is rather successful at it, especially on significant shifts. This makes us rather confident in the algorithm's ability to mitigate potential effects of biases in the datasets.
On the downside, failure in the weight estimation of some classes might result in poor performance on those. However, we have not observed, in any of our experiments, our method performing significantly worse than its base version. Finally, our method is a variation over existing deep learning algorithms. As such, it carries with it the uncertainties associated to deep learning models, in particular a lack of interpretability and of formal convergence guarantees.

\bibliographystyle{plainnat}
\bibliography{reference}

\begin{thebibliography}{67}
\providecommand{\natexlab}[1]{#1}
\providecommand{\url}[1]{\texttt{#1}}
\expandafter\ifx\csname urlstyle\endcsname\relax
  \providecommand{\doi}[1]{doi: #1}\else
  \providecommand{\doi}{doi: \begingroup \urlstyle{rm}\Url}\fi

\bibitem[Adel et~al.(2017)Adel, Zhao, and Wong]{adel2017unsupervised}
Tameem Adel, Han Zhao, and Alexander Wong.
\newblock Unsupervised domain adaptation with a relaxed covariate shift
  assumption.
\newblock In \emph{Thirty-First AAAI Conference on Artificial Intelligence},
  2017.

\bibitem[Arjovsky et~al.(2019)Arjovsky, Bottou, Gulrajani, and
  Lopez-Paz]{arjovsky2019invariant}
Martin Arjovsky, Léon Bottou, Ishaan Gulrajani, and David Lopez-Paz.
\newblock Invariant risk minimization, 2019.
\newblock URL \url{http://arxiv.org/abs/1907.02893}.
\newblock cite arxiv:1907.02893.

\bibitem[Ash et~al.(2016)Ash, Schapire, and Engelhardt]{ash2016unsupervised}
Jordan~T Ash, Robert~E Schapire, and Barbara~E Engelhardt.
\newblock Unsupervised domain adaptation using approximate label matching.
\newblock \emph{arXiv preprint arXiv:1602.04889}, 2016.

\bibitem[Azizzadenesheli et~al.(2019)Azizzadenesheli, Liu, Yang, and
  Anandkumar]{conf/iclr/Azizzadenesheli19}
Kamyar Azizzadenesheli, Anqi Liu, Fanny Yang, and Animashree Anandkumar.
\newblock Regularized learning for domain adaptation under label shifts.
\newblock In \emph{ICLR (Poster)}. OpenReview.net, 2019.
\newblock URL
  \url{http://dblp.uni-trier.de/db/conf/iclr/iclr2019.html#Azizzadenesheli19}.

\bibitem[Bachman et~al.(2019)Bachman, Hjelm, and
  Buchwalter]{journals/corr/abs-1906-00910}
Philip Bachman, R.~Devon Hjelm, and William Buchwalter.
\newblock Learning representations by maximizing mutual information across
  views.
\newblock \emph{CoRR}, abs/1906.00910, 2019.
\newblock URL
  \url{http://dblp.uni-trier.de/db/journals/corr/corr1906.html#abs-1906-00910}.

\bibitem[Ben-David et~al.(2007)Ben-David, Blitzer, Crammer, Pereira,
  et~al.]{ben2007analysis}
Shai Ben-David, John Blitzer, Koby Crammer, Fernando Pereira, et~al.
\newblock Analysis of representations for domain adaptation.
\newblock \emph{Advances in neural information processing systems},
  19:\penalty0 137, 2007.

\bibitem[Ben-David et~al.(2010)Ben-David, Blitzer, Crammer, Kulesza, Pereira,
  and Vaughan]{ben2010theory}
Shai Ben-David, John Blitzer, Koby Crammer, Alex Kulesza, Fernando Pereira, and
  Jennifer~Wortman Vaughan.
\newblock A theory of learning from different domains.
\newblock \emph{Machine learning}, 79\penalty0 (1-2):\penalty0 151--175, 2010.

\bibitem[Binkowski et~al.(2019)Binkowski, Hjelm, and
  Courville]{journals/corr/abs-1905-12760}
Mikolaj Binkowski, R.~Devon Hjelm, and Aaron~C. Courville.
\newblock Batch weight for domain adaptation with mass shift.
\newblock \emph{CoRR}, abs/1905.12760, 2019.
\newblock URL
  \url{http://dblp.uni-trier.de/db/journals/corr/corr1905.html#abs-1905-12760}.

\bibitem[Bousmalis et~al.(2016)Bousmalis, Trigeorgis, Silberman, Krishnan, and
  Erhan]{bousmalis2016domain}
Konstantinos Bousmalis, George Trigeorgis, Nathan Silberman, Dilip Krishnan,
  and Dumitru Erhan.
\newblock Domain separation networks.
\newblock In \emph{Advances in Neural Information Processing Systems}, pages
  343--351, 2016.

\bibitem[Bri\"et and Harremo\"es(2009)]{PhysRevA.79.052311}
Jop Bri\"et and Peter Harremo\"es.
\newblock Properties of classical and quantum jensen-shannon divergence.
\newblock \emph{Phys. Rev. A}, 79:\penalty0 052311, May 2009.
\newblock \doi{10.1103/PhysRevA.79.052311}.
\newblock URL \url{https://link.aps.org/doi/10.1103/PhysRevA.79.052311}.

\bibitem[Cao et~al.(2018{\natexlab{a}})Cao, Long, Wang, and
  Jordan]{conf/cvpr/CaoL0J18}
Zhangjie Cao, Mingsheng Long, Jianmin Wang, and Michael~I. Jordan.
\newblock Partial transfer learning with selective adversarial networks.
\newblock In \emph{CVPR}, pages 2724--2732. IEEE Computer Society,
  2018{\natexlab{a}}.
\newblock URL
  \url{http://dblp.uni-trier.de/db/conf/cvpr/cvpr2018.html#CaoL0J18}.

\bibitem[Cao et~al.(2018{\natexlab{b}})Cao, Ma, Long, and
  Wang]{conf/eccv/CaoMLW18}
Zhangjie Cao, Lijia Ma, Mingsheng Long, and Jianmin Wang.
\newblock Partial adversarial domain adaptation.
\newblock In Vittorio Ferrari, Martial Hebert, Cristian Sminchisescu, and Yair
  Weiss, editors, \emph{ECCV (8)}, volume 11212 of \emph{Lecture Notes in
  Computer Science}, pages 139--155. Springer, 2018{\natexlab{b}}.
\newblock ISBN 978-3-030-01237-3.
\newblock URL
  \url{http://dblp.uni-trier.de/db/conf/eccv/eccv2018-8.html#CaoMLW18}.

\bibitem[Cao et~al.(2019)Cao, You, Long, Wang, and Yang]{conf/cvpr/CaoYLW019}
Zhangjie Cao, Kaichao You, Mingsheng Long, Jianmin Wang, and Qiang Yang.
\newblock Learning to transfer examples for partial domain adaptation.
\newblock In \emph{CVPR}, pages 2985--2994. Computer Vision Foundation / IEEE,
  2019.
\newblock URL
  \url{http://dblp.uni-trier.de/db/conf/cvpr/cvpr2019.html#CaoYLW019}.

\bibitem[Chan and Ng(2005)]{conf/ijcai/ChanN05}
Yee~Seng Chan and Hwee~Tou Ng.
\newblock Word sense disambiguation with distribution estimation.
\newblock In Leslie~Pack Kaelbling and Alessandro Saffiotti, editors,
  \emph{IJCAI}, pages 1010--1015. Professional Book Center, 2005.
\newblock ISBN 0938075934.
\newblock URL
  \url{http://dblp.uni-trier.de/db/conf/ijcai/ijcai2005.html#ChanN05}.

\bibitem[Chen et~al.(2018)Chen, Liu, Wang, Wassell, and Chetty]{chen2018re}
Qingchao Chen, Yang Liu, Zhaowen Wang, Ian Wassell, and Kevin Chetty.
\newblock Re-weighted adversarial adaptation network for unsupervised domain
  adaptation.
\newblock In \emph{Proceedings of the IEEE Conference on Computer Vision and
  Pattern Recognition}, pages 7976--7985, 2018.

\bibitem[Courty et~al.(2017{\natexlab{a}})Courty, Flamary, Habrard, and
  Rakotomamonjy]{courty2017joint}
Nicolas Courty, R{\'e}mi Flamary, Amaury Habrard, and Alain Rakotomamonjy.
\newblock Joint distribution optimal transportation for domain adaptation.
\newblock In \emph{Advances in Neural Information Processing Systems}, pages
  3730--3739, 2017{\natexlab{a}}.

\bibitem[Courty et~al.(2017{\natexlab{b}})Courty, Flamary, Tuia, and
  Rakotomamonjy]{courty2017optimal}
Nicolas Courty, R{\'e}mi Flamary, Devis Tuia, and Alain Rakotomamonjy.
\newblock Optimal transport for domain adaptation.
\newblock \emph{IEEE transactions on pattern analysis and machine
  intelligence}, 39\penalty0 (9):\penalty0 1853--1865, 2017{\natexlab{b}}.

\bibitem[Daum{\'e}~III(2009)]{daume2009frustratingly}
Hal Daum{\'e}~III.
\newblock Frustratingly easy domain adaptation.
\newblock \emph{arXiv preprint arXiv:0907.1815}, 2009.

\bibitem[Dheeru and Karra(2017)]{dua2017}
Dua Dheeru and Efi Karra.
\newblock {UCI} machine learning repository, 2017.
\newblock URL \url{http://archive.ics.uci.edu/ml}.

\bibitem[du~Plessis and Sugiyama(2014)]{journals/nn/PlessisS14}
Marthinus~Christoffel du~Plessis and Masashi Sugiyama.
\newblock Semi-supervised learning of class balance under class-prior change by
  distribution matching.
\newblock \emph{Neural Networks}, 50:\penalty0 110--119, 2014.
\newblock URL
  \url{http://dblp.uni-trier.de/db/journals/nn/nn50.html#PlessisS14}.

\bibitem[Endres and Schindelin(2003)]{endres2003new}
Dominik~Maria Endres and Johannes~E Schindelin.
\newblock A new metric for probability distributions.
\newblock \emph{IEEE Transactions on Information theory}, 2003.

\bibitem[Ganin et~al.(2016)Ganin, Ustinova, Ajakan, Germain, Larochelle,
  Laviolette, Marchand, and Lempitsky]{ganin2016domain}
Yaroslav Ganin, Evgeniya Ustinova, Hana Ajakan, Pascal Germain, Hugo
  Larochelle, Fran{\c{c}}ois Laviolette, Mario Marchand, and Victor Lempitsky.
\newblock Domain-adversarial training of neural networks.
\newblock \emph{Journal of Machine Learning Research}, 17\penalty0
  (59):\penalty0 1--35, 2016.

\bibitem[Gong et~al.(2016)Gong, Zhang, Liu, Tao, Glymour, and
  Sch{\"o}lkopf]{gong2016domain}
Mingming Gong, Kun Zhang, Tongliang Liu, Dacheng Tao, Clark Glymour, and
  Bernhard Sch{\"o}lkopf.
\newblock Domain adaptation with conditional transferable components.
\newblock In \emph{International conference on machine learning}, pages
  2839--2848, 2016.

\bibitem[Goodfellow et~al.(2017)Goodfellow, Bengio, and
  Courville]{goodfellow2017learning}
Ian Goodfellow, Yoshua Bengio, and Aaron Courville.
\newblock \emph{Deep learning}.
\newblock 2017.
\newblock ISBN 9780262035613 0262035618.
\newblock URL
  \url{https://www.worldcat.org/title/deep-learning/oclc/985397543&referer=brief_results}.

\bibitem[Goodfellow et~al.(2014)Goodfellow, Pouget-Abadie, Mirza, Xu,
  Warde-Farley, Ozair, Courville, and Bengio]{goodfellow2014generative}
Ian~J. Goodfellow, Jean Pouget-Abadie, Mehdi Mirza, Bing Xu, David
  Warde-Farley, Sherjil Ozair, Aaron Courville, and Yoshua Bengio.
\newblock Generative adversarial networks, 2014.
\newblock URL \url{http://arxiv.org/abs/1406.2661}.
\newblock cite arxiv:1406.2661.

\bibitem[Gretton et~al.(2009)Gretton, Smola, Huang, Schmittfull, Borgwardt, and
  Sch{\"o}lkopf]{gretton2009covariate}
Arthur Gretton, Alex Smola, Jiayuan Huang, Marcel Schmittfull, Karsten
  Borgwardt, and Bernhard Sch{\"o}lkopf.
\newblock Covariate shift by kernel mean matching.
\newblock \emph{Dataset shift in machine learning}, 3\penalty0 (4):\penalty0 5,
  2009.

\bibitem[Gretton et~al.(2012)Gretton, Borgwardt, Rasch, Sch{\"o}lkopf, and
  Smola]{gretton2012kernel}
Arthur Gretton, Karsten~M Borgwardt, Malte~J Rasch, Bernhard Sch{\"o}lkopf, and
  Alexander Smola.
\newblock A kernel two-sample test.
\newblock \emph{Journal of Machine Learning Research}, 13\penalty0
  (Mar):\penalty0 723--773, 2012.

\bibitem[Guo et~al.(2018)Guo, Shah, and Barzilay]{guo2018multi}
Jiang Guo, Darsh~J Shah, and Regina Barzilay.
\newblock Multi-source domain adaptation with mixture of experts.
\newblock \emph{arXiv preprint arXiv:1809.02256}, 2018.

\bibitem[Hoffman et~al.(2017)Hoffman, Tzeng, Park, Zhu, Isola, Saenko, Efros,
  and Darrell]{hoffman2017cycada}
Judy Hoffman, Eric Tzeng, Taesung Park, Jun-Yan Zhu, Phillip Isola, Kate
  Saenko, Alexei~A Efros, and Trevor Darrell.
\newblock Cycada: Cycle-consistent adversarial domain adaptation.
\newblock \emph{arXiv preprint arXiv:1711.03213}, 2017.

\bibitem[Huang et~al.(2006)Huang, Gretton, Borgwardt, Sch{\"o}lkopf, and
  Smola]{huang2006correcting}
Jiayuan Huang, Arthur Gretton, Karsten~M Borgwardt, Bernhard Sch{\"o}lkopf, and
  Alex~J Smola.
\newblock Correcting sample selection bias by unlabeled data.
\newblock In \emph{Advances in neural information processing systems}, pages
  601--608, 2006.

\bibitem[LeCun et~al.(1998)LeCun, Bottou, Bengio, and Haffner]{726791}
Y.~LeCun, L.~Bottou, Y.~Bengio, and P.~Haffner.
\newblock Gradient-based learning applied to document recognition.
\newblock \emph{Proceedings of the IEEE}, 86\penalty0 (11):\penalty0
  2278--2324, 1998.
\newblock ISSN 0018-9219.
\newblock \doi{10.1109/5.726791}.

\bibitem[LeCun and Cortes(2010)]{mnist}
Yann LeCun and Corinna Cortes.
\newblock {MNIST} handwritten digit database.
\newblock http://yann.lecun.com/exdb/mnist/, 2010.
\newblock URL \url{http://yann.lecun.com/exdb/mnist/}.

\bibitem[Lee and Raginsky(2018)]{lee2018minimax}
Jaeho Lee and Maxim Raginsky.
\newblock Minimax statistical learning with wasserstein distances.
\newblock In \emph{Advances in Neural Information Processing Systems}, pages
  2692--2701, 2018.

\bibitem[Lin(1991)]{lin1991divergence}
Jianhua Lin.
\newblock {Divergence measures based on the Shannon entropy}.
\newblock \emph{IEEE Transactions on Information Theory}, 37\penalty0
  (1):\penalty0 145--151, 1991.

\bibitem[Lipton et~al.(2018)Lipton, Wang, and Smola]{lipton2018detecting}
Zachary Lipton, Yu-Xiang Wang, and Alexander Smola.
\newblock Detecting and correcting for label shift with black box predictors.
\newblock In \emph{International Conference on Machine Learning}, pages
  3128--3136, 2018.

\bibitem[Liu et~al.(2019)Liu, Long, Wang, and Jordan]{LiuLWJ19}
Hong Liu, Mingsheng Long, Jianmin Wang, and Michael~I. Jordan.
\newblock Transferable adversarial training: A general approach to adapting
  deep classifiers.
\newblock In Kamalika Chaudhuri and Ruslan Salakhutdinov, editors, \emph{ICML},
  volume~97 of \emph{Proceedings of Machine Learning Research}, pages
  4013--4022. PMLR, 2019.
\newblock URL
  \url{http://dblp.uni-trier.de/db/conf/icml/icml2019.html#LiuLWJ19}.

\bibitem[Long et~al.(2014)Long, Wang, Ding, Sun, and Yu]{long2014transfer}
Mingsheng Long, Jianmin Wang, Guiguang Ding, Jiaguang Sun, and Philip~S Yu.
\newblock Transfer joint matching for unsupervised domain adaptation.
\newblock In \emph{Proceedings of the IEEE conference on computer vision and
  pattern recognition}, pages 1410--1417, 2014.

\bibitem[Long et~al.(2015)Long, Cao, Wang, and Jordan]{long2015learning}
Mingsheng Long, Yue Cao, Jianmin Wang, and Michael Jordan.
\newblock Learning transferable features with deep adaptation networks.
\newblock In \emph{International Conference on Machine Learning}, pages
  97--105, 2015.

\bibitem[Long et~al.(2016)Long, Zhu, Wang, and Jordan]{long2016unsupervised}
Mingsheng Long, Han Zhu, Jianmin Wang, and Michael~I Jordan.
\newblock Unsupervised domain adaptation with residual transfer networks.
\newblock In \emph{Advances in Neural Information Processing Systems}, pages
  136--144, 2016.

\bibitem[Long et~al.(2017)Long, Zhu, Wang, and Jordan]{long2017deep}
Mingsheng Long, Han Zhu, Jianmin Wang, and Michael~I Jordan.
\newblock Deep transfer learning with joint adaptation networks.
\newblock In \emph{Proceedings of the 34th International Conference on Machine
  Learning-Volume 70}, pages 2208--2217. JMLR, 2017.

\bibitem[Long et~al.(2018)Long, Cao, Wang, and Jordan]{cdan}
Mingsheng Long, Zhangjie Cao, Jianmin Wang, and Michael~I. Jordan.
\newblock Conditional adversarial domain adaptation.
\newblock In Samy Bengio, Hanna~M. Wallach, Hugo Larochelle, Kristen Grauman,
  Nicolò Cesa-Bianchi, and Roman Garnett, editors, \emph{NeurIPS}, pages
  1647--1657, 2018.
\newblock URL
  \url{http://dblp.uni-trier.de/db/conf/nips/nips2018.html#LongC0J18}.

\bibitem[Mansour et~al.(2009)Mansour, Mohri, and
  Rostamizadeh]{mansour2009mixture}
Yishay Mansour, Mehryar Mohri, and Afshin Rostamizadeh.
\newblock Domain adaptation with multiple sources.
\newblock In \emph{Advances in neural information processing systems}, pages
  1041--1048, 2009.

\bibitem[McCoy et~al.(2019)McCoy, Pavlick, and Linzen]{linzen2019right}
R.~Thomas McCoy, Ellie Pavlick, and Tal Linzen.
\newblock Right for the wrong reasons: Diagnosing syntactic heuristics in
  natural language inference.
\newblock \emph{Proceedings of the ACL}, 2019.

\bibitem[M{\"u}ller(1997)]{muller1997integral}
Alfred M{\"u}ller.
\newblock Integral probability metrics and their generating classes of
  functions.
\newblock \emph{Advances in Applied Probability}, 29\penalty0 (2):\penalty0
  429--443, 1997.

\bibitem[Nam and Han(2016)]{Nam_2016_CVPR}
Hyeonseob Nam and Bohyung Han.
\newblock Learning multi-domain convolutional neural networks for visual
  tracking.
\newblock In \emph{The IEEE Conference on Computer Vision and Pattern
  Recognition (CVPR)}, June 2016.

\bibitem[Nguyen et~al.(2015)Nguyen, du~Plessis, and
  Sugiyama]{conf/acml/NguyenPS15}
Tuan~Duong Nguyen, Marthinus~Christoffel du~Plessis, and Masashi Sugiyama.
\newblock Continuous target shift adaptation in supervised learning.
\newblock In \emph{ACML}, volume~45 of \emph{JMLR Workshop and Conference
  Proceedings}, pages 285--300. JMLR.org, 2015.
\newblock URL
  \url{http://dblp.uni-trier.de/db/conf/acml/acml2015.html#NguyenPS15}.

\bibitem[Peng et~al.(2019)Peng, Huang, Sun, and Saenko]{PengHSS19}
Xingchao Peng, Zijun Huang, Ximeng Sun, and Kate Saenko.
\newblock Domain agnostic learning with disentangled representations.
\newblock In Kamalika Chaudhuri and Ruslan Salakhutdinov, editors, \emph{ICML},
  volume~97 of \emph{Proceedings of Machine Learning Research}, pages
  5102--5112. PMLR, 2019.
\newblock URL
  \url{http://dblp.uni-trier.de/db/conf/icml/icml2019.html#PengHSS19}.

\bibitem[Redko et~al.(2019)Redko, Courty, Flamary, and Tuia]{redko2019optimal}
Ievgen Redko, Nicolas Courty, R{\'e}mi Flamary, and Devis Tuia.
\newblock Optimal transport for multi-source domain adaptation under target
  shift.
\newblock In \emph{22nd International Conference on Artificial Intelligence and
  Statistics (AISTATS) 2019}, volume~89, 2019.

\bibitem[Saenko et~al.(2010)Saenko, Kulis, Fritz, and
  Darrell]{conf/eccv/SaenkoKFD10}
Kate Saenko, Brian Kulis, Mario Fritz, and Trevor Darrell.
\newblock Adapting visual category models to new domains.
\newblock In Kostas Daniilidis, Petros Maragos, and Nikos Paragios, editors,
  \emph{ECCV (4)}, volume 6314 of \emph{Lecture Notes in Computer Science},
  pages 213--226. Springer, 2010.
\newblock ISBN 978-3-642-15560-4.
\newblock URL
  \url{http://dblp.uni-trier.de/db/conf/eccv/eccv2010-4.html#SaenkoKFD10}.

\bibitem[Schölkopf et~al.(2012)Schölkopf, Janzing, Peters, Sgouritsa, Zhang,
  and Mooij]{conf/icml/ScholkopfJPSZM12}
Bernhard Schölkopf, Dominik Janzing, Jonas Peters, Eleni Sgouritsa, Kun Zhang,
  and Joris~M. Mooij.
\newblock On causal and anticausal learning.
\newblock In \emph{ICML}. icml.cc / Omnipress, 2012.
\newblock URL
  \url{http://dblp.uni-trier.de/db/conf/icml/icml2012.html#ScholkopfJPSZM12}.

\bibitem[Shen et~al.(2018)Shen, Qu, Zhang, and Yu]{shen2018wasserstein}
Jian Shen, Yanru Qu, Weinan Zhang, and Yong Yu.
\newblock Wasserstein distance guided representation learning for domain
  adaptation.
\newblock In \emph{Thirty-Second AAAI Conference on Artificial Intelligence},
  2018.

\bibitem[Storkey(2009)]{storkey}
Amos Storkey.
\newblock When training and test sets are different: Characterising learning
  transfer.
\newblock \emph{Dataset shift in machine learning.}, 2009.

\bibitem[Tzeng et~al.(2017)Tzeng, Hoffman, Saenko, and
  Darrell]{tzeng2017adversarial}
Eric Tzeng, Judy Hoffman, Kate Saenko, and Trevor Darrell.
\newblock Adversarial discriminative domain adaptation.
\newblock \emph{arXiv preprint arXiv:1702.05464}, 2017.

\bibitem[Venkateswara et~al.(2017)Venkateswara, Eusebio, Chakraborty, and
  Panchanathan]{home}
Hemanth Venkateswara, Jose Eusebio, Shayok Chakraborty, and Sethuraman
  Panchanathan.
\newblock Deep hashing network for unsupervised domain adaptation.
\newblock In \emph{({IEEE}) Conference on Computer Vision and Pattern
  Recognition ({CVPR})}, 2017.

\bibitem[Visda(2017)]{visda}
Visda.
\newblock Visual domain adaptation challenge, 2017.
\newblock URL \url{http://ai.bu.edu/visda-2017/}.

\bibitem[Xie et~al.(2018)Xie, Zheng, Chen, and Chen]{conf/icml/XieZCC18}
Shaoan Xie, Zibin Zheng, Liang Chen, and Chuan Chen.
\newblock Learning semantic representations for unsupervised domain adaptation.
\newblock In Jennifer~G. Dy and Andreas Krause, editors, \emph{ICML}, volume~80
  of \emph{Proceedings of Machine Learning Research}, pages 5419--5428. PMLR,
  2018.
\newblock URL
  \url{http://dblp.uni-trier.de/db/conf/icml/icml2018.html#XieZCC18}.

\bibitem[Yaghoobzadeh et~al.(2019)Yaghoobzadeh, des Combes, Hazen, and
  Sordoni]{forget}
Yadollah Yaghoobzadeh, Remi~Tachet des Combes, Timothy~J. Hazen, and Alessandro
  Sordoni.
\newblock Robust natural language inference models with example forgetting.
\newblock \emph{CoRR}, abs/1911.03861, 2019.
\newblock URL
  \url{http://dblp.uni-trier.de/db/journals/corr/corr1911.html#abs-1911-03861}.

\bibitem[Yan et~al.(2017)Yan, Ding, Li, Wang, Xu, and Zuo]{yan2017mind}
Hongliang Yan, Yukang Ding, Peihua Li, Qilong Wang, Yong Xu, and Wangmeng Zuo.
\newblock Mind the class weight bias: Weighted maximum mean discrepancy for
  unsupervised domain adaptation.
\newblock In \emph{Proceedings of the IEEE Conference on Computer Vision and
  Pattern Recognition}, pages 2272--2281, 2017.

\bibitem[Yosinski et~al.(2014)Yosinski, Clune, Bengio, and
  Lipson]{yosinski2014transferable}
Jason Yosinski, Jeff Clune, Yoshua Bengio, and Hod Lipson.
\newblock How transferable are features in deep neural networks?
\newblock In \emph{Advances in neural information processing systems}, pages
  3320--3328, 2014.

\bibitem[Zhang et~al.(2018)Zhang, Ding, Li, and Ogunbona]{zhang2018importance}
Jing Zhang, Zewei Ding, Wanqing Li, and Philip Ogunbona.
\newblock Importance weighted adversarial nets for partial domain adaptation.
\newblock In \emph{Proceedings of the IEEE Conference on Computer Vision and
  Pattern Recognition}, pages 8156--8164, 2018.

\bibitem[Zhang et~al.(2013)Zhang, Sch{\"o}lkopf, Muandet, and
  Wang]{zhang2013domain}
Kun Zhang, Bernhard Sch{\"o}lkopf, Krikamol Muandet, and Zhikun Wang.
\newblock Domain adaptation under target and conditional shift.
\newblock In \emph{International Conference on Machine Learning}, pages
  819--827, 2013.

\bibitem[Zhang et~al.(2015)Zhang, Yu, Chang, and
  Wang]{journals/corr/ZhangYCW15}
Xu~Zhang, Felix~X. Yu, Shih-Fu Chang, and Shengjin Wang.
\newblock Deep transfer network: Unsupervised domain adaptation.
\newblock \emph{CoRR}, abs/1503.00591, 2015.
\newblock URL
  \url{http://dblp.uni-trier.de/db/journals/corr/corr1503.html#ZhangYCW15}.

\bibitem[Zhao et~al.(2018{\natexlab{a}})Zhao, Zhang, Wu, Gordon,
  et~al.]{zhao2018multiple}
Han Zhao, Shanghang Zhang, Guanhang Wu, Geoffrey~J Gordon, et~al.
\newblock Multiple source domain adaptation with adversarial learning.
\newblock In \emph{International Conference on Learning Representations},
  2018{\natexlab{a}}.

\bibitem[Zhao et~al.(2018{\natexlab{b}})Zhao, Zhang, Wu, Moura, Costeira, and
  Gordon]{zhao2018adversarial}
Han Zhao, Shanghang Zhang, Guanhang Wu, Jos{\'e}~MF Moura, Joao~P Costeira, and
  Geoffrey~J Gordon.
\newblock Adversarial multiple source domain adaptation.
\newblock In \emph{Advances in Neural Information Processing Systems}, pages
  8568--8579, 2018{\natexlab{b}}.

\bibitem[Zhao et~al.(2019{\natexlab{a}})Zhao, Hu, Zhu, Coates, and
  Gordon]{zhao2019deep}
Han Zhao, Junjie Hu, Zhenyao Zhu, Adam Coates, and Geoff Gordon.
\newblock Deep generative and discriminative domain adaptation.
\newblock In \emph{Proceedings of the 18th International Conference on
  Autonomous Agents and MultiAgent Systems}, pages 2315--2317. International
  Foundation for Autonomous Agents and Multiagent Systems, 2019{\natexlab{a}}.

\bibitem[Zhao et~al.(2019{\natexlab{b}})Zhao, Tachet~des Combes, Zhang, and
  Gordon]{conf/icml/0002CZG19}
Han Zhao, Remi Tachet~des Combes, Kun Zhang, and Geoffrey~J. Gordon.
\newblock On learning invariant representations for domain adaptation.
\newblock In Kamalika Chaudhuri and Ruslan Salakhutdinov, editors, \emph{ICML},
  volume~97 of \emph{Proceedings of Machine Learning Research}, pages
  7523--7532. PMLR, 2019{\natexlab{b}}.
\newblock URL
  \url{http://dblp.uni-trier.de/db/conf/icml/icml2019.html#0002CZG19}.

\bibitem[Zhu et~al.(2017)Zhu, Park, Isola, and Efros]{conf/iccv/ZhuPIE17}
Jun-Yan Zhu, Taesung Park, Phillip Isola, and Alexei~A. Efros.
\newblock Unpaired image-to-image translation using cycle-consistent
  adversarial networks.
\newblock In \emph{ICCV}, pages 2242--2251. IEEE Computer Society, 2017.
\newblock ISBN 978-1-5386-1032-9.
\newblock URL
  \url{http://dblp.uni-trier.de/db/conf/iccv/iccv2017.html#ZhuPIE17}.

\end{thebibliography}

\clearpage
\appendix
\section{Omitted Proofs}
\label{sec:proofs}
In this section, we provide the theoretical material that completes the main text. 

\subsection{Definition}\label{sub:def}

\begin{definition}
Let us recall that for two distributions $\domain$ and $\domain'$, the Jensen-Shannon (JSD) divergence $\jsd(\domain~\|~\domain')$ is defined as:
\begin{equation*}
    \jsd(\domain~\|~\domain') \defeq \frac{1}{2}\kl(\domain~\|~\domain_M) + \frac{1}{2}\kl(\domain'~\|~\domain_M),
\end{equation*}
where $\kl(\cdot~\|~\cdot)$ is the Kullback–Leibler (KL) divergence and $\domain_M\defeq (\domain + \domain') / 2$. 
\end{definition}

\subsection{Consistency of the Weighted Domain Adaptation Loss \texorpdfstring{\eqref{eq:iwda_loss}}{alpha}}\label{sub:consistency}

For the sake of conciseness, we verify here that the domain adaptation training objective does lead to minimizing the Jensen-Shannon divergence between the weighted feature distribution of the source domain and the feature distribution of the target domain.
\begin{lemma}\label{lem:jsd} Let $p(x,y)$ and $q(x)$ be two density distributions, and $w(y)$ be a positive function such that $\int p(y) w(y) dy = 1$. Let $p^w(x) = \int p(x,y) w(y) dy$ denote the $w$-reweighted marginal distribution of $x$ under $p$. The minimum value of 
\begin{equation*}
    I(d) \defeq \mathbb{E}_{(x,y) \sim p, x'\sim q}[-w(y) \log(d(x)) - \log(1 - d(x'))]
\end{equation*} 
is $\log(4) - 2 \jsd(p^w(x)~\|~q(x))$, and is attained for $d^*(x) = \frac{p^w(x)}{p^w(x) + q(x)}$.
\begin{proof}
We see that:
\begin{align}
    I(d) &= - \iiint [w(y) \log(d(x)) + \log(1 - d(x'))] p(x,y) q(x') dx dx' dy \\
    &= - \int [\int w(y) p(x,y) dy] \log(d(x)) + q(x) \log(1 - d(x)) dx \\
    &= - \int p^w(x) \log(d(x)) + q(x) \log(1 - d(x)) dx.
\end{align}
From the last line, we follow the exact method from \citet{goodfellow2014generative} to see that point-wise in $x$ the minimum is attained for $d^*(x) = \frac{p^w(x)}{p^w(x) + q(x)}$ and that $I(d^*) = \log(4) - 2 \jsd(p^w(x)~\|~q(x))$.
\end{proof}
Applying Lemma \ref{lem:jsd} to $\dist_S(Z,Y)$ and $\dist_T(Z)$ proves that the domain adaptation objective leads to minimizing $\jsd(\dist^w_S(Z)~\|~\dist_T(Z))$.
\end{lemma}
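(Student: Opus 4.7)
The plan is to reduce $I(d)$ to a form that depends on $x$ only through the two densities $p^w(x)$ and $q(x)$, and then apply a pointwise minimization argument that mirrors the classical GAN analysis of \citet{goodfellow2014generative}. The key observation is that the variable $y$ appears in $I(d)$ only through the weight $w(y)$ multiplying $\log(d(x))$, so after expanding the expectation as a double integral and integrating out $y$, the $y$-dependence collapses into exactly the reweighted marginal $p^w(x) = \int p(x,y)\,w(y)\,dy$.

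\textbf{Step 1: Rewrite as a single integral over $x$.} I would write $I(d)$ as a triple integral over $(x, y, x')$ against $p(x,y)\,q(x')$. By Tonelli (all integrands below are nonnegative up to a sign we can control, or we can work under standard integrability assumptions), I separate the two terms: the first term becomes $-\int\!\!\int w(y) p(x,y)\log d(x)\,dy\,dx = -\int p^w(x) \log d(x)\,dx$, using the definition of $p^w$; the second term becomes $-\int q(x')\log(1-d(x'))\,dx'$ after the trivial $y$-integration (using $\int p(x,y)\,dx\,dy = 1$) and the $x$-integration against $p(x,y)$. Renaming $x' \to x$ gives
\begin{equation*}
I(d) = -\int \bigl[\, p^w(x)\log d(x) + q(x)\log(1-d(x)) \,\bigr]\,dx.
\end{equation*}

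\textbf{Step 2: Pointwise minimization.} For each fixed $x$, the integrand is of the form $-a\log t - b\log(1-t)$ with $a = p^w(x) \ge 0$, $b = q(x) \ge 0$, and $t = d(x) \in (0,1)$. Differentiating in $t$ and setting the derivative to zero yields $t^\star = a/(a+b)$, i.e.\ $d^\star(x) = p^w(x)/(p^w(x)+q(x))$. A second-derivative (or convexity) check confirms this is a minimum. Since the minimization is pointwise and unconstrained across $x$, $d^\star$ is the global minimizer of $I$.

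\textbf{Step 3: Identify the minimum value with the JSD.} Substituting $d^\star$ back into the integrand and using $\log 2 = -\log(1/2)$, I would rewrite
\begin{equation*}
-p^w(x)\log\frac{p^w(x)}{p^w(x)+q(x)} - q(x)\log\frac{q(x)}{p^w(x)+q(x)} = \log 4 \cdot \tfrac{p^w(x)+q(x)}{2} - \bigl[\text{KL terms to the mixture}\bigr],
\end{equation*}
so that integrating in $x$ and recognizing the Kullback--Leibler divergences of $p^w$ and $q$ to the mixture $M = (p^w + q)/2$ yields $I(d^\star) = \log 4 - 2\jsd(p^w \,\|\, q)$ by the definition of $\jsd$ recalled in Appendix~\ref{sub:def}.

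I do not anticipate any real obstacle: the argument is essentially a straightforward adaptation of the GAN minimax computation, with the only new ingredient being the Fubini step that folds $w(y)$ into $p^w(x)$. The mild care needed is to note that the condition $\int p(y)w(y)\,dy = 1$ ensures $p^w$ is a bona fide probability density, so that the JSD on the right-hand side is well defined; this is exactly why the normalization assumption on $w$ appears in the hypotheses.
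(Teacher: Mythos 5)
Your proposal is correct and follows essentially the same route as the paper's proof: integrate out $y$ and $x'$ to collapse $I(d)$ into $-\int p^w(x)\log d(x) + q(x)\log(1-d(x))\,dx$, then apply the pointwise GAN-style minimization. The only difference is that you spell out the pointwise optimization and the JSD identification explicitly, whereas the paper defers those steps to \citet{goodfellow2014generative}.
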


\subsection{\texorpdfstring{$k$}{alpha}-class information-theoretic lower bound}\label{sub:low_bound_generalization}
In this section, we prove Theorem \ref{thm:lower_bound} that extends previous result to the general $k$-class classification problem.
\ll*
\begin{proof}
We essentially follow the proof from \citet{conf/icml/0002CZG19}, except for Lemmas 4.6 that needs to be adapted to the $\cdan$ framework and Lemma 4.7 to $k$-class classification. 

Lemma 4.6 from \citet{conf/icml/0002CZG19} states that $\jsd(\domain^{\widehat{Y}}_S,\domain^{\widehat{Y}}_T) \leq \jsd(\domain^Z_S, \domain^Z_T)$, which covers the case $\widetilde{Z} = Z$. 

When $\widetilde{Z} = \Ypred \otimes Z$, let us first recall that we assume $h$ or equivalently $\Ypred$ to be a one-hot prediction of the class. We have the following Markov chain:
\begin{equation*}
    X \overset{g}{\longrightarrow} Z \overset{\tilde{h}}{\longrightarrow} \widetilde{Z} \overset{l}{\longrightarrow} \Ypred, \label{eq:lower_bound_markov_chain}
\end{equation*}
where $\tilde{h}(z) = h(z) \otimes z$ and $l:\yyspace\otimes\hdata\to\yyspace$ returns the index of the non-zero block in $\tilde{h}(z)$. There is only one such block since $h$ is a one-hot, and its index corresponds to the class predicted by $h$. Given the definition of $l$, we clearly see that $\Ypred$ is independent of $X$ knowing $\widetilde{Z}$. We can now apply the same proof than in \citet{conf/icml/0002CZG19} to conclude that:
\begin{equation}\label{eq:generalized_DPI}
    \jsd(\domain^{\Ypred}_S,\domain^{\Ypred}_T) \leq \jsd(\domain^{\widetilde{Z}}_S, \domain^{\widetilde{Z}}_T).
\end{equation}
It essentially boils down to a data-processing argument: the discrimination distance between two distributions cannot increase after the same (possibly stochastic) channel (kernel) is applied to both. Here, the channel corresponds to the (potentially randomized) function $l$. 

\paragraph{Remark} Additionally, we note that the above inequality holds for \emph{any} $\tilde{Z}$ such that $\Ypred = l(\widetilde{Z})$ for a (potentially randomized) function l. This covers any and all potential combinations of representations at various layers of the deep net, including the last layer (which corresponds to its predictions $\Ypred$).

Let us move to the second part of the proof. We wish to show that $\jsd(\domain^Y, \domain^{\Ypred}) \leq \err(h\circ g)$, where $\domain$ can be either $\domain_S$ or $\domain_T$:
\begin{align}
    2\jsd(\domain^Y, \domain^{\widehat{Y}}) & \leq \|\domain^Y - \domain^{\widehat{Y}}\|_1 && \text{\citep{lin1991divergence}} \nonumber \\
    &= \displaystyle{\sum_{i=1}^{k}} |\domain(\widehat{Y}=i) - \domain(Y=i)| \nonumber \\
    &= \displaystyle{\sum_{i=1}^{k}} |\displaystyle{\sum_{j=1}^{k}} \domain(\widehat{Y}=i | Y=j) \domain(Y=j) - \domain(Y=i)| \nonumber \\
    &= \displaystyle{\sum_{i=1}^{k}} |\domain(\widehat{Y}=i | Y=i) \domain(Y=i) - \domain(Y=i) + \displaystyle{\sum_{j \neq i}} \domain(\widehat{Y}=i | Y=j) \domain(Y=j)| \nonumber \\ 
    &\leq \displaystyle{\sum_{i=1}^{k}} |\domain(\widehat{Y}=i | Y=i) - 1| \domain(Y=i) + \displaystyle{\sum_{i=1}^{k}} \displaystyle{\sum_{j \neq i}} \domain(\widehat{Y}=i | Y=j) \domain(Y=j) \nonumber \\
    &= \displaystyle{\sum_{i=1}^{k}} \domain(\widehat{Y} \neq Y | Y=i) \domain(Y=i) + \displaystyle{\sum_{j=1}^{k}} \displaystyle{\sum_{i \neq j}} \domain(\widehat{Y}=i | Y=j) \domain(Y=j) \nonumber \\
    &= 2 \displaystyle{\sum_{i=1}^{k}} \domain(\widehat{Y} \neq Y | Y=i) \domain(Y=i) = 2 \domain(\widehat{Y} \neq Y) = 2 \err(h\circ g). \label{eq:jsd_to_perf}
\end{align}
We can now apply the triangular inequality to $\sqrt{\jsd}$, which is a distance metric~\citep{endres2003new}, called the Jensen-Shannon distance. This gives us:
\begin{align*}
    \sqrt{\jsd(\domain_S^Y, \domain_T^Y)} & \leq \sqrt{\jsd(\domain_S^Y, \domain_S^{\widehat{Y}})} + \sqrt{\jsd(\domain_S^{\widehat{Y}}, \domain_T^{\widehat{Y}})} + \sqrt{\jsd(\domain_T^{\widehat{Y}}, \domain_T^Y)} \\
    & \leq \sqrt{\jsd(\domain_S^Y, \domain_S^{\widehat{Y}})} + \sqrt{\jsd(\domain^{\widetilde{Z}}_S, \domain^{\widetilde{Z}}_T)} + \sqrt{\jsd(\domain_T^{\widehat{Y}}, \domain_T^Y)} \\
    & \leq \sqrt{\err_S(h\circ g)} + \sqrt{\jsd(\domain^{\widetilde{Z}}_S, \domain^{\widetilde{Z}}_T)} + \sqrt{\err_T(h\circ g)}.
\end{align*}
where we used Equation \eqref{eq:generalized_DPI} for the second inequality and \eqref{eq:jsd_to_perf} for the third.

Finally, assuming that $\jsd(\domain_S^Y, \domain_T^Y)\geq \jsd(\domain_S^{\widetilde{Z}}, \domain_T^{\widetilde{Z}})$, we get:
\begin{align*}
    \left(\sqrt{\jsd(\domain_S^Y, \domain_T^Y)} - \sqrt{\jsd(\domain^{\widetilde{Z}}_S, \domain^{\widetilde{Z}}_T)} \right)^2
    \leq \left(\sqrt{\err_S(h\circ g)} + \sqrt{\err_T(h\circ g)} \right)^2 \leq 2 \left(\err_S(h\circ g) + \err_T(h\circ g)\right).
\end{align*}
which concludes the proof.
\end{proof}

\subsection{Proof of Theorem~\ref{thm:errorgap}}
To simplify the notation, we define the error gap $\errgap(\Ypred)$ as follows:
\begin{equation*}
    \errgap(\Ypred)\defeq |\err_S(\Ypred) - \err_T(\Ypred)|.
\end{equation*}
Also, in this case we use $\dist_a,~ a\in\{S, T\}$ to mean the source and target distributions respectively. Before we give the proof of Theorem~\ref{thm:errorgap}, we first prove the following two lemmas that will be used in the proof.
\begin{lemma}
\label{lemma:first}
Define $\gamma_{a,j}\defeq \dist_a(Y = j), \forall a\in\{S, T\}, \forall j\in[k]$, then $\forall \alpha_j, \beta_j \geq 0$ such that $\alpha_j + \beta_j = 1$, and $\forall i\neq j$, the following upper bound holds:
\begin{align*}
&|\gamma_{S,j} \dist_S(\Ypred = i\mid Y = j) - \gamma_{T,j}\dist_T(\Ypred = i\mid Y = j)| \leq \\ 
&\hspace{0.8cm} |\gamma_{S,j} - \gamma_{T,j}|\cdot\left(\alpha_j\dist_S(\Ypred = i\mid Y = j) + \beta_j\dist_T(\Ypred = i\mid Y = j)\right)
+ \gamma_{S,j}\beta_j \cegap(\Ypred) + \gamma_{T,j}\alpha_j \cegap(\Ypred).
\end{align*}
\end{lemma}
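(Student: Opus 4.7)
The plan is to prove the inequality by a short algebraic decomposition followed by the triangle inequality and a direct application of the definition of $\cegap(\Ypred)$. Introduce the shorthand $p_S \defeq \dist_S(\Ypred = i \mid Y = j)$ and $p_T \defeq \dist_T(\Ypred = i \mid Y = j)$, so that the quantity to bound is simply $|\gamma_{S,j}\, p_S - \gamma_{T,j}\, p_T|$. The right-hand side of the target inequality suggests writing the difference as a linear combination of (i) a "marginal-shift" term proportional to $\gamma_{S,j} - \gamma_{T,j}$, and (ii) a "conditional-shift" term proportional to $p_S - p_T$.

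First I would verify the identity
\begin{equation*}
\gamma_{S,j}\, p_S - \gamma_{T,j}\, p_T \;=\; (\gamma_{S,j} - \gamma_{T,j})\bigl(\alpha_j p_S + \beta_j p_T\bigr) \;+\; \bigl(\gamma_{S,j}\beta_j + \gamma_{T,j}\alpha_j\bigr)\bigl(p_S - p_T\bigr),
\end{equation*}
which holds by direct expansion using $\alpha_j + \beta_j = 1$ (the cross terms $\gamma_{S,j}\beta_j p_T$ and $\gamma_{T,j}\alpha_j p_S$ cancel). This is essentially an "add and subtract" trick, and it is the only genuinely creative step in the proof.

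Next I would take absolute values on both sides and apply the triangle inequality. Since $\alpha_j, \beta_j, p_S, p_T \geq 0$, the factor $\alpha_j p_S + \beta_j p_T$ is non-negative and can be pulled out of the absolute value without change. Similarly, $\gamma_{S,j}\beta_j + \gamma_{T,j}\alpha_j \geq 0$ can be pulled out. This yields
\begin{equation*}
|\gamma_{S,j}\, p_S - \gamma_{T,j}\, p_T| \;\leq\; |\gamma_{S,j} - \gamma_{T,j}|\bigl(\alpha_j p_S + \beta_j p_T\bigr) + \bigl(\gamma_{S,j}\beta_j + \gamma_{T,j}\alpha_j\bigr)\,|p_S - p_T|.
\end{equation*}

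Finally, because the hypothesis restricts to $i \neq j$, the definition of $\cegap(\Ypred)$ immediately gives $|p_S - p_T| \leq \cegap(\Ypred)$. Substituting this bound and distributing yields exactly the claimed inequality. There is no real obstacle here; the only subtlety is guessing the correct decomposition in the first step, but the form of the target bound essentially dictates it.
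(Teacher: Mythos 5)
Your proposal is correct and follows essentially the same route as the paper's proof. Both add and subtract the mixture $(\gamma_{S,j} - \gamma_{T,j})(\alpha_j p_S + \beta_j p_T)$, observe that the residual simplifies to $(\gamma_{S,j}\beta_j + \gamma_{T,j}\alpha_j)(p_S - p_T)$ because $\alpha_j + \beta_j = 1$, apply the triangle inequality, and then invoke the definition of $\cegap(\Ypred)$ for $i \neq j$; you simply exhibit the decomposition as a single algebraic identity up front, whereas the paper reaches it by defining the mixture distribution $\dist_j$ and massaging the two residuals $\gamma_{S,j}(p_S - \dist_j)$ and $\gamma_{T,j}(p_T - \dist_j)$ separately.
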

\begin{proof}
To make the derivation uncluttered, define $\dist_j(\Ypred = i)\defeq \alpha_j\dist_S(\Ypred = i\mid Y = j) + \beta_j\dist_T(\Ypred = i\mid Y = j)$ to be the mixture conditional probability of $\Ypred = i$ given $Y = j$, where the mixture weight is given by $\alpha_j$ and $\beta_j$. Then in order to prove the upper bound in the lemma, it suffices if we give the desired upper bound for the following term
\begin{align*}
    & \left||\gamma_{S,j} \dist_S(\Ypred = i\mid Y = j) - \gamma_{T,j}\dist_T(\Ypred = i\mid Y = j)| - |(\gamma_{S,j} - \gamma_{T,j}) \dist_j(\Ypred = i)| \right|    \\
    &\leq  \left|\left(\gamma_{S,j} \dist_S(\Ypred = i\mid Y = j) - \gamma_{T,j}\dist_T(\Ypred = i\mid Y = j)\right) - (\gamma_{S,j} - \gamma_{T,j}) \dist_j(\Ypred = i) \right|    \\
    &= \left|\gamma_{S,j} (\dist_S(\Ypred = i\mid Y = j) - \dist_j(\Ypred = i)) - \gamma_{T,j} (\dist_T(\Ypred = i\mid Y = j) - \dist_j(\Ypred = i))\right|,
\end{align*}
following which we will have:
\begin{align*}
    &~ |\gamma_{S,j} \dist_S(\Ypred = i\mid Y = j) - \gamma_{T,j}\dist_T(\Ypred = i\mid Y = j)| \leq |(\gamma_{S,j} - \gamma_{T,j}) \dist_j(\Ypred = i)| \\
    & + \left|\gamma_{S,j} (\dist_S(\Ypred = i\mid Y = j) - \dist_j(\Ypred = i)) - \gamma_{T,j} (\dist_T(\Ypred = i\mid Y = j) - \dist_j(\Ypred = i))\right| \\
    &\leq |\gamma_{S,j} - \gamma_{T,j}|\left(\alpha_j\dist_S(\Ypred = i\mid Y = j) + \beta_j\dist_T(\Ypred = i\mid Y = j)\right) \\
    &+ \gamma_{S,j}\left|\dist_S(\Ypred = i\mid Y = j) - \dist_j(\Ypred = i)\right| + \gamma_{T,j}\left|\dist_T(\Ypred = i\mid Y = j) - \dist_j(\Ypred = i)\right|.
\end{align*}
To proceed, let us first simplify $\dist_S(\Ypred = i\mid Y = j) - \dist_j(\Ypred = i)$. By definition of $\dist_j(\Ypred = i) = \alpha_j\dist_S(\Ypred = i\mid Y = j) + \beta_j \dist_T(\Ypred = i\mid Y = j)$, we know that:
\begin{align*}
    &~\dist_S(\Ypred = i\mid Y = j) - \dist_j(\Ypred = i) \\
    =&~ \dist_S(\Ypred = i\mid Y = j) - \big(\alpha_j\dist_S(\Ypred = i\mid Y = j) + \beta_j \dist_T(\Ypred = i\mid Y = j)\big) \\
    =&~ \big(\dist_S(\Ypred = i\mid Y = j) - \alpha_j\dist_S(\Ypred = i\mid Y = j)\big) - \beta_j\dist_T(\Ypred = i\mid Y = j) \\
    =&~ \beta_j\big(\dist_S(\Ypred = i\mid Y = j) - \dist_T(\Ypred = i\mid Y = j)\big).
\end{align*}
Similarly, for the second term $\dist_T(\Ypred = i\mid Y = j) - \dist_j(\Ypred = i)$, we can show that:
\begin{equation*}
    \dist_T(\Ypred = i\mid Y = j) - \dist_j(\Ypred = i) = \alpha_j\big(\dist_T(\Ypred = i\mid Y = j) - \dist_S(\Ypred = i\mid Y = j)\big).
\end{equation*}
Plugging these two identities into the above, we can continue the analysis with
\begin{align*}
     & \left|\gamma_{S,j} (\dist_S(\Ypred = i\mid Y = j) - \dist_j(\Ypred = i)) - \gamma_{T,j} (\dist_T(\Ypred = i\mid Y =j) - \dist_j(\Ypred = i))\right| \\
    &= \left|\gamma_{S,j} \beta(\dist_S(\Ypred = i\mid Y = j) - \dist_T(\Ypred = i\mid Y = j)) - \gamma_{T,j} \alpha_j(\dist_T(\Ypred = i\mid Y = j) - \dist_S(\Ypred = i\mid Y = j))\right| \\
    &\leq \left|\gamma_{S,j} \beta_j(\dist_S(\Ypred = i\mid Y = j) - \dist_T(\Ypred = i\mid Y = j))\right| + \left|\gamma_{T,j} \alpha_j(\dist_T(\Ypred = i\mid Y = j) - \dist_S(\Ypred = i\mid Y = j))\right| \\
    & \leq \gamma_{S,j} \beta_j\cegap(\Ypred) + \gamma_{T,j} \alpha_j\cegap(\Ypred).
\end{align*}
The first inequality holds by the triangle inequality and the second by the definition of the conditional error gap. Combining all the inequalities above completes the proof.
\end{proof}
We are now ready to prove the theorem:
\gap*
\begin{proof}[Proof of Theorem~\ref{thm:errorgap}]
First, by the law of total probability, it is easy to verify that following identity holds for $a\in\{S, T\}$:
\begin{align*}
    \dist_a(\Ypred\neq Y) &= \sum_{i\neq j}\dist_a(\Ypred = i, Y =j) = \sum_{i\neq j}\gamma_{a,j}\dist_a(\Ypred = i\mid Y = j).
\end{align*}
Using this identity, to bound the error gap, we have:
\begin{align*}
    &~|\dist_S(Y\neq \Ypred) - \dist_T(Y\neq \Ypred)| \\
    =&~ \big|\sum_{i\neq j}\gamma_{S,j}\dist_S(\Ypred = i\mid Y = j) - \sum_{i\neq j}\gamma_{T,j}\dist_T(\Ypred = i\mid Y = j)\big| \\
    \leq&~ \sum_{i\neq j}\big|\gamma_{S,j}\dist_S(\Ypred = i\mid Y = j) - \gamma_{T,j}\dist_T(\Ypred = i\mid Y = j)\big|.
\end{align*}
Invoking Lemma~\ref{lemma:first} to bound the above terms, and since $\forall j\in[k], \gamma_{S,j}, \gamma_{T,j}\in[0, 1]$, $\alpha_j + \beta_j = 1$, we get:
\begin{align*}
    &~ |\dist_S(Y\neq \Ypred) - \dist_T(Y\neq \Ypred)|  \\
    \leq&~ \sum_{i\neq j}\big|\gamma_{S,j}\dist_S(\Ypred = i\mid Y = j) - \gamma_{T,j}\dist_T(\Ypred = i\mid Y = j)\big| \\
    \leq&~ \sum_{i\neq j} |\gamma_{S,j} - \gamma_{T,j}|\cdot\left(\alpha_j\dist_S(\Ypred = i\mid Y = j) + \beta_j\dist_T(\Ypred = i\mid Y = j)\right) + \gamma_{S,j}\beta_j \cegap(\Ypred) + \gamma_{T,j}\alpha_j \cegap(\Ypred) \\
    \leq&~ \sum_{i\neq j} |\gamma_{S,j} - \gamma_{T,j}|\cdot\left(\alpha_j\dist_S(\Ypred = i\mid Y = j) + \beta_j\dist_T(\Ypred = i\mid Y = j)\right) + \gamma_{S,j} \cegap(\Ypred) + \gamma_{T,j} \cegap(\Ypred) \\
    =&~ \sum_{i\neq j} |\gamma_{S,j} - \gamma_{T,j}|\cdot\left(\alpha_j\dist_S(\Ypred = i\mid Y = j) + \beta_j\dist_T(\Ypred = i\mid Y = j)\right) + \sum_{i=1}^k\sum_{j\neq i}\gamma_{S,j} \cegap(\Ypred) + \gamma_{T,j} \cegap(\Ypred) \\
    =&~ \sum_{i\neq j} |\gamma_{S,j} - \gamma_{T,j}|\cdot\left(\alpha_j\dist_S(\Ypred = i\mid Y = j) + \beta_j\dist_T(\Ypred = i\mid Y = j)\right) + 2(k-1)\cegap(\Ypred). \\
    \intertext{Note that the above holds $\forall \alpha_j, \beta_j \geq 0$ such that $\alpha_j + \beta_j = 1$. By choosing $\alpha_j = 1,\forall j\in[k]$ and $\beta_j = 0, \forall j\in[k]$, we have:}
    =&~ \sum_{i\neq j} |\gamma_{S,j} - \gamma_{T,j}|\cdot\dist_S(\Ypred = i\mid Y = j) + 2(k-1)\cegap(\Ypred) \\
    =&~ \sum_{j = 1}^k |\gamma_{S,j} - \gamma_{T,j}|\cdot\left(\sum_{i=1, i\neq j}^k\dist_S(\Ypred = i\mid Y = j)\right) + 2(k-1)\cegap(\Ypred) \\
    =&~ \sum_{j = 1}^k |\gamma_{S,j} - \gamma_{T,j}|\cdot\dist_S(\Ypred \neq Y\mid Y = j) + 2(k-1)\cegap(\Ypred) \\
    \leq&~\|\dist_S^Y - \dist_T^Y\|_1\cdot\ber{\dist_S}{\Ypred}{Y} + 2(k-1)\cegap(\Ypred),
\end{align*}
where the last line is due to Holder's inequality, completing the proof.
\end{proof}

\subsection{Proof of Theorem~\ref{thm:jointerror}}

\joint*
\begin{proof}
First, by the law of total probability, we have:
\begin{align*}
    \err_{S}(\Ypred) + \err_{T}(\Ypred) &=  \dist_S(Y\neq \Ypred) + \dist_T(Y\neq \Ypred) \\
    &= \displaystyle{\sum_{j=1}^{k}} \displaystyle{\sum_{i \neq j}} \dist_S(\widehat{Y}=i | Y=j) \dist_S(Y=j) + \dist_T(\widehat{Y}=i | Y=j) \dist_T(Y=j).
    \intertext{Now, since $\Ypred = (h \circ g)(X) = h(Z)$, $\Ypred$ is a function of $Z$. Given the generalized label shift assumption, this guarantees that:
    \begin{align*}
        \forall y, y' \in \yyspace,\quad\dist_S(\Ypred = y'\mid Y = y) = \dist_T(\Ypred = y'\mid Y = y).
    \end{align*}
    Thus:
    }
    \err_{S}(\Ypred) + \err_{T}(\Ypred) &=  \displaystyle{\sum_{j=1}^{k}} \displaystyle{\sum_{i \neq j}} \dist_S(\widehat{Y}=i | Y=j) (\dist_S(Y=j) + \dist_T(Y=j)) \\
    &=\sum_{j\in[k]}\dist_S(\Ypred\neq Y\mid Y = j)\cdot (\dist_S(Y=j) + \dist_T(Y=j))\\
    &\leq \max_{j\in[k]}\dist_S(\Ypred\neq Y\mid Y = j)\cdot \sum_{j\in[k]}\dist_S(Y=j) + \dist_T(Y=j)\\
    &= 2\ber{\dist_S}{\Ypred}{Y}.\qedhere
\end{align*}
\end{proof}

\subsection{Proof of Lemma~\ref{lemma:necessary_condition}}
\reweight*
\begin{proof} From $\glsa$, we know that $\dist_S(Z\mid Y = y) = \dist_T(Z\mid Y = y)$. Applying any function $\tilde{h}$ to $Z$ will maintain that equality (in particular $\tilde{h}(Z) = \tilde{Y} \otimes Z$). Using that fact and Eq.~\eqref{eq:weights} on the second line gives:
\begin{align}
    \dist_T(\tilde{Z}) =&~ \sum_{y\in\yyspace} \dist_T(Y = y)\cdot \dist_T(\tilde{Z}\mid Y = y) \nonumber \\
    =&~ \sum_{y\in\yyspace} \ww_y\cdot \dist_S(Y = y)\cdot \dist_S(\tilde{Z}\mid Y = y) \nonumber \\
    =&~ \sum_{y\in\yyspace} \ww_y\cdot \dist_S(\tilde{Z}, Y = y). \qedhere
\end{align}
\end{proof}

\subsection{Proof of Theorem~\ref{thm:sufficient}}
\sufficient*
\begin{proof}
Follow the condition that $\dist_T(Z) = \dist_S^{\ww}(Z)$, by definition of $\dist_S^\ww(Z)$, we have:
\begin{align*}
    &\dist_T(Z) = \sum_{y\in\yyspace}\frac{\dist_T(Y = y)}{\dist_S(Y = y)}\dist_S(Z, Y = y) \\ \iff & \dist_T(Z) = \sum_{y\in\yyspace}\dist_T(Y =  y)\dist_S(Z\mid Y = y) \\
    \iff & \sum_{y\in\yyspace}\dist_T(Y =  y)\dist_T(Z\mid Y = y) = \sum_{y\in\yyspace}\dist_T(Y =  y)\dist_S(Z\mid Y = y).
\end{align*}
Note that the above equation holds for all measurable subsets of $\zzspace$. Now by the assumption that $\zzspace = \cup_{y\in\yyspace}\zzspace_y$ is a partition of $\zzspace$, consider $\zzspace_{y'}$:
\begin{equation*}
    \sum_{y\in\yyspace}\dist_T(Y =  y)\dist_T(Z\in\zzspace_{y'}\mid Y = y) = \sum_{y\in\yyspace}\dist_T(Y =  y)\dist_S(Z\in\zzspace_{y'}\mid Y = y).
\end{equation*}
Due to the assumption $\dist_S(Z\in\zzspace_y\mid Y = y) = \dist_T(Z\in\zzspace_y\mid Y = y) = 1$, we know that $\forall y'\neq y$, $\dist_T(Z\in\zzspace_{y'}\mid Y = y) = \dist_S(Z\in\zzspace_{y'}\mid Y = y) = 0$. This shows that both the supports of $\dist_S(Z\mid Y = y)$ and $\dist_T(Z\mid Y = y)$ are contained in $\zzspace_y$. Now consider an arbitrary measurable set $E\subseteq\zzspace_y$, since $\cup_{y\in\yyspace}\zzspace_y$ is a partition of $\zzspace$, we know that
\begin{equation*}
    \dist_S(Z\in E\mid Y = y') = \dist_T(Z\in E\mid Y = y') = 0, \quad\forall y'\neq y.
\end{equation*}
Plug $Z\in E$ into the following identity:
\begin{align*}
    &\sum_{y\in\yyspace}\dist_T(Y =  y)\dist_T(Z\in E\mid Y = y) = \sum_{y\in\yyspace}\dist_T(Y =  y)\dist_S(Z\in E\mid Y = y) \\
    \implies&~ \dist_T(Y =  y)\dist_T(Z\in E\mid Y = y) = \dist_T(Y =  y)\dist_S(Z\in E\mid Y = y) \\
    \implies&~ \dist_T(Z\in E\mid Y = y) = \dist_S(Z\in E\mid Y = y),
\end{align*}
where the last line holds because $\dist_T(Y = y) \neq 0$. Realize that the choice of $E$ is arbitrary, this shows that $\dist_S(Z\mid Y = y) = \dist_T(Z\mid Y = y)$, which completes the proof.
\end{proof}

\subsection{Sufficient Conditions for \texorpdfstring{$\glsa$}{alpha}}\label{sub:sufficient_condition}

\sufficientcondition*

\begin{proof}
To prove the above upper bound, let us first fix a $y\in\yyspace$ and fix a classifier $\Ypred = h(Z)$ for some $h:\zzspace\to\yyspace$. Now consider any measurable subset $E\subseteq\zzspace$, we would like to upper bound the following quantity:
\begin{align*}
    |\dist_S(Z \in E \mid Y = y) &- \dist_T(Z \in E \mid Y=y)| \\
    &= \frac{1}{\dist_T(Y=y)}\cdot |\dist_S(Z \in E, Y = y) \ww_y - \dist_T(Z \in E, Y = y)| \\
    &\leq \frac{1}{\gamma}\cdot |\dist_S(Z \in E, Y = y) \ww_y - \dist_T(Z \in E, Y = y)|.
\end{align*}
Hence it suffices if we can upper bound $|\dist_S(Z \in E, Y = y) \ww_y - \dist_T(Z \in E, Y = y)|$. To do so, consider the following decomposition:
\begin{align*}
     |\dist_T(Z \in E, Y = y) - \dist_S(Z \in E, Y = y) \ww_y| =&~ |\dist_T(Z \in E, Y = y) - \dist_T(Z\in E, \Ypred = y)  \\
    &+ \dist_T(Z \in E, \Ypred = y) - \dist_S^{\ww}(Z\in E, \Ypred = y) \\
    &+ \dist_S^{\ww}(Z\in E, \Ypred = y) - \dist_S(Z \in E, Y = y) \ww_y| \\
    \leq&~|\dist_T(Z \in E, Y = y) - \dist_T(Z\in E, \Ypred = y)|  \\
    &+ |\dist_T(Z \in E, \Ypred = y) - \dist_S^{\ww}(Z\in E, \Ypred = y)| \\
    &+ |\dist_S^{\ww}(Z\in E, \Ypred = y) - \dist_S(Z \in E, Y = y) \ww_y|. \\
\end{align*}
We bound the above three terms in turn. First, consider $|\dist_T(Z \in E, Y = y) - \dist_T(Z\in E, \Ypred = y)|$:
\begin{align*}
    |\dist_T(Z \in E, Y = y) &- \dist_T(Z\in E, \Ypred = y)| \\
    =&~ |\sum_{y'}\dist_T(Z \in E, Y = y, \Ypred = y') - \sum_{y'}\dist_T(Z\in E, \Ypred = y, Y = y')| \\
    \leq&~ \sum_{y'\neq y} |\dist_T(Z \in E, Y = y, \Ypred = y') - \dist_T(Z\in E, \Ypred = y, Y = y')| \\
    \leq&~\sum_{y'\neq y}\dist_T(Z \in E, Y = y, \Ypred = y') + \dist_T(Z\in E, \Ypred = y, Y = y') \\
    \leq&~\sum_{y'\neq y}\dist_T(Y = y, \Ypred = y') + \dist_T(\Ypred = y, Y = y') \\
    \leq&~\dist_T(Y \neq \Ypred) \\
    =&~ \err_{T}(\Ypred),
\end{align*}
where the last inequality is due to the fact that the definition of error rate corresponds to the sum of all the off-diagonal elements in the confusion matrix while the sum here only corresponds to the sum of all the elements in two slices. Similarly, we can bound the third term as follows:
\begin{align*}
    &|\dist_S^{\ww}(Z\in E, \Ypred = y) - \dist_S(Z \in E, Y = y) \ww_y| \\ 
    &\hspace{3cm}= |\displaystyle{\sum_{y'}} \dist_S(Z \in E, \Ypred = y, Y = y') \ww_{y'} - \displaystyle{\sum_{y'}} \dist_S(Z \in E, \Ypred = y', Y = y) \ww_y| \\
    &\hspace{3cm}\leq |\displaystyle{\sum_{y' \neq y}} \dist_S(Z \in E, \Ypred = y, Y = y') \ww_{y'} - \dist_S(Z \in E, \Ypred = y', Y = y) \ww_y|  \\ 
    &\hspace{3cm}\leq \ww_M \displaystyle{\sum_{y' \neq y}} \dist_S(Z \in E, \Ypred = y, Y = y') + \dist_S(Z \in E, \Ypred = y', Y = y)  \\
    &\hspace{3cm}\leq \ww_M \dist_S(Z \in E, \Ypred \neq Y)  \\
    &\hspace{3cm}\leq \ww_M \err_{S}(\Ypred). 
\end{align*}
Now we bound the last term. Recall the definition of total variation, we have:
\begin{align*}
    |\dist_T(Z \in E, \Ypred = y) &- \dist_S^{\ww}(Z\in E, \Ypred = y)| \\
    &= |\dist_T(Z \in E\land Z\in \Ypred^{-1}(y)) - \dist_S^{\ww}(Z\in E \land Z\in\Ypred^{-1}(y))| \\
    &\leq \sup_{E'\text{ is measurable}}|\dist_T(Z \in E') - \dist_S^{\ww}(Z\in E')| \\
    &= \dtv(\dist_T(Z), \dist_S^\ww(Z)).
\end{align*}
Combining the above three parts yields
\begin{align*}
    |\dist_S(Z \in E \mid Y = y) - \dist_T(Z \in E \mid Y=y)| \leq \frac{1}{\gamma}\cdot \left(\ww_M \err_{S}(\Ypred)
    + \err_{T}(\Ypred) + \dtv(\dist_S^{\ww}(Z), \dist_T(Z)) \right).
\end{align*}
Now realizing that the choice of $y\in\yyspace$ and the measurable subset $E$ on the LHS is arbitrary, this leads to
\begin{align*}
    \max_{y\in\yyspace}\sup_{E}|\dist_S(Z \in E \mid Y = y) &- \dist_T(Z \in E \mid Y=y)| \\
    &\leq \frac{1}{\gamma}\cdot \left(\ww_M \err_{S}(\Ypred)
    + \err_{T}(\Ypred) + \dtv(\dist_S^{\ww}(Z), \dist_T(Z)) \right).
\end{align*}
From~\citet{PhysRevA.79.052311}, we have: 
\begin{equation*}
\dtv(\dist_S^{\ww}(Z), \dist_T(Z)) \leq \sqrt{8\jsd(\dist_S^{\ww}(Z)~||~\dist_T(Z))}
\end{equation*}
(the total variation and Jensen-Shannon distance are equivalent), which gives the results for $\tilde{Z} = Z$.
Finally, noticing that $z \to h(z) \otimes z$ is a bijection ($h(z)$ sums to 1), we have:
\begin{equation*}
    \jsd(\dist_S^{\ww}(Z)~||~\dist_T(Z)) = \jsd(\dist_S^{\ww}(\Ypred \otimes Z)~||~\dist_T(\Ypred \otimes Z)),
\end{equation*}
which completes the proof.
\end{proof}

Furthermore, since the above upper bound holds for any classifier $\Ypred = h(Z)$, we even have:
\begin{align*}
    \max_{y\in\yyspace}\dtv(\dist_S(Z \in E \mid Y = y)&, \dist_T(Z \in E \mid Y=y)) \\ 
    &\leq \frac{1}{\gamma}\cdot \inf_{\Ypred}\left(\ww_M \err_{S}(\Ypred)
    + \err_{T}(\Ypred) + \dtv(\dist_S^{\ww}(Z), \dist_T(Z)) \right).
\end{align*}

\subsection{Proof of Lemma \ref{lem:iw_equation}}
\label{sec:estimation}

\estimation*
\begin{proof}
Given~\eqref{eq:glsa}, and with the joint hypothesis $\Ypred = h(Z)$ over both source and target domains, it is straightforward to see that the induced conditional distributions over predicted labels match between the source and target domains, i.e.:
\begin{align}\label{eq:predlsa}
    \dist_S(\Ypred = h(Z)\mid Y = y) = \dist_T(\Ypred = h(Z)\mid Y = y),~\forall y\in\yyspace.
\end{align}
This allows us to compute $\boldsymbol\mu_y,~\forall y \in\yyspace$ as 
\begin{align*}
\dist_T(\Ypred = y) =&~ \sum_{y'\in\yyspace}\dist_T(\Ypred = y\mid Y = y')\cdot\dist_T(Y = y') \\
=&~ \sum_{y'\in\yyspace}\dist_S(\Ypred = y\mid Y = y')\cdot\dist_T(Y = y') \\
=&~ \sum_{y'\in\yyspace}\dist_S(\Ypred = y, Y = y')\cdot\frac{\dist_T(Y = y')}{\dist_S(Y=y')} \\
=&~ \sum_{y'\in\yyspace}\textbf{C}_{y,y'}\cdot \ww_{y'}.
\end{align*}
where we used \eqref{eq:predlsa} for the second line. We thus have \mbox{$\boldsymbol\mu = \textbf{C} \ww$} which concludes the proof.
\end{proof}

\subsection{\texorpdfstring{$\FF$}{alpha}-IPM for Distributional Alignment}
\label{sec:app_ipm}

In Table~\ref{tab:ipm}, we list different instances of IPM with different choices of the function class $\FF$ in the above definition, including the total variation distance, Wasserstein-1 distance and the Maximum mean discrepancy~\citep{gretton2012kernel}.
\begin{table}[htb]
    \centering
    \caption{List of IPMs with different $\FF$. $\|\cdot\|_{\text{Lip}}$ denotes the Lipschitz seminorm and $\HH$ is a reproducing kernel Hilbert space (RKHS).\vspace{0.3cm}}
    \label{tab:ipm}
    \begin{tabular}{cc}\toprule
    $\FF$     &  $d_\FF$ \\\midrule
    $\{f: \|f\|_\infty\leq 1\}$  & Total Variation  \\
    $\{f: \|f\|_{\text{Lip}}\leq 1\}$ & Wasserstein-1 distance\\
    $\{f: \|f\|_{\HH}\leq 1\}$ & Maximum mean discrepancy \\
    \bottomrule
    \end{tabular}
\end{table}

\section{Experimentation Details}
\label{sec:additional}

\subsection{Computational Complexity}\label{sub:compute}

Our algorithms imply negligible time and memory overhead compared to their base versions. They are, in practice, indistinguishable from the underlying baseline:
\begin{itemize}
    \item Weight estimation requires storing the confusion matrix $C$ and the predictions $\mu$. This has a memory cost of $O(k^2)$, small compared to the size of a neural network that performs well on k classes.
    \item The extra computational cost comes from solving the quadratic program~\ref{qp}, which only depends on the number of classes $k$ and is solved once per epoch (not per gradient step). For Office-Home, it is a $65 \times 65$ QP, solved $\approx 100$ times. Its runtime is negligible compared to tens of thousands of gradient steps.
\end{itemize}

\subsection{Description of the domain adaptation tasks}\label{sub:datasets}

\textbf{Digits} We follow a widely used evaluation protocol \citep{hoffman2017cycada,cdan}. For the digits datasets MNIST~(M, \citet{mnist}) and USPS~(U, \citet{dua2017}), we consider the DA tasks: \mbox{M $\rightarrow$ U} and \mbox{U $\rightarrow$ M}. Performance is evaluated on the 10,000/2,007 examples of the MNIST/USPS test sets. 

\textbf{\citet{visda}} is a sim-to-real domain adaptation task. The synthetic domain contains 2D rendering of 3D models captured at different angles and lighting conditions. The real domain is made of natural images. Overall, the training, validation and test domains contain 152,397, 55,388 and 5,534 images, from 12 different classes. 

\textbf{Office-31} \citep{conf/eccv/SaenkoKFD10} is one of the most popular dataset for domain adaptation . It contains 4,652 images from 31 classes. The samples come from three domains: Amazon (A), DSLR (D) and Webcam (W), which generate six possible transfer tasks, A $\rightarrow$ D, A $\rightarrow$ W, D $\rightarrow$ A, D $\rightarrow$ W, W $\rightarrow$ A and W $\rightarrow$ D, which we all evaluate.

\textbf{Office-Home} \citep{home} is a more complex dataset than Office-31. It consists of 15,500 images from 65 classes depicting objects in office and home environments. The images form four different domains: Artistic (A), Clipart (C), Product (P), and Real-World images (R). We evaluate the 12 possible domain adaptation tasks.

\subsection{Full results on the domain adaptation tasks}\label{sub:full}

Tables~\ref{tab:digits}, \ref{tab:visda}, \ref{tab:office_full}, \ref{tab:office_sub}, \ref{tab:home_full} and \ref{tab:home_sub} show the detailed results of all the algorithms on each task of the domains described above. The performance we report is the best test accuracy obtained during training over a fixed number of epochs. We used that value for fairness with respect to the baselines (as shown in Figure~\ref{fig:acc_distance} Left, the performance of DANN decreases as training progresses, due to the inappropriate matching of representations showcased in Theorem~\ref{thm:lower_bound}).

The subscript denotes the fraction of seeds for which our variant outperforms the base algorithm. More precisely, by outperform, we mean that for a given seed (which fixes the network initialization as well as the data being fed to the model) the variant has a larger accuracy on the test set than its base version. Doing so allows to assess specifically the effect of the algorithm, all else kept constant.

\begin{table*}[ht]
\caption{Results on the Digits tasks. M and U stand for MNIST and USPS, the prefix $s$ denotes the experiment where the source domain is subsampled to increase $\jsd(\domain_S^Y, \domain_T^Y)$.}
\label{tab:digits}
\vskip 0.15in
\begin{center}
\begin{small}
\begin{sc}
\begin{tabular}{lcc|c@{\hskip 0.5cm}||@{\hskip 0.5cm}cc|c@{\hskip 0.5cm}}
\toprule
Method & M $\rightarrow$ U & U $\rightarrow$ M & Avg.& sM $\rightarrow$ U & sU $\rightarrow$ M & Avg. \\
\midrule
No Ad.    & 79.04 & 75.30 & 77.17 & 76.02 & 75.32 & 75.67 \\
\midrule
DANN    & 90.65 & 95.66 & 93.15 & 79.03 & 87.46 & 83.24 \\
IWDAN    & \textbf{93.28}$_{100\%}$ & \textbf{96.52}$_{100\%}$ & \textbf{94.90}$_{100\%}$ & \textbf{91.77}$_{100\%}$ & \textbf{93.32}$_{100\%}$ & \textbf{92.54}$_{100\%}$ \\
IWDAN-O  & 93.73$_{100\%}$ & 96.81$_{100\%}$ & 95.27$_{100\%}$ & 92.50$_{100\%}$ & 96.42$_{100\%}$ & 94.46$_{100\%}$ \\
\midrule
CDAN    & 94.16 & 97.29 & 95.72 & 84.91 & 91.55 & 88.23 \\
IWCDAN  & \textbf{94.36}$_{60\%}$ & \textbf{97.45}$_{100\%}$ & \textbf{95.90}$_{80\%}$ & \textbf{93.42}$_{100\%}$ & \textbf{93.03}$_{100\%}$ & \textbf{93.22}$_{100\%}$ \\
IWCDAN-O & 94.34$_{80\%}$ & 97.35$_{100\%}$ & 95.85$_{90\%}$ & 93.37$_{100\%}$ & 96.26$_{100\%}$ & 94.81$_{100\%}$ \\
\bottomrule
\end{tabular}
\end{sc}
\end{small}
\end{center}
\vskip -0.1in
\end{table*}

\begin{table*}[ht]
\caption{Results on the Visda domain. The prefix $s$ denotes the experiment where the source domain is subsampled to increase $\jsd(\domain_S^Y, \domain_T^Y)$.}
\label{tab:visda}
\vskip 0.15in
\begin{center}
\begin{small}
\begin{sc}
\begin{tabular}{lc@{\hskip 0.5cm}||@{\hskip 0.5cm}c}
\toprule
Method &  Visda & sVisda \\
\midrule
No Ad.  & 48.39 & 49.02 \\
\midrule
DANN & 61.88 & 52.85 \\
IWDAN  & \textbf{63.52}$_{100\%}$ & \textbf{60.18}$_{100\%}$ \\
IWDAN-O  &  64.19$_{100\%}$ & 62.10$_{100\%}$ \\
\midrule
CDAN    & 65.60 & 60.19 \\
IWCDAN  & \textbf{66.49}$_{60\%}$ & \textbf{65.83}$_{100\%}$ \\
IWCDAN-O & 68.15$_{100\%}$ & 66.85$_{100\%}$ \\
\midrule
JAN    & 56.98$_{100\%}$ & 50.64$_{100\%}$ \\
IWJAN  & \textbf{57.56}$_{100\%}$ & \textbf{57.12}$_{100\%}$ \\
IWJAN-O & 61.48$_{100\%}$ & 61.30$_{100\%}$ \\
\bottomrule
\end{tabular}
\end{sc}
\end{small}
\end{center}
\vskip -0.1in
\end{table*}

\begin{table*}[ht]
\caption{Results on the Office dataset.}
\label{tab:office_full}
\vskip 0.15in
\begin{center}
\begin{small}
\begin{sc}
\begin{tabular}{lcccccc|c}
\toprule
Method & A $\rightarrow$ D & A $\rightarrow$ W & D $\rightarrow$ A & D $\rightarrow$ W & W $\rightarrow$ A & W $\rightarrow$ D & Avg.  \\
\midrule
No DA & 79.60 & 73.18 & 59.33 & 96.30 & 58.75 & 99.68 & 77.81 \\
\midrule
DANN & 84.06 & 85.41 & 64.67 & 96.08 & 66.77 & 99.44 & 82.74 \\
IWDAN & \textbf{84.30}$_{60\%}$ & \textbf{86.42}$_{100\%}$ & \textbf{68.38}$_{100\%}$ & \textbf{97.13}$_{100\%}$ & \textbf{67.16}$_{60\%}$ & \textbf{100.0}$_{100\%}$ & \textbf{83.90}$_{87\%}$ \\
IWDAN-O & 87.23$_{100\%}$ & 88.88$_{100\%}$ & 69.92$_{100\%}$ & 98.09$_{100\%}$ & 67.96$_{80\%}$ & 99.92$_{100\%}$ & 85.33$_{97\%}$ \\
\midrule
CDAN & \textbf{89.56} & 93.01 & 71.25 & 99.24 & 70.32 & 100.0 & 87.23 \\
IWCDAN & 88.91$_{60\%}$ & \textbf{93.23}$_{60\%}$ & \textbf{71.90}$_{80\%}$ & \textbf{99.30}$_{80\%}$ & \textbf{70.43}$_{60\%}$ & \textbf{100.0}$_{100\%}$ & \textbf{87.30}$_{73\%}$ \\
IWCDAN-O & 90.08$_{60\%}$ & 94.52$_{100\%}$ & 73.11$_{100\%}$ & 99.30$_{80\%}$ & 71.83$_{100\%}$ & 100.0$_{100\%}$ & 88.14$_{90\%}$ \\
\midrule
JAN & 85.94 & \textbf{85.66} & \textbf{70.50} & 97.48 & \textbf{71.5}  & 99.72& 85.13 \\
IWJAN & \textbf{87.68}$_{100\%}$ & 84.86$_{0\%}$ & 70.36$_{60\%}$ & \textbf{98.98}$_{100\%}$ & 70.06$_{0\%}$ & \textbf{100.0}$_{100\%}$ & \textbf{85.32}$_{60\%}$ \\
IWJAN-O & 89.68$_{100\%}$ & 89.18$_{100\%}$ & 71.96$_{100\%}$ & 99.02$_{100\%}$ & 73.0$_{100\%}$ & 100.0$_{100\%}$ & 87.14$_{100\%}$ \\
\bottomrule
\end{tabular}
\end{sc}
\end{small}
\end{center}
\vskip -0.1in
\end{table*}

\begin{table*}[ht]
\caption{Results on the Subsampled Office dataset.}
\label{tab:office_sub}
\vskip 0.15in
\begin{center}
\begin{small}
\begin{sc}
\begin{tabular}{lcccccc|c}
\toprule
Method & sA $\rightarrow$ D & sA $\rightarrow$ W & sD $\rightarrow$ A & sD $\rightarrow$ W & sW $\rightarrow$ A & sW $\rightarrow$ D & Avg.  \\
\midrule
No DA & 75.82 & 70.69 & 56.82 & 95.32 & 58.35 & 97.31 & 75.72 \\
\midrule
DANN & 75.46 & 77.66 & 56.58 & 93.76 & 57.51 & 96.02 & 76.17 \\
IWDAN & \textbf{81.61}$_{100\%}$ & \textbf{88.43}$_{100\%}$ & \textbf{65.00}$_{100\%}$ & \textbf{96.98}$_{100\%}$ & \textbf{64.86}$_{100\%}$ & \textbf{98.72}$_{100\%}$ & \textbf{82.60}$_{100\%}$ \\
IWDAN-O & 84.94$_{100\%}$ & 91.17$_{100\%}$ & 68.44$_{100\%}$ & 97.74$_{100\%}$ & 64.57$_{100\%}$ & 99.60$_{100\%}$ & 84.41$_{100\%}$ \\
\midrule
CDAN & 82.45 & 84.60 & 62.54 & 96.83 & 65.01 & 98.31 & 81.62 \\
IWCDAN & \textbf{86.59}$_{100\%}$ & \textbf{87.30}$_{100\%}$ & \textbf{66.45}$_{100\%}$ & \textbf{97.69}$_{100\%}$ & \textbf{66.34}$_{100\%}$ & \textbf{98.92}$_{100\%}$ & \textbf{83.88}$_{100\%}$ \\
IWCDAN-O & 87.39$_{100\%}$ & 91.47$_{100\%}$ & 69.69$_{100\%}$ & 97.91$_{100\%}$ & 67.50$_{100\%}$ & 98.88$_{100\%}$ & 85.47$_{100\%}$ \\
\midrule
JAN & 77.74 & 77.64 & 64.48 & 91.68 & 92.60 & 65.10 & 78.21 \\
IWJAN & \textbf{84.62}$_{100\%}$ & \textbf{83.28}$_{100\%}$ & \textbf{65.30}$_{80\%}$ & \textbf{96.30}$_{100\%}$ & \textbf{98.80}$_{100\%}$ & \textbf{67.38}$_{100\%}$ & \textbf{82.61}$_{97\%}$ \\
IWJAN-O & 88.42$_{100\%}$ & 89.44$_{100\%}$ & 72.06$_{100\%}$ & 97.26$_{100\%}$ & 98.96$_{100\%}$ & 71.30$_{100\%}$ & 86.24$_{100\%}$ \\
\bottomrule
\end{tabular}
\end{sc}
\end{small}
\end{center}
\vskip -0.1in
\end{table*}

\begin{table*}[ht]
\caption{Results on the Office-Home dataset.}
\label{tab:home_full}
\vskip 0.15in
\begin{center}
\begin{small}
\begin{sc}
\begin{tabular}{lcccccc|c}
\toprule
Method & A $\rightarrow$ C & A $\rightarrow$ P & A $\rightarrow$ R & C $\rightarrow$ A & C $\rightarrow$ P & C $\rightarrow$ R \\
\midrule
No DA & 41.02 & 62.97 & 71.26 & 48.66 & 58.86 & 60.91 \\
\midrule
DANN & 46.03 & 62.23 & 70.57 & 49.06 & 63.05 & 64.14 \\
IWDAN & \textbf{48.65}$_{100\%}$ & \textbf{69.19}$_{100\%}$ & \textbf{73.60}$_{100\%}$ & \textbf{53.59}$_{100\%}$ & \textbf{66.25}$_{100\%}$ & \textbf{66.09}$_{100\%}$  \\
IWDAN-O & 50.19$_{100\%}$ & 70.53$_{100\%}$ & 75.44$_{100\%}$ & 56.69$_{100\%}$ & 67.40$_{100\%}$ & 67.98$_{100\%}$ \\
\midrule
CDAN & 49.00 & 69.23 & 74.55 & 54.46 & 68.23 & 68.9 \\
IWCDAN & \textbf{49.81}$_{100\%}$ & \textbf{73.41}$_{100\%}$ & \textbf{77.56}$_{100\%}$ & \textbf{56.5}$_{100\%}$ & \textbf{69.64}$_{80\%}$ & \textbf{70.33}$_{100\%}$ \\
IWCDAN-O & 52.31$_{100\%}$ & 74.54$_{100\%}$ & 78.46$_{100\%}$ & 60.33$_{100\%}$ & 70.78$_{100\%}$ & 71.47$_{100\%}$ \\
\midrule
JAN & \textbf{41.64} & 67.20 & 73.12 & 51.02 & 62.52 & 64.46 \\
IWJAN & 41.12$_{0\%}$ & \textbf{67.56}$_{80\%}$ & \textbf{73.14}$_{60\%}$ & \textbf{51.70}$_{100\%}$ & \textbf{63.42}$_{100\%}$ & \textbf{65.22}$_{100\%}$ \\
IWJAN-O & 41.88$_{80\%}$ & 68.72$_{100\%}$ & 73.62$_{100\%}$ & 53.04$_{100\%}$ & 63.88$_{100\%}$ & 66.48$_{100\%}$ \\
\midrule
\midrule
Method & P $\rightarrow$ A & P $\rightarrow$ C & P $\rightarrow$ R & R $\rightarrow$ A & R $\rightarrow$ C & R $\rightarrow$ P & Avg. \\
\midrule
No DA & 47.1 & 35.94 & 68.27 & 61.79 & 44.42 & 75.5 & 56.39\\
\midrule
DANN  & 48.29 & 44.06 & 72.62 & 63.81 & 53.93 & 77.64 & 59.62\\
IWDAN & \textbf{52.81}$_{100\%}$ & \textbf{46.24}$_{80\%}$ & \textbf{73.97}$_{100\%}$ & \textbf{64.90}$_{100\%}$ & \textbf{54.02}$_{80\%}$ & \textbf{77.96}$_{100\%}$ & \textbf{62.27}$_{97\%}$ \\
IWDAN-O & 59.33$_{100\%}$ & 48.28$_{100\%}$ & 76.37$_{100\%}$ & 69.42$_{100\%}$ & 56.09$_{100\%}$ & 78.45$_{100\%}$ & 64.68$_{100\%}$\\
\midrule
CDAN & 56.77 & \textbf{48.8} & 76.83 & \textbf{71.27} & \textbf{55.72} & \textbf{81.27} & 64.59\\
IWCDAN & \textbf{58.99}$_{100\%}$ & 48.41$_{0\%}$ & \textbf{77.94}$_{100\%}$ & 69.48$_{0\%}$ & 54.73$_{0\%}$ & 81.07$_{60\%}$ & \textbf{65.66}$_{70\%}$ \\
IWCDAN-O & 62.60$_{100\%}$ & 50.73$_{100\%}$ & 78.88$_{100\%}$ & 72.44$_{100\%}$ & 57.79$_{100\%}$ & 81.31$_{80\%}$ & 67.64$_{98\%}$\\
\midrule
JAN & 54.5 & 40.36 & \textbf{73.10} & \textbf{64.54} & \textbf{45.98} & \textbf{76.58} & 59.59 \\
IWJAN & \textbf{55.26}$_{80\%}$ & \textbf{40.38}$_{60\%}$ & 73.08$_{80\%}$ & 64.40$_{60\%}$ & 45.68$_{0\%}$ & 76.36$_{40\%}$ & \textbf{59.78}$_{63\%}$ \\
IWJAN-O & 57.78$_{100\%}$ & 41.32$_{100\%}$ & 73.66$_{100\%}$ & 65.40$_{100\%}$ & 46.68$_{100\%}$ & 76.36$_{20\%}$ & 60.73$_{92\%}$ \\
\bottomrule
\end{tabular}
\end{sc}
\end{small}
\end{center}
\vskip -0.1in
\end{table*}

\begin{table*}[ht]
\caption{Results on the subsampled Office-Home dataset.}
\label{tab:home_sub}
\vskip 0.15in
\begin{center}
\begin{small}
\begin{sc}
\begin{tabular}{lcccccccccccc|c}
\toprule
Method & A $\rightarrow$ C & A $\rightarrow$ P & A $\rightarrow$ R & C $\rightarrow$ A & C $\rightarrow$ P & C $\rightarrow$ R \\
\midrule
No DA & 35.70 & 54.72 & 62.61 & 43.71 & 52.54 & 56.62 \\
\midrule
DANN & 36.14 & 54.16 & 61.72 & 44.33 & 52.56 & 56.37 \\
IWDAN & \textbf{39.81}$_{100\%}$ & \textbf{63.01}$_{100\%}$ & \textbf{68.67}$_{100\%}$ & \textbf{47.39}$_{100\%}$ & \textbf{61.05}$_{100\%}$ & \textbf{60.44}$_{100\%}$ \\
IWDAN-O & 42.79$_{100\%}$ & 66.22$_{100\%}$ & 71.40$_{100\%}$ & 53.39$_{100\%}$ & 61.47$_{100\%}$ & 64.97$_{100\%}$ \\
\midrule
CDAN & 38.90 & 56.80 & 64.77 & 48.02 & 60.07 & 61.17 \\
IWCDAN & \textbf{42.96}$_{100\%}$ & \textbf{65.01}$_{100\%}$ & \textbf{71.34}$_{100\%}$ & \textbf{52.89}$_{100\%}$ & \textbf{64.65}$_{100\%}$ & \textbf{66.48}$_{100\%}$ \\
IWCDAN-O & 45.76$_{100\%}$ & 68.61$_{100\%}$ & 73.18$_{100\%}$ & 56.88$_{100\%}$ & 66.61$_{100\%}$ & 68.48$_{100\%}$ \\
\midrule
JAN & 34.52 & 56.86 & 64.54 & 46.18 & 56.84 & 59.06 \\
IWJAN & \textbf{36.24}$_{100\%}$ & \textbf{61.00}$_{100\%}$ & \textbf{66.34}$_{100\%}$ & \textbf{48.66}$_{100\%}$ & \textbf{59.92}$_{100\%}$ & \textbf{61.88}$_{100\%}$ \\
IWJAN-O & 37.46$_{100\%}$ & 62.68$_{100\%}$ & 66.88$_{100\%}$ & 49.82$_{100\%}$ & 60.22$_{100\%}$ & 62.54$_{100\%}$ \\
\midrule
\midrule
Method & P $\rightarrow$ A & P $\rightarrow$ C & P $\rightarrow$ R & R $\rightarrow$ A & R $\rightarrow$ C & R $\rightarrow$ P & Avg. \\
\midrule
No DA & 44.29 & 33.05 & 65.20 & 57.12 & 40.46 & 70.0 \\
\midrule
DANN & 44.58 & 37.14 & 65.21 & 56.70 & 43.16 & 69.86 & 51.83 \\
IWDAN & \textbf{50.44}$_{100\%}$ & \textbf{41.63}$_{100\%}$ & \textbf{72.46}$_{100\%}$ & \textbf{61.00}$_{100\%}$ & \textbf{49.40}$_{100\%}$ & \textbf{76.07}$_{100\%}$ & \textbf{57.61}$_{100\%}$ \\
IWDAN-O & 56.05$_{100\%}$ & 43.39$_{100\%}$ & 74.87$_{100\%}$ & 66.73$_{100\%}$ & 51.72$_{100\%}$ & 77.46$_{100\%}$ & 60.87$_{100\%}$ \\
\midrule
CDAN & 49.65 & 41.36 & 70.24 & 62.35 & 46.98 & 74.69 & 56.25 \\
IWCDAN & \textbf{54.87}$_{100\%}$ & \textbf{44.80}$_{100\%}$ & \textbf{75.91}$_{100\%}$ & \textbf{67.02}$_{100\%}$ & \textbf{50.45}$_{100\%}$ & \textbf{78.55}$_{100\%}$ & \textbf{61.24}$_{100\%}$ \\
IWCDAN-O & 59.63$_{100\%}$ & 46.98$_{100\%}$ & 77.54$_{100\%}$ & 69.24$_{100\%}$ & 53.77$_{100\%}$ & 78.11$_{100\%}$ & 63.73$_{100\%}$ \\
\midrule
JAN & 50.64 & 37.24 & 69.98 & 58.72 & 40.64 & 72.00 & 53.94 \\
IWJAN & \textbf{52.92}$_{100\%}$ & \textbf{37.68}$_{100\%}$ & \textbf{70.88}$_{100\%}$ & \textbf{60.32}$_{100\%}$ & \textbf{41.54}$_{100\%}$ & \textbf{73.26}$_{100\%}$ & \textbf{55.89}$_{100\%}$ \\
IWJAN-O & 56.54$_{100\%}$ & 39.66$_{100\%}$ & 71.78$_{100\%}$ & 62.36$_{100\%}$ & 44.56$_{100\%}$ & 73.76$_{100\%}$ & 57.36$_{100\%}$ \\
\bottomrule
\end{tabular}
\end{sc}
\end{small}
\end{center}
\vskip -0.1in
\end{table*}

\subsection{Jensen-Shannon divergence of the original and subsampled domain adaptation datasets}\label{sub:jsd}

Tables~\ref{tab:jsd_digits}, \ref{tab:jsd_office} and \ref{tab:jsd_home} show $\jsd(\dist_S(Z)||\dist_T(Z))$ for our four datasets and their subsampled versions, rows correspond to the source domain, and columns to the target one. We recall that subsampling simply consists in taking $30\%$ of the first half of the classes in the source domain (which explains why $\jsd(\dist_S(Z)||\dist_T(Z))$ is not symmetric for the subsampled datasets).

\begin{table*}[ht]
\caption{Jensen-Shannon divergence between the label distributions of the Digits and Visda tasks.}
\label{tab:jsd_digits}
\vskip 0.15in
\centering
\begin{small}
\begin{sc}
\subfloat[Full Dataset\label{tab:jsd_digits_full}]{
\centering
\begin{tabular}{lccccccc}
\toprule
 & MNIST & USPS &  Real \\
\midrule
MNIST  & 0  & $6.64\mathrm{e}{-3}$ & - \\
USPS  & $6.64\mathrm{e}{-3}$ & 0 & - \\
Synth. & -                    & -                    & $2.61\mathrm{e}{-2}$ \\
\bottomrule
\end{tabular}
}\hspace{2cm}
\subfloat[Subsampled\label{tab:jsd_digits_sub}]{
\centering
\begin{tabular}{lccccccc}
\toprule
 & MNIST & USPS &  Real \\
\midrule
MNIST  & 0  & $6.52\mathrm{e}{-2}$ & - \\
USPS  & $2.75\mathrm{e}{-2}$ & 0 & - \\
Synth. & -                    & -                    & $6.81\mathrm{e}{-2}$ \\
\bottomrule
\end{tabular}
}
\end{sc}
\end{small}
\vskip -0.1in
\end{table*}

\begin{table*}[ht]
\caption{Jensen-Shannon divergence between the label distributions of the Office-31 tasks.}
\label{tab:jsd_office}
\vskip 0.15in
\centering
\begin{small}
\begin{sc}
\subfloat[Full Dataset\label{tab:jsd_office_full}]{
\centering
\begin{tabular}{lccccccc}
\toprule
 & Amazon & DSLR & Webcam \\
\midrule
Amazon & 0 & $1.76\mathrm{e}{-2}$ & $9.52\mathrm{e}{-3}$ \\
DSLR   & $1.76\mathrm{e}{-2}$ & 0 & $2.11\mathrm{e}{-2}$ \\
Webcam   & $9.52\mathrm{e}{-3}$ & $2.11\mathrm{e}{-2}$ & 0 \\
\bottomrule
\end{tabular}
}\hspace{2cm}
\subfloat[Subsampled\label{tab:jsd_office_sub}]{
\centering
\begin{tabular}{lccccccc}
\toprule
 & Amazon & DSLR & Webcam \\
\midrule
Amazon & 0 & $6.25\mathrm{e}{-2}$ & $4.61\mathrm{e}{-2}$ \\
DSLR & $5.44\mathrm{e}{-2}$ & 0 & $5.67\mathrm{e}{-2}$ \\
Webcam   & $5.15\mathrm{e}{-2}$ & $7.05\mathrm{e}{-2}$ & 0 \\
\bottomrule
\end{tabular}
}
\end{sc}
\end{small}
\vskip -0.1in
\end{table*}

\begin{table*}[ht]
\caption{Jensen-Shannon divergence between the label distributions of the Office-Home tasks.}
\label{tab:jsd_home}
\vskip 0.15in
\centering
\begin{sc}
\subfloat[Full Dataset\label{tab:jsd_home_full}]{
\centering
\begin{tabular}{lccccccc}
\toprule
 & Art & Clipart & Product & Real World \\
\midrule
Art  & 0 & $3.85\mathrm{e}{-2}$ & $4.49\mathrm{e}{-2}$ & $2.40\mathrm{e}{-2}$ \\
Clipart  & $3.85\mathrm{e}{-2}$ & 0 & $2.33\mathrm{e}{-2}$ & $2.14\mathrm{e}{-2}$ \\
Product & $4.49\mathrm{e}{-2}$ & $2.33\mathrm{e}{-2}$ & 0 & $1.61\mathrm{e}{-2}$ \\
Real World & $2.40\mathrm{e}{-2}$ & $2.14\mathrm{e}{-2}$ & $1.61\mathrm{e}{-2}$ & 0 \\
\bottomrule
\end{tabular}
}\hspace{1cm}
\subfloat[Subsampled\label{tab:jsd_home_sub}]{
\centering
\begin{tabular}{lccccccc}
\toprule
 & Art  & Clipart & Product & Real World \\
\midrule
Art        & 0 & $8.41\mathrm{e}{-2}$ & $8.86\mathrm{e}{-2}$ & $6.69\mathrm{e}{-2}$ \\
Clipart    & $7.07\mathrm{e}{-2}$ & 0 & $5.86\mathrm{e}{-2}$ & $5.68\mathrm{e}{-2}$ \\
Product    & $7.85\mathrm{e}{-2}$ & $6.24\mathrm{e}{-2}$ & 0 & $5.33\mathrm{e}{-2}$ \\
Real World & $6.09\mathrm{e}{-2}$ & $6.52\mathrm{e}{-2}$ & $5.77\mathrm{e}{-2}$ & 0 \\
\bottomrule
\end{tabular}
}
\end{sc}
\end{table*}

\subsection{Losses}\label{sub:losses}

\subsubsection{DANN}

For batches of data $(x^i_S,y^i_S)$ and $(x^i_T)$ of size $s$, the $\dann$ losses are:
\begin{align}
    \mathcal{L}_{DA}(x^i_S,y^i_S,x_T^i; \theta, \psi) &= 
    \hspace{0.1cm}- \frac{1}{s} \displaystyle{\sum_{i = 1}^s} \log(d_\psi(g_\theta(x_S^i)))
    + \log(1 - d_\psi(g_\theta(x_T^i))), \label{eq:da_loss} \\
    \mathcal{L}_{C}(x^i_S,y^i_S; \theta, \phi) &= -\frac{1}{s} \displaystyle{\sum_{i = 1}^s} \log(h_\phi(g_\theta(x_S^i)_{y^i_S})). \label{eq:da_ce_loss}
\end{align}

\subsubsection{CDAN}\label{subsub:losses_cdan}

Similarly, the $\cdan$ losses are:
\begin{align}
    \mathcal{L}_{DA}(x^i_S,y^i_S,x_T^i; \theta, \psi) &= 
    \hspace{0.1cm}- \frac{1}{s} \displaystyle{\sum_{i = 1}^s} \log(d_\psi(h_\phi(g_\theta(x_S^i)) \otimes g_\theta(x_S^i))) \\
    & \hspace{2.5cm} + \log(1 - d_\psi(h_\phi(g_\theta(x_T^i)) \otimes g_\theta(x_T^i))), \label{eq:cda_loss} \\
    \mathcal{L}_{C}(x^i_S,y^i_S; \theta, \phi) &= -\frac{1}{s} \displaystyle{\sum_{i = 1}^s} \log(h_\phi(g_\theta(x_S^i)_{y^i_S})), \label{eq:cda_ce_loss}
\end{align}
where $h_\phi(g_\theta(x_S^i)) \otimes g_\theta(x_S^i) \defeq (h_1(g(x_S^i)) g(x_S^i),\dots,h_k(g(x_S^i)) g(x_S^i))$ and $h_1(g(x_S^i))$ is the $i$-th element of vector $h(g(x_S^i))$.

$\cdan$ is particularly well-suited for conditional alignment. As described in Section~\ref{sec:preliminary}, the $\cdan$ discriminator seeks to match $\dist_S(\Ypred \otimes Z)$ with $\dist_T(\Ypred \otimes Z)$. This objective is very aligned with $\glsa$: let us first assume for argument's sake that $\Ypred$ is a perfect classifier on both domains. For any sample $(x,y)$, $\hat{y} \otimes z$ is thus a matrix of $0$s except on the $y$-th row, which contains $z$. When label distributions match, the effect of fooling the discriminator will result in representations such that the matrices $\Ypred \otimes Z$ are equal on the source and target domains. In other words, the model is such that $Z \mid Y$ match: it verifies $\glsa$ (see Th.~\ref{thm:sufficient_condition} below with $\ww = 1$). On the other hand, if the label distributions differ, fooling the discriminator actually requires mislabelling certain samples (a fact quantified in Th.~\ref{thm:lower_bound}).

\subsubsection{JAN}

The $\jan$ losses~\citep{long2017deep} are :
\begin{align}
    \mathcal{L}_{DA}(x^i_S,y^i_S,x_T^i; \theta, \psi) &= 
    \hspace{0.1cm}- \frac{1}{s^2} \displaystyle{\sum_{i,j = 1}^s} k(x^i_S,x^j_S) - \frac{1}{s^2} \displaystyle{\sum_{i,j = 1}^s} k(x^i_T,x^j_T) + \frac{2}{s^2} \displaystyle{\sum_{i,j = 1}^s} k(x^i_S,x^j_T) \label{eq:jan_loss}
    \\
    \mathcal{L}_{C}(x^i_S,y^i_S; \theta, \phi) &= -\frac{1}{s} \displaystyle{\sum_{i = 1}^s} \log(h_\phi(g_\theta(x_S^i)_{y^i_S})), \label{eq:jan_ce_loss}
\end{align}
where $k$ corresponds to the kernel of the RKHS $\HH$ used to measure the discrepancy between distributions. Exactly as in~\citet{long2017deep}, it is the product of kernels on various layers of the network $k(x^i_S,x^j_S) = \prod_{l \in \mathcal{L}} k^l(x^i_S,x^j_S)$. Each individual kernel $k^l$ is computed as the dot-product between two transformations of the representation: $k^l(x^i_S,x^j_S) = \langle d^l_\psi(g^l_\theta(x_S^i)), d^l_\psi(g^l_\theta(x_S^j)) \rangle$ (in this case, $d^l_\psi$ outputs vectors in a high-dimensional space). See Section~\ref{sub:imp} for more details.

The $\iwjan$ losses are:
\begin{align}
    \mathcal{L}^{\ww}_{DA}(x^i_S,y^i_S,x_T^i; \theta, \psi) &= 
    \hspace{0.1cm}- \frac{1}{s^2} \displaystyle{\sum_{i,j = 1}^s} \ww_{y^i_S} \ww_{y^j_S} k(x^i_S,x^j_S) - \frac{1}{s^2} \displaystyle{\sum_{i,j = 1}^s} k(x^i_T,x^j_T) + \frac{2}{s^2} \displaystyle{\sum_{i,j = 1}^s} \ww_{y^i_S} k(x^i_S,x^j_T) \label{eq:iwjan_loss}
    \\
    \mathcal{L}_{C}^{\ww}(x^i_S,y^i_S; \theta, \phi) &=
    -\frac{1}{s} \displaystyle{\sum_{i = 1}^s} \frac{\ww_{y^i_S}}{k \dist_S(Y = y)}\log(h_\phi(g_\theta(x_S^i))_{y^i_S}).
    \label{eq:iwjan_ce_loss}
\end{align}

\subsection{Generation of domain adaptation tasks with varying \texorpdfstring{$\jsd(\dist_S(Z)~\|~\dist_T(Z))$}{alpha}}
\label{sub:jsd_generation}

We consider the MNIST $\rightarrow$ USPS task and generate a set $\mathcal{V}$ of $50$ vectors in $[0.1, 1]^{10}$. Each vector corresponds to the fraction of each class to be trained on, either in the source or the target domain (to assess the impact of both). The left bound is chosen as $0.1$ to ensure that classes all contain some samples. 

This methodology creates $100$ domain adaptation tasks, $50$ for \textit{subsampled}-MNIST $\rightarrow$ USPS and $50$ for MNIST $\rightarrow$ \textit{subsampled}-USPS, with Jensen-Shannon divergences varying from $6.1\mathrm{e}{-3}$ to $9.53\mathrm{e}{-2}$\footnote{We manually rejected some samples to guarantee a rather uniform set of divergences.}. They are then used to evaluate our algorithms, see Section \ref{sec:imp} and Figures \ref{fig:cloud} and \ref{fig:cloud_perf}. They show the performance of the 6 algorithms we consider. We see the sharp decrease in performance of the base versions $\dann$ and $\cdan$. Comparatively, our importance-weighted algorithms maintain good performance even for large divergences between the marginal label distributions.

\subsection{Implementation details}\label{sub:imp}

All the values reported below are the default ones in the implementations of DANN, CDAN and JAN released with the respective papers (see links to the github repos in the footnotes). We did not perform any search on them, assuming they had already been optimized by the authors of those papers. To ensure a fair comparison and showcase the simplicity of our approach, we simply plugged the weight estimation on top of those baselines and used their original hyperparameters.

For MNIST and USPS, the architecture is akin to LeNet~\citep{726791}, with two convolutional layers, ReLU and MaxPooling, followed by two fully connected layers. The representation is also taken as the last hidden layer, and has 500 neurons. The optimizer for those tasks is SGD with a learning rate of $0.02$, annealed by $0.5$ every five training epochs for \mbox{M $\rightarrow$ U} and $6$ for \mbox{U $\rightarrow$ M}. The weight decay is also $5\mathrm{e}{-4}$ and the momentum $0.9$.

For the Office and Visda experiments with $\iwdan$ and $\iwcdan$, we train a ResNet-50, optimized using SGD with momentum. The weight decay is also $5\mathrm{e}{-4}$ and the momentum $0.9$. The learning rate is $3\mathrm{e}{-4}$ for the Office-31 tasks \mbox{A $\rightarrow$ D} and \mbox{D $\rightarrow$ W}, $1\mathrm{e}{-3}$ otherwise (default learning rates from the CDAN implementation\footnote{\url{https://github.com/thuml/CDAN/tree/master/pytorch}}).

For the $\iwjan$ experiments, we use the default implementation of Xlearn codebase\footnote{\url{https://github.com/thuml/Xlearn/tree/master/pytorch}} and simply add the weigths estimation and reweighted objectives to it, as described in Section~\ref{sub:losses}. Parameters, configuration and networks remain the same. 

Finally, for the Office experiments, we update the importance weights $\ww$ every 15 passes on the dataset (in order to improve their estimation on small datasets). On Digits and Visda, the importance weights are updated every pass on the source dataset. Here too, fine-tuning that value might lead to a better estimation of $\ww$ and help bridge the gap with the oracle versions of the algorithms.

We use the cvxopt package\footnote{\url{http://cvxopt.org/}} to solve the quadratic programm \ref{qp}.

We trained our models on single-GPU machines (P40s and P100s). The runtime of our algorithms is undistinguishable from the the runtime of their base versions.

\begin{figure*}[t]
	\centering
	\subfloat[Performance of $\dann$, $\iwdan$ and $\iwdano$.]{
		\includegraphics[width=0.48\linewidth]{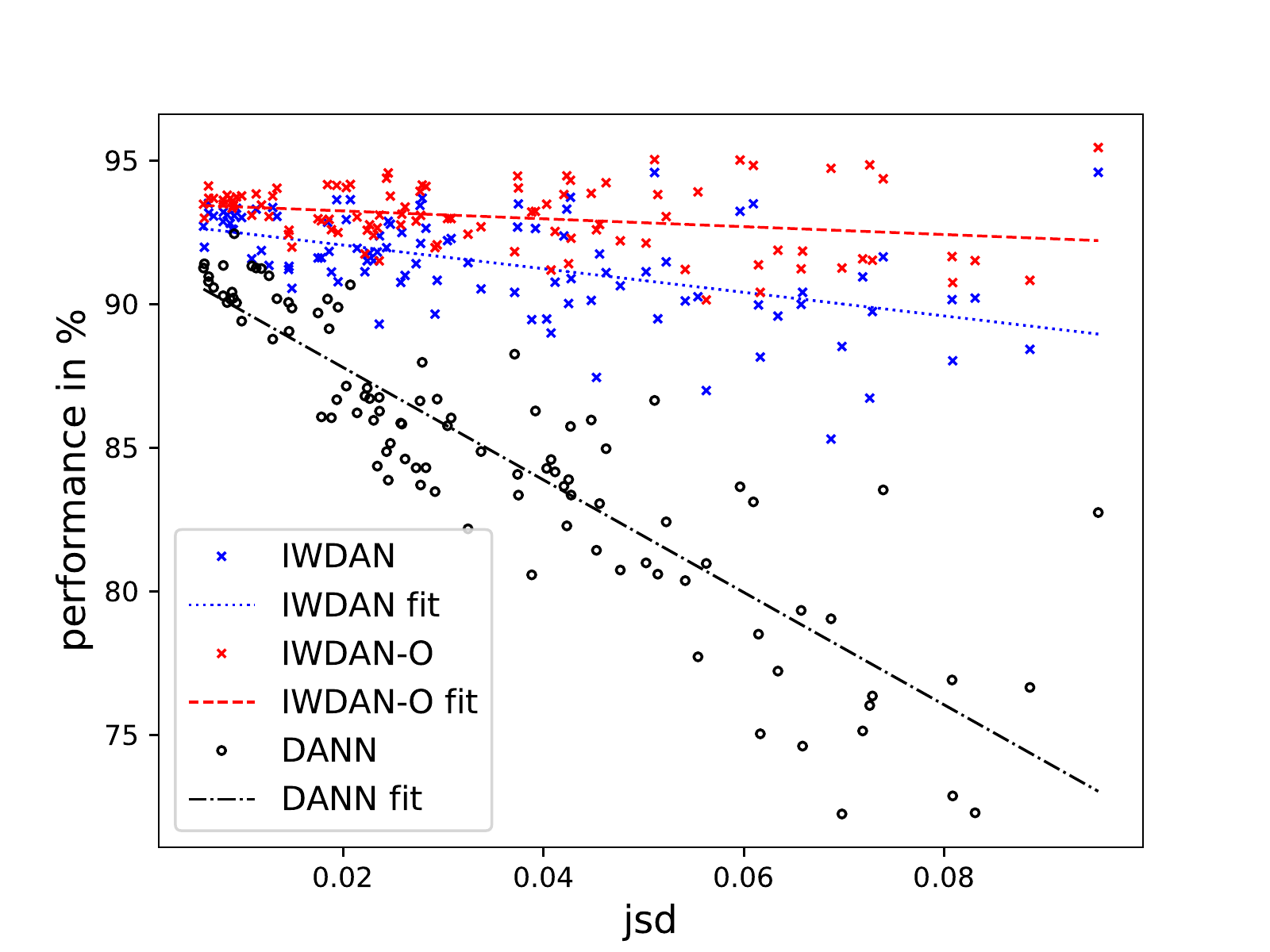}
		\label{fig:cloud_perf_iwdan}
	}
	\subfloat[Performance of $\cdan$, $\cdan$ and $\iwcdan$.]{
		\includegraphics[width=0.48\linewidth]{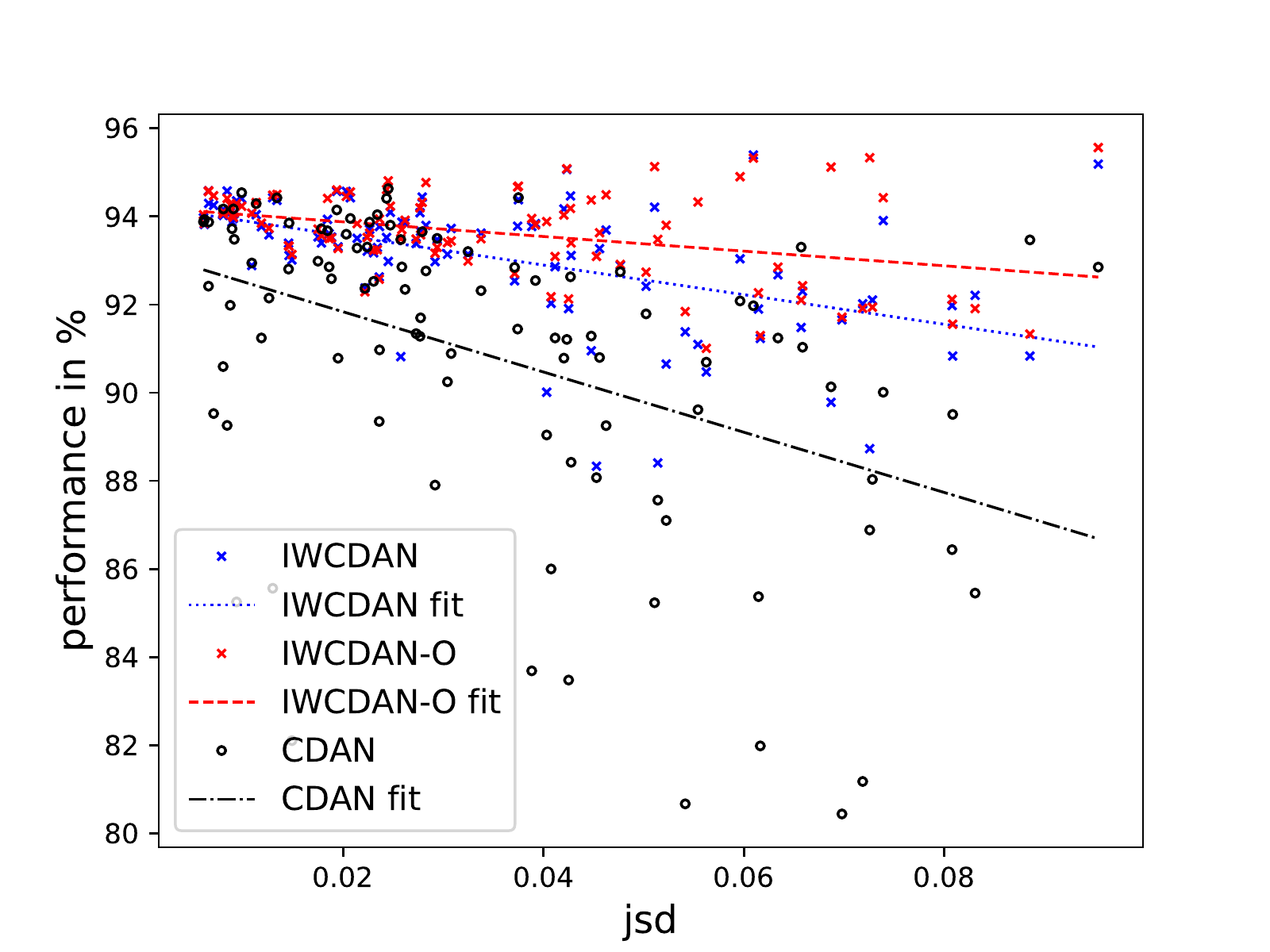}
		\label{fig:cloud_perf_iwcdan}
	}
	\caption{Performance in \% of our algorithms and their base versions. The $x$-axis represents \mbox{$\jsd(\domain_S^Y, \domain_T^Y)$}, the Jensen-Shannon distance between label distributions. Lines represent linear fits to the data. For both sets of algorithms, the larger the jsd, the larger the improvement.}
	\label{fig:cloud_perf}
\end{figure*}


\subsection{Weight Estimation}\label{sub:estimation}

We estimate importance weights using Lemma~\ref{lem:iw_equation}, which relies on the $\glsa$  assumption. However, there is no guarantee that $\glsa$ is verified at any point during training, so the exact dynamics of $\ww$ are unclear. Below we discuss those dynamics and provide some intuition about them.

In Fig.~\ref{fig:distance_app}, we plot the Euclidian distance between the moving average of weights estimated using the equation $\ww = \textbf{C}^{-1} \boldsymbol\mu$ and the true weights (note that this can be done for any algorithm). As can be seen in the figure, the distance between the estimated and true weights is highly correlated with the performance of the algorithm (see Fig.\ref{fig:acc_distance_app}). In particular, we see that the estimations for $\iwdan$ is more accurate than for $\dann$. The estimation for $\dann$ exhibits an interesting shape, improving at first, and then getting worse. At the same time, the estimation for $\iwdan$ improves monotonously. The weights for $\iwdano$ get very close to the true weights which is in line with our theoretical results: $\iwdano$ gets close to zero error on the target error, Th.~\ref{thm:sufficient_condition} thus guarantees that $\glsa$ is verified, which in turns implies that the weight estimation is accurate (Lemma~\ref{lem:iw_equation}). Finally, without domain adaptation, the estimation is very poor. The following two lemmas shed some light on the phenomena observed for $\dann$ and $\iwdan$:

\weightsconvergence*
\begin{proof}
If $\err_S(h\circ g) = 0$, then the confusion matrix $\textbf{C}$ is diagonal and its $y$-th line is $\dist_S(Y = y)$. Additionally, if $\jsd(\dist_S^{\tilde{\ww}}(Z), \dist_T(Z)) = 0$, then from a straightforward extension of Eq.~\ref{eq:generalized_DPI}, we have $\jsd(\dist_S^{\tilde{\ww}}(\hat{Y}), \dist_T(\hat{Y})) = 0$. In other words, the distribution of predictions on the source and target domains match, \textit{i.e.} $\boldsymbol\mu_y = \dist_T(\hat{Y} = y) = \displaystyle{\sum_{y'}} \tilde{\ww}_{y'} \dist_S(\hat{Y} = y, Y = y') = \tilde{\ww}_{y} \dist_S(Y = y), \forall y$ (where the last equality comes from $\err_S(h\circ g) = 0$). Finally, we get that $\ww = \textbf{C}^{-1} \boldsymbol\mu = \tilde{\ww}$.
\end{proof}

In particular, applying this lemma to $\dann$ (\textit{i.e.} with $\tilde{\ww}_{y'} = \textbf{1}$) suggests that at convergence, the estimated weights should tend to $\textbf{1}$. Empirically, Fig.~\ref{fig:distance_app} shows that as the marginals get matched, the estimation for $\dann$ does get closer to $\textbf{1}$ ($\textbf{1}$ corresponds to a distance of $2.16$)\footnote{It does not reach it as the learning rate is decayed to $0$.}. We now attempt to provide some intuition on the behavior of $\iwdan$, with the following lemma:

\begin{lemma}\label{lem:monotonous}
If $\err_S(h\circ g) = 0$ and if for a given $y$:
\begin{equation}
    \min(\tilde{\ww_y} \dist_S(Y = y), \dist_T(Y = y)) \leq \boldsymbol\mu_y \leq \max(\tilde{\ww_y} \dist_S(Y = y), \dist_T(Y = y)), \label{eq:monotonous}
\end{equation}
then, letting $\ww = \textbf{C}^{-1} \boldsymbol\mu$ be the estimated weight:
\begin{equation*}
    |\ww_y - \ww_y^*| \leq |\tilde{\ww_y} - \ww_y^*|.
\end{equation*}
\end{lemma}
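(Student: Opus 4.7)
The plan is to exploit the zero-source-error assumption to simplify the weight estimation to a one-line formula, then show that the hypothesis~\eqref{eq:monotonous} is precisely a bracketing condition stating that $\ww_y$ lies between $\tilde{\ww}_y$ and $\ww_y^*$.

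First, I would argue that since $\err_S(h\circ g) = 0$, the confusion matrix $\textbf{C}$ is diagonal with $\textbf{C}_{y,y} = \dist_S(\Ypred = y, Y = y) = \dist_S(Y=y)$ (this is exactly the observation already used in the proof of Lemma~\ref{lem:convergence}). Consequently, the matrix equation $\ww = \textbf{C}^{-1}\boldsymbol\mu$ collapses component-wise to $\ww_y = \boldsymbol\mu_y / \dist_S(Y=y)$. Similarly, by the definition of the true importance weight, $\ww_y^* = \dist_T(Y=y)/\dist_S(Y=y)$, so that $\dist_T(Y=y) = \ww_y^* \dist_S(Y=y)$.

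Next, I would rewrite the hypothesis~\eqref{eq:monotonous}. Substituting $\dist_T(Y=y) = \ww_y^* \dist_S(Y=y)$ gives
\begin{equation*}
    \min(\tilde{\ww}_y \dist_S(Y = y),\, \ww_y^* \dist_S(Y = y)) \leq \boldsymbol\mu_y \leq \max(\tilde{\ww}_y \dist_S(Y = y),\, \ww_y^* \dist_S(Y = y)).
\end{equation*}
Since $\dist_S(Y=y) > 0$ (implicit throughout the paper, as noted in Definition~\ref{def:iw}), I can divide through by $\dist_S(Y=y)$, which preserves the inequalities and yields
\begin{equation*}
    \min(\tilde{\ww}_y,\, \ww_y^*) \leq \ww_y \leq \max(\tilde{\ww}_y,\, \ww_y^*).
\end{equation*}
In other words, $\ww_y$ lies in the closed interval with endpoints $\tilde{\ww}_y$ and $\ww_y^*$.

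Finally, from this bracketing it follows immediately that $|\ww_y - \ww_y^*| \leq |\tilde{\ww}_y - \ww_y^*|$, since for any real numbers $a,b,c$ with $c$ between $a$ and $b$, the distance $|c-b|$ is at most $|a-b|$. This concludes the proof. There is no real obstacle here; the substantive content of the lemma is the translation of the condition~\eqref{eq:monotonous} (which concerns the empirically observable quantity $\boldsymbol\mu_y$) into a bracketing statement about the updated weight $\ww_y$ relative to the previous estimate $\tilde{\ww}_y$ and the target $\ww_y^*$. The rest is a one-step arithmetic consequence of the diagonality of $\textbf{C}$.
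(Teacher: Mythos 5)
Your proof is correct and follows essentially the same route as the paper's: both exploit that zero source error makes $\textbf{C}$ diagonal with entries $\dist_S(Y=y)$, reduce the weight update to $\ww_y = \boldsymbol\mu_y/\dist_S(Y=y)$, and then divide the bracketing hypothesis~\eqref{eq:monotonous} by $\dist_S(Y=y)$ to conclude that $\ww_y$ lies between $\tilde{\ww}_y$ and $\ww_y^*$.
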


Applying this lemma to $\tilde{\ww_y} = \ww_t$, and assuming that~\eqref{eq:monotonous} holds for all the classes $y$ (we discuss what the assumption implies below), we get that:
\begin{equation}
    \|\ww_{t+1} - \ww_y^*\| \leq \|\ww_t - \ww_y^*\|,
\end{equation}
or in other words, the estimation improves monotonously. Combining this with Lemma~\ref{lem:monotonous} suggests an explanation for the shape of the $\iwdan$ estimated weights on Fig.~\ref{fig:distance_app}: the monotonous improvement of the estimation is counter-balanced by the matching of weighted marginals which, when reached, makes $\ww_t$ constant (Lemma~\ref{lem:convergence} applied to $\tilde{\ww} = \ww_t$). However, we wish to emphasize that the exact dynamics of $\ww$ are complex, and we do not claim understand them fully. In all likelihood, they are the by-product of regularity in the data, properties of deep neural networks and their interaction with stochastic gradient descent. Additionally, the dynamics are also inherently linked to the success of domain adaptation, which to this day remains an open problem.

As a matter of fact, assumption~\eqref{eq:monotonous} itself relates to successful domain adaptation. Setting aside $\tilde{\ww}$, which simply corresponds to a class reweighting of the source domain, \eqref{eq:monotonous} states that predictions on the target domain fall between a successful prediction (corresponding to $\dist_T(Y = y)$) and the prediction of a model with matched marginals (corresponding to $\dist_S(Y = y)$). In other words, we assume that the model is naturally in between successful domain adaptation and successful marginal matching. Empirically, we observed that it holds true for most classes (with $\tilde{\ww} = \tilde{\ww}_t$ for $\iwdan$ and with $\tilde{\ww} = \textbf{1}$ for $\dann$), but not all early in training\footnote{In particular at initialization, one class usually dominates the others.}.

To conclude this section, we prove Lemma~\ref{lem:monotonous}.
\begin{proof}
From $\err_S(h\circ g) = 0$, we know that $\textbf{C}$ is diagonal and that its $y$-th line is $\dist_S(Y = y)$. This gives us: $\ww_y = (\textbf{C}^{-1} \boldsymbol\mu)_y = \frac{\boldsymbol\mu_y}{\dist_S(Y = y)}$. Hence:

\begin{align*}
    & \min(\tilde{\ww_y} \dist_S(Y = y), \dist_T(Y = y)) \leq \boldsymbol\mu_y \leq \max(\tilde{\ww_y} \dist_S(Y = y), \dist_T(Y = y)) \\
    \Longleftrightarrow \quad & \frac{\min(\tilde{\ww_y} \dist_S(Y = y), \dist_T(Y = y))}{\dist_S(Y = y)} \leq \frac{\boldsymbol\mu_y}{\dist_S(Y = y)} \leq \frac{\max(\tilde{\ww_y} \dist_S(Y = y), \dist_T(Y = y))}{\dist_S(Y = y)} \\
    \Longleftrightarrow \quad & \min(\tilde{\ww_y}, \ww_y^*) \leq \ww_y \leq \max(\tilde{\ww_y}, \ww_y^*) \\
    \Longleftrightarrow \quad & \min(\tilde{\ww_y}, \ww_y^*) - \ww_y^* \leq \ww_y - \ww_y^* \leq \max(\tilde{\ww_y}, \ww_y^*) - \ww_y^* \\
    \Longleftrightarrow \quad & |\ww_y - \ww_y^*| \leq |\tilde{\ww_y} - \ww_y^*|,
\end{align*}
which conludes the proof.
\end{proof}

\begin{figure*}[ht]
	\centering
	\subfloat[Transfer accuracy during training.]{
		\includegraphics[width=0.48\linewidth]{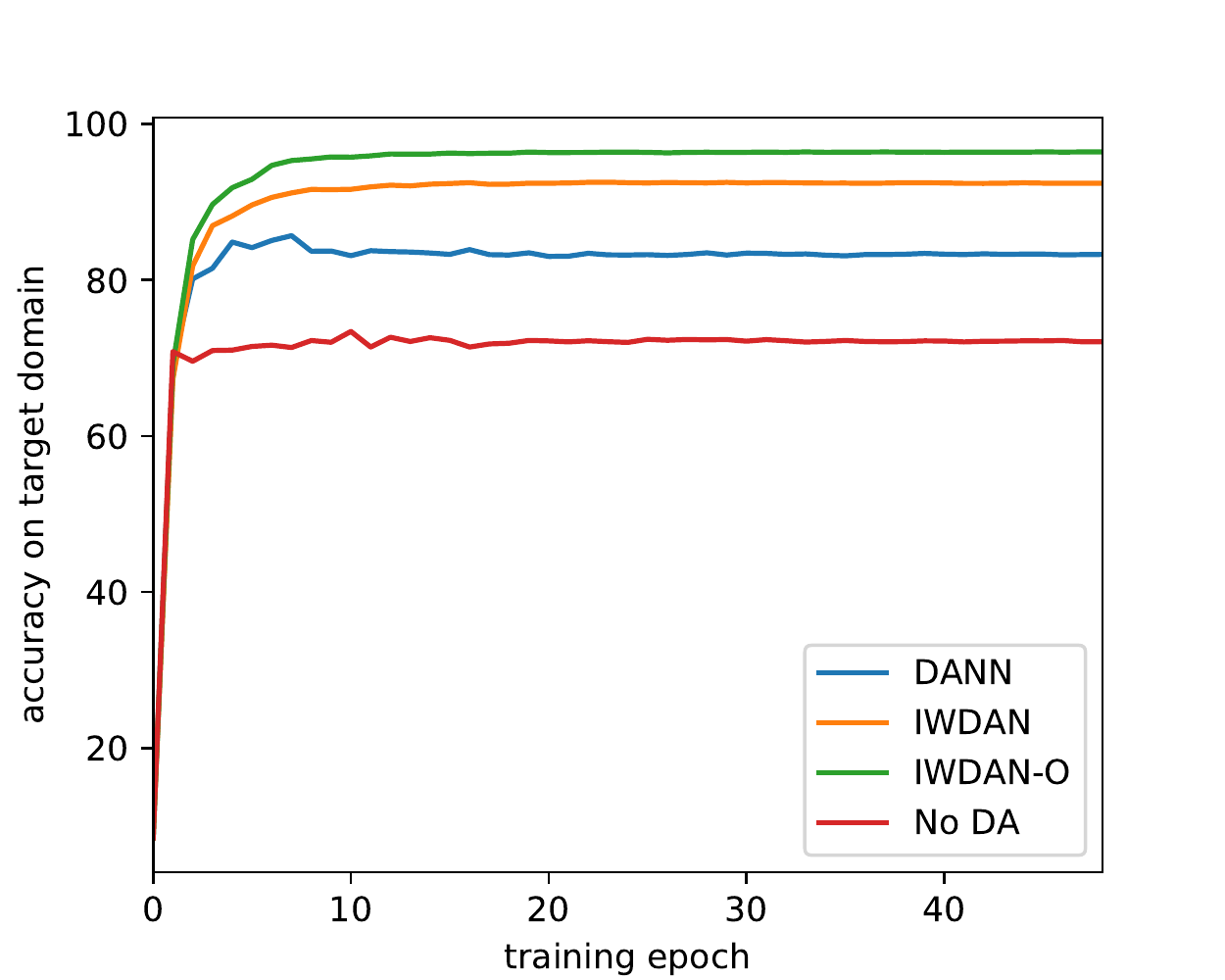}
		\label{fig:accuracy_app}
	}
	\subfloat[Distance to true weights during training.]{
		\includegraphics[width=0.48\linewidth]{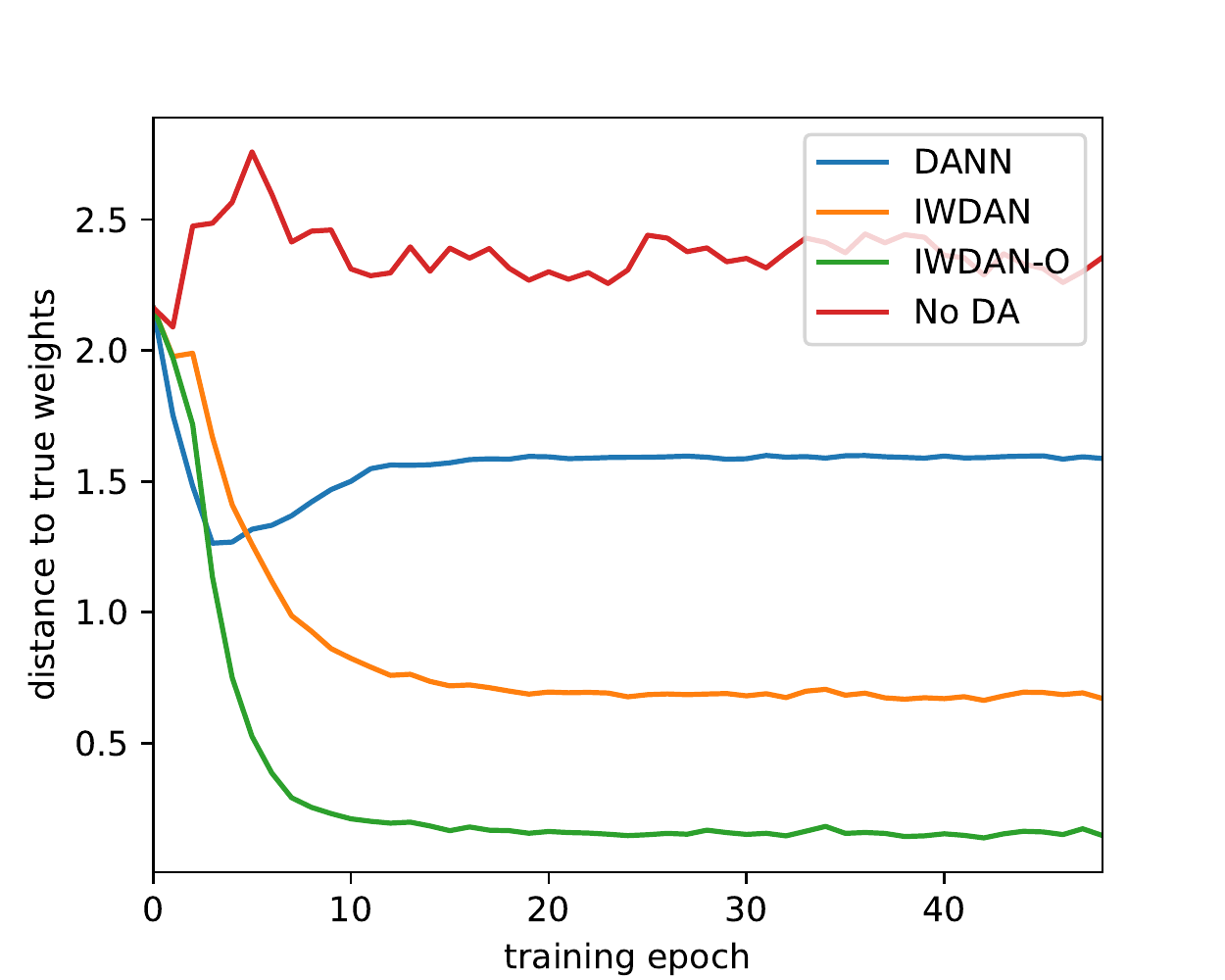}
		\label{fig:distance_app}
	}
	\caption{\textit{Left} Accuracy of various algorithms during training. \textit{Right} Euclidian distance between the weights estimated using Lemma~\ref{lem:iw_equation} and the true weights. Those plots correspond to averages over 5 seeds.}
	\label{fig:acc_distance_app}
\end{figure*}

\subsection{Per-class predictions and estimated weights}\label{sub:confusion}

In this section, we display the per-class predictions of various algorithms on the \mbox{sU $\rightarrow$ M} task. In Table~\ref{tab:conf_nann}, we see that without domain adaptation, performance on classes is rather random, the digit $9$ for instance is very poorly predicted and confused with $4$ and $8$.

Table~\ref{tab:conf_dann} shows an interesting pattern for $\dann$. In line with the limitations described by Theorem~\ref{thm:lower_bound}, the model performs very poorly on the subsampled classes (we recall that the subsampling is done in the source domain): the neural network tries to match the unweighted marginals. To do so, it projects representations of classes that are over-represented in the target domain (digits $0$ to $4$) on representations of the under-represented classes (digits $5$ to $9$). In doing so, it heavily degrades its performance on those classes (it is worth noting that digit $0$ has an importance weight close to $1$ which probably explains why $\dann$ still performs well on it, see Table~\ref{tab:weights}).

As far as $\iwdan$ is concerned, Table~\ref{tab:conf_iwdan} shows that the model perfoms rather well on all classes, at the exception of the digit $7$ confused with $9$. $\iwdano$ is shown in Table~\ref{tab:conf_iwdano} and as expected outperforms the other algorithms on all classes. 

Finally, Table~\ref{tab:weights} shows the estimated weights of all the algorithms, at the training epoch displayed in Tables~\ref{tab:conf_nann}, \ref{tab:conf_dann}, \ref{tab:conf_iwdan} and \ref{tab:conf_iwdano}. We see a rather strong correlation between errors on the estimated weight for a given class, and errors in the predictions for that class (see for instance digit $3$ for $\dann$ or digit $7$ for $\iwdan$).

\begin{table*}[ht]
\centering\small
	\caption{Estimated weights and their euclidian distance to the true weights, taken at the training epoch for the confusion matrices in Tables~\ref{tab:conf_nann}, \ref{tab:conf_dann}, \ref{tab:conf_iwdan} and \ref{tab:conf_iwdano}. The first row contains the true weights. The last column gives the euclidian distance from the true weights.}
	\label{tab:weights}
    \centering
    \begin{sc}
        \begin{tabular}{c|*{10}{c}|c}
            \toprule
            & \multicolumn{10}{c|}{Class} \\
            \midrule
            & 0 & 1 & 2 & 3 & 4 & 5 & 6 & 7 & 8 & 9 & Distance \\
            \midrule
            TRUE & 1.19 & 1.61 & 1.96 &  2.24 & 2.16 & 0.70 & 0.64 & 0.70 & 0.78 & 0.66 & 0 \\
            \midrule
            DANN & 1.06 & 1.15 & 1.66 & 1.33 & 1.95 & 0.86 & 0.72 & 0.70 & 1.02 & 0.92 & 1.15 \\
            IWDAN & 1.19 & 1.61 & 1.92 & 1.96 & 2.31 & 0.70 & 0.63 & 0.55 & 0.78 & 0.78 & 0.38 \\
            IWDAN-O & 1.19 & 1.60 & 2.01 & 2.14 & 2.1 & 0.73 & 0.64 & 0.65 & 0.78 & 0.66 & 0.12 \\
            No DA & 1.14 & 1.4 & 2.42 & 1.49 & 4.21 & 0.94 & 0.38 & 0.82 & 0.62 & 0.29 & 2.31 \\
            \bottomrule
        \end{tabular}
    \end{sc}
\end{table*}

\begin{table}[ht]
    \centering
    \caption{Ablation study on the Digits tasks, with weights learnt during training.}
    \label{tab:ablation_estimated_weights}
    \begingroup
    \begin{tabular}{lcclcc}
        \toprule
        Method & Digits & sDigits & Method & Digits & sDigits \\
        \midrule
        DANN & 93.15 & 83.24 & CDAN & 95.72 & 88.23  \\
        DANN + $\mathcal{L}_{C}^{\ww}$  & 93.18 & 84.20 &  CDAN + $\mathcal{L}_{C}^{\ww}$  & 95.30 & 91.14 \\
        DANN + $\mathcal{L}_{DA}^{\ww}$ & \textbf{94.35} & \textbf{92.48} & CDAN + $\mathcal{L}_{DA}^{\ww}$ & 95.42 & 92.35  \\
        IWDAN  & \textbf{94.90} & \textbf{92.54} & IWCDAN & \textbf{95.90} & \textbf{93.22}  \\        
        \bottomrule
    \end{tabular}
    \endgroup
\vspace*{-0.5em}
\end{table}


\renewcommand{\arraystretch}{0}
\setlength{\fboxsep}{3mm} 
\setlength{\tabcolsep}{0pt}

\begin{table}[ht]
	\caption{Per-class predictions without domain adaptation on the \mbox{sU $\rightarrow$ M} task. Average accuracy: $74.49\%$. The table $M$ below verifies $M_{ij} = \dist_T(\hat{Y} = j | Y = i)$.}
    \begin{center}
    	\begin{tabular}{*{10}{R}}
            92.89 &  0.13 &  3.24 &  0.00 &  2.20 &  0.01 &  0.45 &  0.88 &  0.18 &  0.02 \\
             0.00 & 72.54 & 12.38 &  0.00 &  3.40 &  0.37 &  7.54 &  1.50 &  2.28 &  0.00 \\
             0.31 &  0.23 & 93.28 &  0.09 &  0.72 &  0.03 &  0.34 &  4.78 &  0.17 &  0.05 \\
             0.06 &  0.77 &  4.81 & 68.53 &  1.50 & 19.91 &  0.02 &  2.48 &  1.61 &  0.31 \\
             0.02 &  0.62 &  0.28 &  0.00 & 97.19 &  0.51 &  0.04 &  0.17 &  0.79 &  0.37 \\
             0.75 &  3.03 &  0.69 &  1.01 &  1.20 & 88.96 &  0.39 &  0.31 &  2.69 &  0.96 \\
             0.73 &  1.98 &  0.42 &  0.03 & 23.86 &  2.74 & 69.08 &  0.29 &  0.12 &  0.75 \\
             1.02 &  2.01 &  4.16 &  0.13 &  9.32 &  6.36 &  0.01 & 73.48 &  1.01 &  2.50 \\
             6.01 &  8.27 &  2.55 &  1.35 &  1.62 &  3.62 &  4.98 &  6.96 & 64.40 &  0.24 \\
             1.49 &  3.35 &  0.55 &  1.28 & 38.30 & 15.36 &  0.05 & 20.68 &  1.34 & 17.60
        \end{tabular}
\label{tab:conf_nann}
	\end{center}
\end{table}

\begin{table}[ht]
	\caption{Per-class predictions for $\dann$ on the \mbox{sU $\rightarrow$ M} task. Average accuracy: $86.71\%$. The table $M$ below verifies $M_{ij} = \dist_T(\hat{Y} = j | Y = i)$. The first $5$ classes are under-represented in the source domain compared to the target domain. On those (except $0$), $\dann$ does not perform as well as on the over-represented classes (the last $5$). In line with Th.~\ref{thm:lower_bound}, matching the representation distributions on source and target forced the classifier to confuse the digits ``1'', ``3'' and ``4'' in the target domain with ``8'', ``5'' and ``9''.}
    \begin{center}
    	\begin{tabular}{*{10}{R}}
            95.79 &  0.01 &  0.08 &  0.01 &  0.12 &  0.38 &  2.34 &  0.36 &  0.36 &  0.57 \\
             0.14 & 70.77 &  0.80 &  0.01 &  1.03 &  1.29 &  9.46 &  0.06 & 16.39 &  0.06 \\
             1.61 &  0.14 & 89.82 &  0.20 &  0.42 &  0.48 &  0.83 &  3.73 &  1.37 &  1.39 \\
             0.46 &  0.08 &  1.10 & 63.33 &  0.04 & 26.28 &  0.02 &  1.78 &  3.76 &  3.14 \\
             0.11 &  0.13 &  0.05 &  0.00 & 78.17 &  0.85 &  0.16 &  0.15 &  1.97 & 18.41 \\
             0.19 &  0.04 &  0.02 &  0.04 &  0.01 & 91.30 &  0.25 &  0.27 &  5.97 &  1.91 \\
             0.62 &  0.12 &  0.01 &  0.00 &  1.98 &  4.61 & 91.73 &  0.05 &  0.36 &  0.51 \\
             0.14 &  0.23 &  1.39 &  0.13 &  0.10 &  0.32 &  0.02 & 94.10 &  1.46 &  2.10 \\
             0.69 &  0.13 &  0.11 &  0.05 &  0.21 &  2.12 &  0.50 &  0.36 & 95.19 &  0.66 \\
             0.36 &  0.31 &  0.03 &  0.08 &  0.46 &  3.67 &  0.01 &  1.64 &  1.03 & 92.40
        \end{tabular}
        \label{tab:conf_dann}
	\end{center}
\end{table}

\begin{table}[ht]
	\caption{Per-class predictions for IWDAN on the \mbox{sU $\rightarrow$ M} task. Average accuracy: $94.38\%$. The table $M$ below verifies $M_{ij} = \dist_T(\hat{Y} = j | Y = i)$.}
    \begin{center}
    	\begin{tabular}{*{10}{R}}
            97.33 &  0.06 &  0.23 &  0.01 &  0.20 &  0.43 &  1.29 &  0.19 &  0.18 &  0.09 \\
             0.00 & 97.71 &  0.41 &  0.05 &  0.56 &  0.69 &  0.14 &  0.03 &  0.34 &  0.07 \\
             0.70 &  0.16 & 96.32 &  0.08 &  0.34 &  0.23 &  0.23 &  1.49 &  0.43 &  0.01 \\
             0.23 &  0.01 &  0.97 & 87.67 &  0.01 &  9.25 &  0.02 &  0.63 &  0.87 &  0.35 \\
             0.11 &  0.25 &  0.05 &  0.00 & 96.93 &  0.16 &  0.22 &  0.02 &  0.40 &  1.85 \\
             0.15 &  0.11 &  0.01 &  0.16 &  0.05 & 95.81 &  0.69 &  0.11 &  2.82 &  0.08 \\
             0.26 &  0.25 &  0.00 &  0.00 &  2.07 &  1.49 & 95.84 &  0.01 &  0.07 &  0.00 \\
             0.16 &  0.42 &  2.12 &  0.91 &  1.15 &  0.60 &  0.03 & 82.07 &  1.35 & 11.19 \\
             0.44 &  0.50 &  0.36 &  0.18 &  0.43 &  0.90 &  0.91 &  0.15 & 95.74 &  0.40 \\
             0.34 &  0.42 &  0.06 &  0.30 &  1.67 &  2.46 &  0.11 &  0.31 &  0.85 & 93.50
        \end{tabular}
    	\label{tab:conf_iwdan}
	\end{center}
\end{table}

\begin{table}[ht]
	\caption{Per-class predictions for IWDAN-O on the \mbox{sU $\rightarrow$ M} task. Average accuracy: $96.8\%$. The table $M$ below verifies $M_{ij} = \dist_T(\hat{Y} = j | Y = i)$.}
    \begin{center}
    	\begin{tabular}{*{10}{R}}
            98.04 &  0.01 &  0.20 &  0.00 &  0.27 &  0.03 &  1.17 &  0.11 &  0.15 &  0.02 \\
             0.00 & 98.35 &  0.33 &  0.15 &  0.17 &  0.27 &  0.19 &  0.05 &  0.47 &  0.02 \\
             0.22 &  0.04 & 97.48 &  0.07 &  0.29 &  0.08 &  0.52 &  1.09 &  0.19 &  0.02 \\
             0.10 &  0.00 &  0.66 & 95.72 &  0.01 &  2.32 &  0.00 &  0.35 &  0.56 &  0.27 \\
             0.01 &  0.25 &  0.05 &  0.00 & 96.80 &  0.03 &  0.18 &  0.01 &  0.60 &  2.06 \\
             0.23 &  0.11 &  0.00 &  0.72 &  0.00 & 96.09 &  0.68 &  0.13 &  2.01 &  0.03 \\
             0.27 &  0.31 &  0.00 &  0.00 &  2.07 &  0.63 & 96.54 &  0.00 &  0.17 &  0.01 \\
             0.26 &  0.45 &  2.13 &  0.29 &  0.90 &  0.32 &  0.01 & 92.66 &  1.06 &  1.92 \\
             0.55 &  0.22 &  0.30 &  0.06 &  0.18 &  0.22 &  0.41 &  0.33 & 97.11 &  0.62 \\
             0.46 &  0.37 &  0.16 &  0.86 &  0.82 &  1.45 &  0.01 &  0.77 &  0.98 & 94.13
        \end{tabular}
    	\label{tab:conf_iwdano}
	\end{center}
\end{table}

\end{document}